\def\hf{\hat{f}}
\def\bDelta{{\boldsymbol \Delta}}
\def\P{{\mathbb P}}
\def\bT{{\boldsymbol T}}
\def\bmu{{\boldsymbol \mu}}
\def\eps{{\varepsilon}}
\def\bZ{{\boldsymbol Z}}
\def\br{{\boldsymbol r}}
\def\ball{{\mathsf B}}
\def\loc{{\rm loc}}
\def\sc{{\rm sc}}
\def\bv{{\boldsymbol v}}
\def\bz{{\boldsymbol z}}
\def\diag{{\rm diag}}
\def\He{{\rm He}}
\def\op{\text{op}}
\def\ddiag{{\rm ddiag}}
\def\bD{{\boldsymbol D}}
\def\bw{{\boldsymbol w}}
\def\bz{{\boldsymbol z}}
\def\bs{{\boldsymbol s}}
\def\bg{{\boldsymbol g}}
\def\be{{\boldsymbol e}}
\def\bv{{\boldsymbol v}}
\def\bu{{\boldsymbol u}}
\def\ba{{\boldsymbol a}}
\def\bx{{\boldsymbol x}}
\def\bA{{\boldsymbol A}}
\def\bR{{\boldsymbol R}}
\def\bB{{\boldsymbol B}}
\def\bW{{\boldsymbol W}}
\def\bU{{\boldsymbol U}}
\def\bP{{\boldsymbol P}}
\def\bS{{\boldsymbol S}}
\def\bV{{\boldsymbol V}}
\def\bG{{\boldsymbol G}}
\def\bQ{{\boldsymbol Q}}
\def\bGamma{{\boldsymbol \Gamma}}
\def\bDelta{{\boldsymbol \Delta}}
\def\bSigma{{\boldsymbol \Sigma}}
\def\bzero{{\boldsymbol 0}}
\def\de{\mathrm{d}}
\def\tr{\text{\rm Tr}}
\renewcommand{\S}{\mathbb{S}}
\def\ones{{\boldsymbol 1}}
\def\bQ{{\boldsymbol Q}}
\def\cF{{\mathcal F}}
\def\normal{{\sf N}}
\newcommand{\E}{\mathbb{E}}
\def\NN{{\sf NN}}
\def\RF{{\sf RF}}
\def\NT{{\sf NT}}
\def\op{{\rm op}}
\def\E{{\mathbb E}}
\def\P{{\mathbb P}}
\def\bw{{\boldsymbol w}}
\def\ba{{\boldsymbol a}}
\def\bx{{\boldsymbol x}}
\def\bA{{\boldsymbol A}}
\def\bP{{\boldsymbol P}}
\def\bB{{\boldsymbol B}}
\def\bW{{\boldsymbol W}}
\def\bGamma{{\boldsymbol \Gamma}}
\def\bDelta{{\boldsymbol \Delta}}
\def\bSigma{{\boldsymbol \Sigma}}
\def\btheta{{\boldsymbol \theta}}
\def\bbeta{{\boldsymbol \beta}}
\def\bzero{{\boldsymbol 0}}
\def\de{\mathrm{d}}
\def\ba{{\boldsymbol a}}
\def\bU{{\boldsymbol U}}
\def\ones{{\mathbf 1}}
\def\bQ{{\boldsymbol Q}}
\def\cF{{\mathcal F}}
\def\normal{{\sf N}}
\def\NN{{\sf NN}}
\def\RF{{\sf RF}}
\def\NT{{\sf NT}}
\def\MG{{\sf mg}}
\def\QF{{\sf qf}}
\def\M{{\sf M}}
\def\hf{\hat{f}}
\def\sT{{\mathsf T}}
\def\Tr{\text{\rm Tr}}
\def\id{{\mathbf I}}
\newcommand{\Z}{\mathbb{Z}}
\newcommand{\R}{\mathbb{R}}
\newcommand{\<}{\langle}
\renewcommand{\>}{\rangle}
\def\prob{{\mathbb P}}
\def\reals{{\mathbb R}}
\def\eps{{\varepsilon}}
\def\bfzero{{\boldsymbol 0}}
\newtheorem{theorem}{Theorem}
\newtheorem*{theorem*}{Theorem}
\newtheorem{lemma}{Lemma}
\newtheorem{corollary}{Corollary}
\newtheorem{remark}{Remark}
\newtheorem{proposition}{Proposition}
\theoremstyle{definition}
\def\sSGD{\mbox{\tiny \rm SGD}}
\def\stract{\mbox{\tiny \rm tract}}
\def\diag{{\rm diag}}
\def\He{{\rm He}}
\colorlet{linkequation}{blue}
\title{Limitations of Lazy Training of\\ Two-layers Neural Networks}
\author{Behrooz Ghorbani\thanks{Department of Electrical Engineering, Stanford University},\;\; Song Mei\thanks{Institute for Computational and Mathematical Engineering, Stanford University},\;\; Theodor Misiakiewicz\thanks{Department of Statistics, Stanford University}, \;\; Andrea Montanari\thanks{Department of Electrical Engineering and Department of Statistics, Stanford University}}
\begin{document}

\maketitle

\begin{abstract}
We  study the supervised learning problem under either of the following two models:
\begin{enumerate}
    \item[(1)] Feature vectors $\bx_i$ are $d$-dimensional Gaussians and responses are $y_i = f_*(\bx_i)$ for $f_*$ an unknown quadratic function;
    \item[(2)] Feature vectors $\bx_i$ are distributed as a mixture of two $d$-dimensional centered Gaussians, and $y_i$'s are the corresponding class labels. 
\end{enumerate}
We use two-layers neural networks with quadratic activations, and compare three  different learning regimes: the random features ($\RF$)
regime in which we only train the second-layer weights; the neural tangent ($\NT$) regime in which we train a linearization of the neural
network around its initialization; the fully trained neural network
($\NN$) regime in  which we train all the weights in the network. We
prove that, even for the simple quadratic model of point (1), there is
a potentially unbounded gap between the prediction risk achieved in these three training regimes, when the number of neurons is
smaller than the ambient dimension. When the number of neurons is
larger than the number of dimensions, the problem is significantly easier and both $\NT$ and $\NN$ learning achieve zero risk. 
\end{abstract}

\section{Introduction}

Consider the supervised learning problem in which we are given i.i.d. data $\{(\bx_i,y_i)\}_{i\le n}$,
where $\bx_i\sim \prob$ a probability distribution over $\reals^d$,
and $y_i = f_*(\bx_i)$. 
(For simplicity, we focus our introductory discussion on the case in
which the response $y_i$ is a noiseless function of the feature vector
$\bx_i$: some of our results go beyond this setting.)
We would like to learn the unknown function $f_*$ as to minimize the prediction risk $ \E\{(f(\bx)-f_*(\bx))^2\}$. We will assume throughout $f_*\in L^2(\R^d,\prob)$,
i.e. $\E\{f_*(\bx)^2\}<\infty$. 

The function class of two-layers neural networks (with $N$ neurons) is defined by:
\begin{align}
\cF_{\NN, N}= \Big\{ f( \bx) = c + \sum_{i=1}^N a_i \sigma(\< \bw_i,  \bx\>): \; c, a_i\in \R, \,\bw_i \in \R^d, i \in [N] \Big\}. 
\end{align}
Classical universal approximation results \cite{cybenko1989approximation} imply that any $f_*\in L^2(\reals^d,\prob)$  can be approximated arbitrarily well
by an element in $\cF_{\NN}=\cup_N \cF_{\NN, N}$ (under mild conditions). At the same time, we know that such an 
approximation can be constructed  in polynomial time only for a subset of functions $f_*$. Namely, there exist sets of
functions $f_*$ for which no algorithm can construct a good approximation in $\cF_{\NN, N}$ in polynomial time \cite{klivans2014embedding,shamir2018distribution},
even having access to the full distribution $\prob$ (under certain complexity-theoretic assumptions). 

These facts  lead to the following central question in neural network theory:
\begin{quote}
\emph{For which subset of function $\cF_{\stract}\subseteq L^2(\reals^d,\prob)$ can a neural network approximation be learnt efficiently?}
\end{quote}
Here `efficiently' can be formalized in multiple ways: in this paper we will focus on learning via stochastic gradient descent.

Significant amount of work has been devoted to two subclasses of $\cF_{\NN, N}$ which we will refer to as 
the random feature model ($\RF$) \cite{rahimi2008random}, and the neural tangent model ($\NT$) \cite{jacot2018neural}:
\begin{align}
\cF_{\RF, N}(\bW) &= \Big\{ f_N( \bx) =  \sum_{i=1}^N  a_i\sigma(\< \bw_i, \bx\>): \;  a_i \in \R, i \in [N] \Big\},\\
\cF_{\NT, N}(\bW) &= \Big\{ f_N( \bx) = c + \sum_{i=1}^N  \sigma'(\< \bw_i, \bx\>)\< \ba_i, \bx\>: c \in \R, \ba_i \in \R^d, i \in [N] \Big\}.\label{eq:NT}
\end{align}
Here $\bW=(\bw_1,\dots,\bw_N)\in \reals^{d \times N}$ are weights which are not
optimized  and instead drawn at random. Through this paper, we will assume $(\bw_i)_{i\le N}\sim_{iid}\normal(\bfzero,\bGamma)$.
(Notice that we do not add an offset in the $\RF$ model, and will limit ourselves to target functions $f_*$ that are centered: this choice simplifies some calculations
without modifying the results.)

We can think of $\RF$ and $\NT$ as \emph{tractable
inner bounds} of the class of neural networks $\NN$:\
\begin{itemize}
\item \emph{Tractable.} Both  $\cF_{\RF, N}(\bW)$, $\cF_{\NT, N}(\bW)$ are finite-dimensional linear
spaces, and minimizing the empirical risk over these classes can be performed efficiently.
\item \emph{Inner bounds.} 
Indeed $\cF_{\RF, N}(\bW) \subseteq \cF_{\NN, N}$: the random feature model is simply obtained by
fixing all the  first layer weights. Further  $\cF_{\NT}(\bW) \subseteq {\rm cl}(\cF_{\NN, 2N})$ (the closure of the class of neural networks with $2N$ neurons). This follows from
$\eps^{-1}[\sigma(\<\bw_i+\eps\ba_i,\bx\>)-\sigma(\<\bw_i,\bx\>)] = \<\ba_i,\bx\>\sigma'(\<\bw_i,\bx\>)+o(1)$ as $\eps\to 0$.
\end{itemize}
It is possible to show that the class of neural networks $\NN$ is significantly more expressive
than the two linearization $\RF$, $\NT$, see e.g. \cite{yehudai2019power,ghorbani2019linearized}.
In particular, \cite{ghorbani2019linearized} shows that, if the feature vectors $\bx_i$ are uniformly random over the $d$-dimensional sphere,
and $N,d$ are large with $N=O(d)$, then $\cF_{\RF, N}(\bW)$ can only capture linear functions, while $\cF_{\NT, N}(\bW)$
can only capture quadratic functions.

Despite these findings, it could still be that the
subset of functions $\cF_{\stract}\subseteq L^2(\reals^d,\prob)$ for which we can learn efficiently a neural network approximation is well described by $\RF$ and $\NT$. Indeed,  several recent papers 
show that  --in a certain highly overparametrized regime-- this description is accurate \cite{du2018gradient,du2018gradient2,lee2019wide}. 
A specific counterexample is given
in \cite{yehudai2019power}: if the function to be learnt is a single neuron $f_*(\bx) = \sigma(\<\bw_*,\bx\>)$ then gradient descent
(in the space of neural networks with $N=1$ neurons) efficiently learns it \cite{mei2018landscape};
on the other hand, $\RF$ or $\NT$ require a number of neurons exponential in the dimension to achieve vanishing risk.

\subsection{Summary of main results}

In this paper we explore systematically the gap between $\RF$, $\NT$ and $\NN$, by considering two specific data distributions:
\begin{enumerate}
    \item[(\QF)] Quadratic functions: feature vectors are distributed according to $\bx_i\sim \normal (\bzero , \id_d) $ and responses 
are quadratic functions $y_i = f_*(\bx_i) \equiv b_0+\< \bx_i, \bB \bx_i \>$ with $\bB \succeq 0$.
    \item[(\MG)] Mixture of Gaussians:  $y_i = \pm 1$ with equal probability $1/2$,
and $\bx_i | y_i =+1 \sim \normal ( 0 , \bSigma^{(1)})$, $\bx_i | y_i = -1 \sim \normal ( 0 , \bSigma^{(2)})$.
\end{enumerate}
Let us emphasize that the choice of quadratic functions in model \QF\, is not arbitrary: in a sense, it is the \emph{most favorable case for $\NT$  training}.
Indeed \cite{ghorbani2019linearized} proves that\footnote{Note that \cite{ghorbani2019linearized} considers feature vectors $\bx_i$ uniformly random over the sphere rather than Gaussian. However, the results of \cite{ghorbani2019linearized} can be generalized, with certain modifications, to the Gaussian case. Roughly speaking,
for Gaussian features, $\NT$ with $N=O(d)$ neurons can represent quadratic functions, and a low-dimensional subspace of higher order polynomials.} (when $N=O(d)$): 
$(i)$ Third- and higher-order polynomials cannot be approximated nontrivially by $\cF_{\NT,N}(\bW)$;
$(ii)$ Linear functions are already well approximated within $\cF_{\RF,N}(\bW)$.

For clarity, we will first summarize our result for the model \QF, and then discuss generalizations to \MG.
The prediction risk achieved within any of the regimes $\RF$, $\NT$, $\NN$  is defined by
\begin{align}
&R_{\M, N}(f_*) = \arg\min_{\hf\in\cF_{\M,N}(\bW)}\E\big\{(f_*(\bx)-\hf(\bx))^2\big\}\, ,\;\;\;\;\; \M\in\{\RF,\NT,\NN\}\,.\\
&R_{\NN, N}(f_*;\ell,\eps)  = \E\big\{(f_*(\bx)-\hf_{\sSGD}(\bx;\ell,\eps))^2\big\}\, ,
\end{align}
where $\hf_{\sSGD}(\,\cdot\,;\ell,\eps)$ is the neural network produced by $\ell$ steps of stochastic gradient descent (SGD)
where each sample is used once, and the stepsize is set to $\eps$ (see Section \ref{sec:MainNN} for a complete definition).
Notice that the quantities $R_{\M, N}(f_*)$,  $R_{\NN, N}(f_*;\ell,\eps)$ are random variables because of the random weights $\bW$,
and the additional randomness in SGD.

\begin{figure}
\phantom{A}\hspace{-0.5cm}\includegraphics[width=\textwidth]{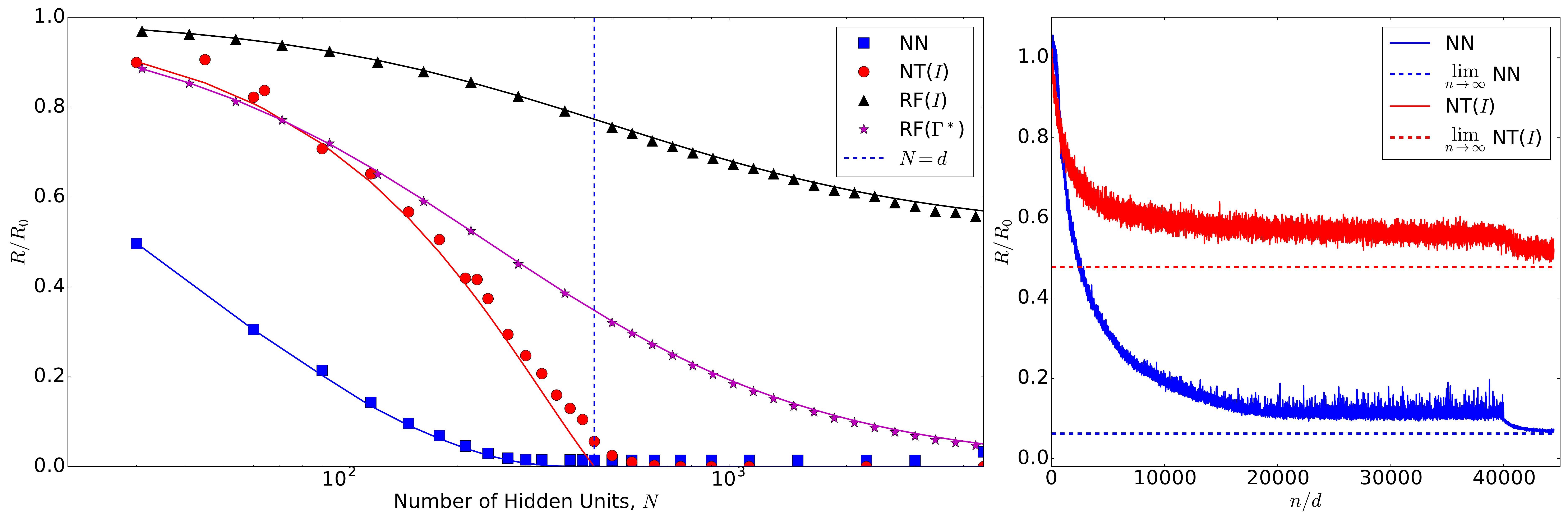}
\caption{Left frame: Prediction (test) error of a two-layer neural networks in fitting a quadratic function in $d=450$ dimensions, as a function of the number of
neurons $N$. We consider the large sample (population) limit $n\to\infty$
and compare three training regimes: random features ($\RF$), neural tangent ($\NT$), and fully trained neural
networks ($\NN$). Lines are analytical predictions obtained in this paper, and dots are empirical results. Right frame: Evolution of the risk for 
$\NT$ and $\NN$ with the number of samples. Dashed lines are our analytic prediction for the large $n$ limit.}\label{fig:Main}
\end{figure}
Our results are summarized by Figure \ref{fig:Main}, which compares the risk achieved by the three approaches
above in the population limit $n\to\infty$, using quadratic activations $\sigma(u) = u^2+c_0$.
 We consider the large-network, high-dimensional regime $N,d\to\infty$,  with $N/d\to \rho\in (0,\infty)$. 
Figure  \ref{fig:Main} reports the risk achieved by various approaches in numerical simulations, 
and compares them with our theoretical predictions for each of three regimes $\RF$, $\NT$, and $\NN$, which are detailed in the next sections.

The agreement between analytical predictions and simulations is excellent but, more
importantly, a clear picture emerges. We can highlight a few phenomena that are illustrated in this figure:

\vspace{0.05cm}

\noindent\emph{Random features} do not capture quadratic functions. The random features risk $R_{\RF, N}(f_*)$
remains generally bounded away from zero for all values of $\rho = N/d$. It is further highly dependent on the distribution 
of the weight vectors $\bw_i\sim\normal(\bfzero,\bGamma)$. Section \ref{sec:MainRF} characterizes explicitly this dependence,
for general activation functions $\sigma$.
For large $\rho= N/d$, the optimal distribution of the weight vectors uses covariance $\bGamma^* \propto \bB$, but even in this case
the risk is bounded away from zero unless $\rho\to\infty$.

\vspace{0.05cm}

\noindent\emph{The neural tangent model} achieves vanishing risk on quadratic functions for $N>d$. However,
the risk is bounded away from zero if $N/d\to \rho \in (0,1)$. Section \ref{sec:MainRF}  provides explicit expressions for the minimum risk
as a function of $\rho$. 
Roughly speaking $\NT$ fits the quadratic function $f_*$  along random subspace determined by the random weight vectors  $\bw_i$.
For $N\ge d$, these vectors span the whole space $\reals^d$ and hence the limiting risk vanishes. For $N<d$ only a fraction of the space is spanned, and not 
the most important one (i.e. not the principal eigendirections of $\bB$). 

\vspace{0.1cm}

\noindent\emph{Fully trained neural networks} achieve vanishing risk on quadratic functions for $N>d$: this is to be expected on the basis of the
previous point. For $N/d\to \rho\in (0,1)$ the risk is generally bounded away from $0$, but its value is smaller than for the neural tangent model.
Namely, in Section \ref{sec:MainNN} we give an explicit expression for the asymptotic risk (holding for $\bB\succeq \bfzero$) implying that, for some ${\rm GAP}(\rho)>0$
(independent of $N,d$),
\begin{align}
\lim_{t\to\infty}\lim_{\eps\to 0}R_{\NN, N}(f_*;\ell=t/\eps,\eps) = \inf_{f\in\cF_{\NN,N}} \E\{(f(\bx)-f_*(\bx))^2\}\le R_{\NT,N}(f_*)-{\rm GAP}(\rho)\, . \label{eq:SGD-convergence}
\end{align}
We prove this result by showing convergence of SGD to gradient flow in the population risk, and then proving a strict saddle
property for the population risk. As a consequence the limiting risk on the left-hand side coincides with the minimum risk over the whole space of neural
networks $\inf_{f\in\cF_{\NN,N}} \E\{(f(\bx)-f_*(\bx))^2\}$. We characterize the latter and shows that it amounts to fitting $f_*$ along the $N$
principal eigendirections of $\bB$.  This mechanism is very different from the one arising in the $\NT$ regime.

\vspace{0.1cm}

The picture emerging from these findings is remarkably simple. The fully trained network learns the most important eigendirections of the quadratic function
$f_*(\bx)$ and fits them, hence surpassing the $\NT$ model which is confined to a random set of directions.

Let us emphasize that the above separation between $\NT$ and $\NN$  is established only for $N\le d$.
It is natural to wonder whether this separation generalizes to $N>d$ for more complicated classes of functions, or if
instead it always vanishes for wide networks. 
We expect the separation to generalize to $N>d$ by considering higher order polynomial, instead of quadratic functions.
Partial evidence in this direction is provided by \cite{ghorbani2019linearized}: for third- or higher-order polynomials $\NT$ does not achieve vanishing risk at any $\rho\in (0,\infty)$.
The mechanism unveiled by our analysis of quadratic functions is potentially more general:
neural networks are superior to linearized models such as $\RF$ or $\NT$, because they can learn a good representation of the data.

Our results for quadratic functions are formally presented in Section \ref{sec:Quadratic}. 
In order to confirm that the picture we obtain is general, we establish similar results for mixture of Gaussians in Section \ref{sec:MainMixture}.
More precisely, our results of $\RF$ and $\NT$ for mixture of Gaussians are very similar to the quadratic case. 
In this model,  however, we do not prove a convergence result for $\NN$ analogous to \eqref{eq:SGD-convergence}, although we believe it should be possible by the same approach outlined above.
On the other hand, we characterize  the minimum prediction risk over neural networks $\inf_{f\in\cF_{\NN,N}} \E\{(y-f(\bx))^2\}$ and prove it is strictly smaller than the minimum achieved by $\RF$ and $\NT$. Finally, Section \ref{sec:Numerical} contains background on our numerical experiments.

\subsection{Further related work}

The connection (and differences) between two-layers neural networks and random features models has been the object of
several papers  since the original work of Rahimi and Recht \cite{rahimi2008random}. An incomplete list of references includes 
\cite{bach2013sharp,alaoui2015fast,bach2017breaking,bach2017equivalence,rudi2017generalization}. Our analysis contributes
to this line of work by establishing a sharp asymptotic characterization, although in more specific data distributions. 
Sharp results have recently been proven in \cite{ghorbani2019linearized}, for the special case of random weights $\bw_i$
uniformly distributed over a $d$-dimensional sphere. Here  we consider the more general case of anisotropic random 
features with covariance $\bGamma\not\propto\id$. This clarifies a key reason for  suboptimality of random features: the data representation
is not adapted to the target function $f_{*}$. 
We focus on the population limit $n\to\infty$. Complementary results characterizing the variance as a function of $n$ are given in \cite{hastie2019surprises}.

The $\NT$ model \eqref{eq:NT} is much more recent \cite{jacot2018neural}. Several papers show that SGD optimization within the original neural network is
well approximated by optimization within the model  $\NT$ as long as the number of neurons is large compared to a polynomial in the
sample size $N\gg n^{c_0}$  \cite{du2018gradient,du2018gradient2,allen2018convergence,zou2018stochastic}. 
Empirical evidence in the same direction was presented in \cite{lee2019wide,arora2019exact}. 

Chizat and Bach \cite{chizat2018note} clarified that any nonlinear statistical model can be approximated by a linear one in an
early (\emph{lazy}) training regime. The basic argument is quite simple. Given a model $\bx\mapsto f(\bx;\btheta)$ with parameters $\btheta$, we can 
Taylor-expand around a random initialization $\btheta_0$. Setting $\btheta = \btheta_0+\bbeta$, we get
\begin{align}
f(\bx;\btheta) \approx f(\bx;\btheta_0)+\bbeta^{\sT}\nabla_{\btheta}f(\bx;\btheta_0) \approx \bbeta^{\sT}\nabla_{\btheta}f(\bx;\btheta_0) \, .
\end{align}
Here the second approximation holds since, for many random initializations, $f(\bx;\btheta_0) \approx 0$ because of random cancellations.
The resulting model $\bbeta^{\sT}\nabla_{\btheta}f(\bx;\btheta_0)$ is linear, with random features.

Our objective is complementary to this literature: we prove that $\RF$ and $\NT$ have limited approximation power, and significant gain can be achieved by
full training.

Finally, our analysis of fully trained networks connects to the ample literature on non-convex statistical estimation.
For two layers neural networks with quadratic activations, Soltanolkotabi, Javanmard and Lee \cite{soltanolkotabi2019theoretical} showed that, 
as long as the number of neurons satisfies  $N\ge 2d$ there are no spurious local minimizers. Du and Lee \cite{du2018power} 
showed that the same holds as long as $N \ge d \wedge \sqrt{2n}$ where $n$ is the sample size. Zhong et. al. \cite{zhong2017recovery} 
established local convexity properties around global optima. 
Further related  landscape results include \cite{ge2017learning,haeffele2014structured,ge2017no}. 

\section{Main results: quadratic functions}
\label{sec:Quadratic}

As mentioned in the previous section, our results for quadratic functions (\QF) assume $\bx_i\sim\normal(\bfzero,\id_d)$ and
$y_i = f_*(\bx_i)$ where
\begin{align}
f_*(\bx) \equiv b_0+\< \bx , \bB \bx \>\, .\label{eq:QF}
\end{align}

\subsection{Random features}
\label{sec:MainRF}

We consider random feature model with first-layer weights $(\bw_i)_{i\le N}\sim \normal(\bfzero,\bGamma)$. We make the following assumptions:
\begin{itemize}
\item[{\bf A1.}] The activation function $\sigma$ verifies $\sigma(u)^2 \le c_0 \exp(c_1 u^2/2)$ for some constants $c_0,c_1$ with $c_1<1$.
Further it is nonlinear  (i.e. there is no $a_0, a_1\in\reals$ such that $\sigma(u) = a_0+a_1\,u$ almost everywhere).
\item[{\bf A2.}] We fix the weights' 
normalization by requiring $\E\{\|\bw_i\|^2_2\}=\Tr(\bGamma) = 1$. We assume the operator norm $\| d \cdot \bGamma \|_{\rm op} \le C$ for some constant $C$, and that the empirical spectral distribution of $d \cdot \bGamma$ converges weakly, as $d\to \infty$ to a probability distribution 
$\mathcal{D}$ over $\reals_{\ge 0}$.
 \end{itemize}
\begin{theorem}\label{thm:RF}
Let $f_*$ be a quadratic function as per Eq.~\eqref{eq:QF}, with $\E(f_*) = 0$. 
Assume conditions {\bf A1} and {\bf A2} to hold. Denote by $\lambda_k =\E_{G \sim \normal(0, 1)}[\sigma(G) \He_k(G)]$ the $k$-th Hermite coefficient of $\sigma$ and assume $\lambda_0 = 0$. Define $\tilde{\lambda} = \E_{G \sim \normal(0, 1)}[\sigma(G)^2] - \lambda^2_1$. Let $\psi>0$ be the unique solution of 
%
\begin{align}
- \tilde{\lambda}= - \frac{\rho}{\psi} + \int \frac{\lambda_1^2 t}{1 + \lambda_1^2 t \psi} \mathcal{D}(\de t)\, .
\end{align}
Then, the following holds as $N,d\to\infty$ with $N/d\to\rho$:
\begin{align}
R_{\RF,N}(f_*) = 
 \|f_{*}\|_{L_2}^2\left(1 - \frac{ \psi \lambda_2^2d \<\bGamma,\bB\>^2}{ \|\bB\|_F^2\big(2 + \psi \lambda_2^2 d\|\bGamma\|_F^2\big)}+o_{d, \P}(1) \right)\, .\label{eq:nonasy_mse}
\end{align}
 Moreover, assuming $\<\bGamma,\bB\>^2/\|\bGamma\|_F^2\|\bB\|_F^2$ to have a limit as $d\to\infty$,
\eqref{eq:nonasy_mse} simplifies as follows for   $\rho \rightarrow \infty$:
\begin{align}
\lim_{\rho \to \infty} \lim_{d \to \infty, N/d\to \rho}  \frac{R_{\RF,N}(f_*)}{\|f_{*}\|_{L_2}^2} 
&=  \lim_{d \rightarrow \infty} \left( 1- \frac{\<\bGamma,\bB\>^2}{\|\bGamma\|_F^2\|\bB\|_F^2}\right)\, .\label{eq:RF-rhoinfty}
\end{align}
\end{theorem}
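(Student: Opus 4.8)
\emph{Proof proposal.} The plan is to reduce the risk to a bilinear form in an explicit random matrix, to replace that matrix by a rank-one perturbation of a shifted anisotropic sample-covariance matrix, and then to collapse the rank-one term by Sherman--Morrison and read off the answer from the Marchenko--Pastur self-consistent equation.

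\textbf{Step 1 (reduction and structured surrogate).} Since $R_{\RF,N}(f_*)$ is an infinite-sample linear least-squares value, $R_{\RF,N}(f_*)=\|f_*\|_{L_2}^2-\bv^{\top}\bU^{+}\bv$, where $\bU_{ij}=\E[\sigma(\<\bw_i,\bx\>)\sigma(\<\bw_j,\bx\>)]$, $\bv_i=\E[f_*(\bx)\sigma(\<\bw_i,\bx\>)]$, and $\|f_*\|_{L_2}^2=2\|\bB\|_F^2$ because $\E f_*=0$ forces $b_0=-\Tr(\bB)$. Expanding $u\mapsto\sigma(\|\bw_i\|u)$ in Hermite polynomials and using that $f_*-\E f_*$ is a pure degree-two Hermite polynomial while $\|\bw_i\|^2\to\Tr(\bGamma)=1$ at rate $O_{\P}(d^{-1/2})$ (Assumption~{\bf A2}), only the degree-two component of each feature correlates with $f_*$, so $\bv_i=\lambda_2\,\bw_i^{\top}\bB\bw_i\,(1+o_{\P}(d^{-1/2}))$, while $\bU_{ij}=\sum_{k\ge1}\frac{\lambda_k^2}{k!}\<\widehat\bw_i,\widehat\bw_j\>^k+o(\cdot)$ with $\widehat\bw_i=\bw_i/\|\bw_i\|$. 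The degree-$\ge3$ off-diagonal blocks and all corrections induced by the $\|\bw_i\|$-dependence of the Hermite coefficients are negligible in operator norm, and the degree-two Gram matrix $\bQ=(\<\widehat\bw_i,\widehat\bw_j\>^2)_{ij}$ concentrates --- its off-diagonal entries having mean $\approx\|\bGamma\|_F^2$ and $O(d^{-1})$ fluctuations --- to $\id_N+\|\bGamma\|_F^2\ones\ones^{\top}$ up to $o_{\op}(1)$. Since $\bU\succeq(\tilde\lambda-o_{\P}(1))\id$ with $\tilde\lambda>0$ and $\|\bv\|^2=O(\|f_*\|_{L_2}^2)$, replacing $\bU$ by
\[
\widetilde\bU=\bU_0+\tfrac{\lambda_2^2}{2}\|\bGamma\|_F^2\,\ones\ones^{\top},\qquad \bU_0:=\lambda_1^2\,\bW^{\top}\bW+\tilde\lambda\,\id_N ,
\]
changes $\bv^{\top}\bU^{+}\bv$ by at most $o_{\P}(1)\cdot\|f_*\|_{L_2}^2$; the identity coefficient is $\tilde\lambda=\bar\lambda+\lambda_2^2/2=\E[\sigma(G)^2]-\lambda_1^2$ (the full ``non-degree-one'' Hermite mass), which is why $\tilde\lambda$ rather than $\bar\lambda=\sum_{k\ge3}\lambda_k^2/k!$ enters the fixed-point equation. (Equivalently this is the Gaussian-equivalence substitution $\sigma(\<\bw_i,\bx\>)\approx\lambda_1\<\bw_i,\bx\>+\tfrac{\lambda_2}{2}\He_2(\<\widehat\bw_i,\bx\>)+\sqrt{\bar\lambda}\,z_i$ with $z_i$ independent noise.)

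\textbf{Step 2 (collapse and anisotropic deterministic equivalent).} Writing $\bv\approx\lambda_2\,\br$ with $\br_i=\bw_i^{\top}\bB\bw_i$, Sherman--Morrison reduces the risk to the three scalars $\ones^{\top}\bU_0^{-1}\ones$, $\ones^{\top}\bU_0^{-1}\br$, $\br^{\top}\bU_0^{-1}\br$ via
\[
\bv^{\top}\widetilde\bU^{-1}\bv=\lambda_2^2\Big(\br^{\top}\bU_0^{-1}\br-\frac{\tfrac{\lambda_2^2}{2}\|\bGamma\|_F^2\,(\ones^{\top}\bU_0^{-1}\br)^2}{1+\tfrac{\lambda_2^2}{2}\|\bGamma\|_F^2\,\ones^{\top}\bU_0^{-1}\ones}\Big)+o_{\P}(\|f_*\|_{L_2}^2).
\]
With $\bW=\bGamma^{1/2}\bG$, $\bG$ i.i.d.\ standard Gaussian, $\bU_0$ is a shifted separable sample-covariance matrix; a leave-one-out analysis of the $d\times d$ resolvent $\bR=(\lambda_1^2\bW\bW^{\top}+\tilde\lambda\id)^{-1}$ gives $(\bU_0^{-1})_{ii}\to[\tilde\lambda(1+\lambda_1^2\Tr(\bGamma\bR))]^{-1}$ uniformly in $i$, hence $\tfrac1d\Tr(\bU_0^{-1})\to\psi$, where the companion (Silverstein) equation for this model is exactly $-\tilde\lambda=-\rho/\psi+\int\lambda_1^2 t\,(1+\lambda_1^2 t\psi)^{-1}\mathcal{D}(\de t)$; its right side is strictly decreasing in $\psi>0$, so $\psi$ exists and is unique. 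Since $\bW^{\top}\bW=\bG^{\top}\bGamma\bG$ is orthogonally invariant in $\R^N$, $\ones$ is isotropic for $\bU_0^{-1}$, so $\ones^{\top}\bU_0^{-1}\ones=\Tr(\bU_0^{-1})(1+o_{\P}(1))=\psi d\,(1+o_{\P}(1))$. Splitting $\br=\<\bGamma,\bB\>\ones+\widetilde\br$ with $\widetilde\br$ centred and coordinatewise independent, $\Var(\widetilde\br_i)=2\Tr(\bGamma\bB\bGamma\bB)\le2\|\bGamma\|_{\op}^2\|\bB\|_F^2=O(d^{-2}\|\bB\|_F^2)$, and concentration of quadratic forms (plus a cavity estimate for the cross terms) gives $\ones^{\top}\bU_0^{-1}\br=\<\bGamma,\bB\>\psi d\,(1+o_{\P}(1))$ and $\br^{\top}\bU_0^{-1}\br=\<\bGamma,\bB\>^2\psi d+o_{\P}(\|f_*\|_{L_2}^2)$; the $\widetilde\br$-contributions drop precisely because $\|\bGamma\|_{\op}=O(1/d)$.

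\textbf{Step 3 (assembly and $\rho\to\infty$).} Substituting these into the Sherman--Morrison identity and simplifying gives $\lambda_2^2\bv^{\top}\widetilde\bU^{-1}\bv=\frac{2\lambda_2^2\psi d\,\<\bGamma,\bB\>^2}{2+\lambda_2^2\psi d\,\|\bGamma\|_F^2}+o_{\P}(\|f_*\|_{L_2}^2)$, and subtracting from $\|f_*\|_{L_2}^2=2\|\bB\|_F^2$ yields \eqref{eq:nonasy_mse}. For \eqref{eq:RF-rhoinfty}: if $\lambda_1\neq0$, the right side of the fixed-point equation stays bounded only if $\psi\to\infty$, in which case $\int\lambda_1^2 t(1+\lambda_1^2 t\psi)^{-1}\mathcal{D}(\de t)\le1/\psi\to0$, so $\rho/\psi\to\tilde\lambda$ (and if $\lambda_1=0$ then $\psi=\rho/\tilde\lambda\to\infty$ directly); letting $\psi\to\infty$ in \eqref{eq:nonasy_mse}, the correction term tends to $\<\bGamma,\bB\>^2/(\|\bGamma\|_F^2\|\bB\|_F^2)$, which converges by hypothesis. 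The crux of the whole argument is the random-matrix input of Steps~1--2: (i) the operator-norm estimates licensing the replacement $\bU\to\widetilde\bU$ inside $\bv^{\top}\bU^{+}\bv$ (controlling $(\<\bw_i,\bw_j\>^k)_{ij}$ for $k\ge3$, the deviation of $\bQ$ from $\id_N+\|\bGamma\|_F^2\ones\ones^{\top}$, and all corrections from the fluctuations of $\|\bw_i\|$); and (ii) the deterministic-equivalent computation for the \emph{anisotropic} kernel $\bU_0$, where the absence of rotational symmetry in $\R^d$ ($\bGamma\not\propto\id$, unlike the sphere case of \cite{ghorbani2019linearized}) forces $\psi$ to be defined implicitly through $\mathcal{D}$ via the self-consistent equation rather than in closed form; the isotropy of $\ones$ and the quadratic-form concentrations for $\br$ then follow from the orthogonal invariance of $\bW^{\top}\bW$ together with standard cavity estimates.
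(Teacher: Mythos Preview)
Your overall architecture matches the paper's: reduce to $\bv^{\top}\bU^{-1}\bv$, replace $\bU$ by a structured surrogate, collapse rank-one pieces by Sherman--Morrison--Woodbury, and read $\psi$ off the Silverstein equation. The paper simplifies $\bV$ to $\tau\ones/\sqrt{d}$ first (its Lemma~\ref{lem:QF_V}) and then only needs $\ones^{\top}\bU_0^{-1}\ones/d$, whereas you keep $\br=(\bw_i^{\top}\bB\bw_i)_i$ and compute three quadratic forms; these are equivalent bookkeeping choices.

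There is, however, a real gap in Step~1. You assert that ``all corrections induced by the $\|\bw_i\|$-dependence of the Hermite coefficients are negligible in operator norm'' and thereby drop the zeroth-order term. This is false. Because $\lambda_0=0$, one has $\zeta_0(\sigma_i):=\E[\sigma(\|\bw_i\|G)]\approx \tfrac{\lambda_2}{2}(\|\bw_i\|_2^2-1)=:\mu_i$, and the resulting rank-one block $\bmu\bmu^{\top}$ in $\bU$ satisfies
\[
\E\|\bmu\|_2^2=\tfrac{\lambda_2^2}{4}\,N\cdot 2\|\bGamma\|_F^2=\Theta(1),
\]
so $\|\bmu\bmu^{\top}\|_{\op}=\Theta_{\P}(1)$, not $o_{\P}(1)$. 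Consequently $\|\bU-\widetilde\bU\|_{\op}\not\to0$ for your $\widetilde\bU$, and you cannot pass from $\bv^{\top}\bU^{-1}\bv$ to $\bv^{\top}\widetilde\bU^{-1}\bv$ by an operator-norm argument alone. The paper keeps $\bmu\bmu^{\top}$ in its surrogate $\bU_0$ and then proves (Lemma~\ref{lem:inner_prod_limit}) that $\langle\ones,\bA_0^{-1}\bmu\rangle/\sqrt{d}=o_{\P}(1)$, where $\bA_0=\tilde\lambda\id+\lambda_1^2\bW^{\top}\bW$, so that after Woodbury the $\bmu$-direction decouples from the $\ones$-direction and contributes nothing to $\ones^{\top}\bU_0^{-1}\ones/d$. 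That lemma is not a triviality: $\bmu$ and $\bA_0$ depend on the \emph{same} $\bW$, and the naive Cauchy--Schwarz bound $|\ones^{\top}\bA_0^{-1}\bmu|\le C\sqrt{N}\,\|\bmu\|_2=O_{\P}(\sqrt{d})$ is too weak. The paper's proof decouples by rewriting each block $\bW_k^{\top}\bW_k$ as $\bD_{k,\bw}\bD_{k,\tilde\bw}^{-1}\tilde\bW_k^{\top}\tilde\bW_k\bD_{k,\tilde\bw}^{-1}\bD_{k,\bw}$ with an independent copy $\tilde\bW$, first for block-constant $\bGamma$ and then by approximation for general $\bGamma$.

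By contrast, your cross-term $\ones^{\top}\bU_0^{-1}\tilde\br$ does \emph{not} require any cavity argument: since $\|\tilde\br\|_2^2\le 2N\|\bGamma\|_{\op}^2\|\bB\|_F^2=O_{\P}(d^{-1}\|\bB\|_F^2)$, plain Cauchy--Schwarz already gives $\ones^{\top}\bU_0^{-1}\tilde\br=O_{\P}(\|\bB\|_F)$, which is small enough after multiplication by $\lambda_2^2\|\bGamma\|_F^2/(1+\cdots)=O(d^{-1})$. So the only missing ingredient is the treatment of $\bmu\bmu^{\top}$; once you include it in $\widetilde\bU$ and supply the decoupling estimate $\langle\ones,\bA_0^{-1}\bmu\rangle/\sqrt d=o_{\P}(1)$, your Steps~2--3 go through unchanged.
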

Notice that  $R_{\RF,N}(f_*)/\|f_{*}\|_{L_2}^2$ is the $\RF$ risk normalized by the risk of the trivial predictor $f(\bx) = 0$. The asymptotic result in \eqref{eq:RF-rhoinfty} is remarkably simple. By Cauchy-Schwartz, the normalized risk is bounded away from zero 
even as the number of neurons per dimension diverges $\rho=N/d\to \infty$, unless $\bGamma \propto \bB$, i.e. the random features are perfectly aligned
with the function to be learned. For isotropic random features, the right-hand side of Eq.~\eqref{eq:RF-rhoinfty} reduces to 
$1-\Tr(\bB)^2/(d\|\bB\|_F^2)$. In particular, $\RF$ performs very poorly when $\Tr(\bB)\ll \sqrt{d}\|\bB\|_F$, and no better than the trivial predictor $f(\bx) =0$
if $\Tr(\bB) = 0$.

Notice that the above result applies to quite general activation functions.
The formulas simplify significantly for quadratic activations.

\begin{corollary}\label{coro:RF_quad}
Under the assumptions of Theorem \ref{thm:RF}, further assume $\sigma(x) = x^2 - 1$. Then we have, as $N,d\to\infty$ with $N/d\to\rho$:
\begin{align} \label{eq:quadratic-mse}
R_{\RF,N}(f_*)
&=  \|f_*\|_{L_2}^2\left( 1- \frac{\rho d \<\bB,\bGamma\>^2}{ \|\bB\|_F^2\big(1 + \rho d\|\bGamma\|_F^2 \big)} +o_{d, \P}(1)\right)\, .
\end{align}
\end{corollary}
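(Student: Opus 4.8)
The plan is to specialize Theorem~\ref{thm:RF} to the quadratic activation $\sigma(x) = x^2 - 1$ by computing its Hermite coefficients and the auxiliary quantity $\tilde{\lambda}$, then solving the fixed-point equation for $\psi$ explicitly. First I would record the Hermite decomposition: since $\He_0(x) = 1$, $\He_1(x) = x$, $\He_2(x) = x^2 - 1$, we have $\sigma(x) = \He_2(x)$, so $\lambda_0 = 0$ (consistent with the hypothesis $\lambda_0 = 0$), $\lambda_1 = 0$, and $\lambda_2 = \E[\He_2(G)^2] = 2$. The assumption $\E(f_*) = 0$ forces $b_0 = -\Tr(\bB)$, so this is compatible. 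Next, $\E[\sigma(G)^2] = \E[(G^2-1)^2] = 2$, hence $\tilde{\lambda} = 2 - \lambda_1^2 = 2 - 0 = 2$.

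The key simplification is that $\lambda_1 = 0$ collapses the fixed-point equation. With $\lambda_1 = 0$, the equation $-\tilde\lambda = -\rho/\psi + \int \lambda_1^2 t/(1 + \lambda_1^2 t \psi)\, \mathcal{D}(\de t)$ becomes simply $-2 = -\rho/\psi$, i.e. $\psi = \rho/2$. This is the step I'd want to flag as the one to double-check — one must verify that $\psi = \rho/2$ is indeed the \emph{unique positive} solution, which is immediate here since the right-hand side $-\rho/\psi$ is strictly increasing in $\psi$ on $(0,\infty)$ and the integral term vanishes identically.

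Finally I would substitute $\lambda_2 = 2$ and $\psi = \rho/2$ into the master formula \eqref{eq:nonasy_mse}:
\begin{align}
\frac{\psi \lambda_2^2 d \<\bGamma,\bB\>^2}{\|\bB\|_F^2(2 + \psi\lambda_2^2 d\|\bGamma\|_F^2)}
= \frac{(\rho/2)\cdot 4 \cdot d\<\bGamma,\bB\>^2}{\|\bB\|_F^2(2 + (\rho/2)\cdot 4\cdot d\|\bGamma\|_F^2)}
= \frac{2\rho d\<\bGamma,\bB\>^2}{\|\bB\|_F^2(2 + 2\rho d\|\bGamma\|_F^2)}
= \frac{\rho d\<\bB,\bGamma\>^2}{\|\bB\|_F^2(1 + \rho d\|\bGamma\|_F^2)}\, ,
\end{align}
which is exactly the bracketed term in \eqref{eq:quadratic-mse}. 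The error term $o_{d,\P}(1)$ is inherited verbatim from Theorem~\ref{thm:RF}.

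There is essentially no obstacle here: the corollary is a direct computation from the general theorem, and the only thing requiring a sentence of care is confirming that the quadratic activation $\sigma(x) = x^2 - 1$ meets assumption {\bf A1} (it is nonlinear, and $(x^2-1)^2 \le c_0 \exp(c_1 x^2/2)$ holds for any $c_1 \in (0,1)$ with a suitable $c_0$, since polynomials are dominated by any positive-rate exponential). Everything else is substitution of $\lambda_1 = 0$, $\lambda_2 = 2$, $\tilde\lambda = 2$, $\psi = \rho/2$ into the already-proven asymptotic formula and simplifying the resulting rational expression.
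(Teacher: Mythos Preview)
Your proposal is correct and is exactly the intended argument: the paper states Corollary~\ref{coro:RF_quad} as an immediate specialization of Theorem~\ref{thm:RF} without giving a separate proof, and your computation of $\lambda_0=0$, $\lambda_1=0$, $\lambda_2=2$, $\tilde\lambda=2$, hence $\psi=\rho/2$, followed by substitution into \eqref{eq:nonasy_mse}, is precisely what that specialization amounts to. Your verification of assumption {\bf A1} and the uniqueness of $\psi$ are also fine.
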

The right-hand side of Eq.~\eqref{eq:quadratic-mse} is plotted in Fig.~\ref{fig:Main} for isotropic features $\bGamma= \id/d$,
and for optimal features $\bGamma=\bGamma^*\propto \bB$.

\subsection{Neural tangent}
\label{sec:MainNT}

For the $\NT$ regime, we focus on quadratic activations and isotropic weights $\bw_i \sim \normal ( \bzero, \id_d / d)$.
\begin{theorem}\label{thm:QF_NTK}
Let $f_*$ be a quadratic function as per Eq.~\eqref{eq:QF}, with $\E(f_*) = 0$, and assume $\sigma(x) = x^2$.   Then, we have for $N,d \rightarrow \infty$ with $N/d \rightarrow \rho$
\[
\E [ R_{\NT,N} (f_*)]  = \| f_* \|_{L^2}^2 \Big\{ (1-\rho)_+^2 \Big(1- \frac{\Tr(\bB)^2}{d\|\bB\|_F^2}\Big) + (1 - \rho)_+ \frac{\Tr(\bB)^2}{d\,\|\bB\|_F^2}  + o_d (1)\Big\}.
\]
where  the expectation is taken over $\bw_i \sim_{i.i.d} \normal ( \bzero, \id_d / d)$. 
\end{theorem}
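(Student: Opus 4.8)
The plan is to reduce the $\NT$ risk to an explicit least-squares problem in a finite-dimensional feature space, compute the relevant population covariance and cross-covariance, and then exploit the concentration of the random-weight Gram matrix. With $\sigma(x)=x^2$ we have $\sigma'(\<\bw_i,\bx\>)=2\<\bw_i,\bx\>$, so an element of $\cF_{\NT,N}(\bW)$ has the form $f_N(\bx)=c+\sum_{i=1}^N 2\<\bw_i,\bx\>\<\ba_i,\bx\> = c + \<\bx, \bM\bx\>$ where $\bM=\sum_{i=1}^N (\bw_i\ba_i^{\sT}+\ba_i\bw_i^{\sT})$. Thus $\cF_{\NT,N}(\bW)$, after fitting the offset to center, is exactly the space of quadratic forms $\<\bx,\bM\bx\>$ with $\bM$ ranging over the linear span $\mathcal{S}_N(\bW)=\{\,\bw_i\bv^{\sT}+\bv\bw_i^{\sT} : \bv\in\reals^d, i\in[N]\,\}$. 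Since $f_*(\bx)=b_0+\<\bx,\bB\bx\>$ and for $\bx\sim\normal(\bzero,\id_d)$ one has $\E[\<\bx,\bM_1\bx\>\<\bx,\bM_2\bx\>]_{\rm centered} = 2\<\bM_1,\bM_2\>_F$ (for symmetric matrices), the $\NT$ risk becomes $\E[R_{\NT,N}(f_*)] = 2\,\E\big[\min_{\bM\in\mathcal{S}_N(\bW)}\|\bB-\bM\|_F^2\big] = 2\,\E\big[\|\bB - \Pi_{\mathcal{S}_N(\bW)}\bB\|_F^2\big]$, i.e. the squared distance from $\bB$ to a random subspace of the symmetric matrices, in Frobenius norm.

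Next I would identify $\mathcal{S}_N(\bW)^{\perp}$ explicitly: a symmetric $\bM$ is orthogonal to all $\bw_i\bv^{\sT}+\bv\bw_i^{\sT}$ iff $\bM\bw_i=0$ for all $i$, i.e. iff $\bM$ is supported on $V^{\perp}$ where $V=\mathrm{span}(\bw_1,\dots,\bw_N)$. Therefore $\Pi_{\mathcal{S}_N(\bW)^\perp}\bB = \bP^{\perp}\bB\bP^{\perp}$ where $\bP^{\perp}$ is the orthogonal projector onto $V^{\perp}$, and hence
\begin{align}
\E[R_{\NT,N}(f_*)] = 2\,\E\big[\|\bP^{\perp}\bB\bP^{\perp}\|_F^2\big].
\end{align}
Writing $\bP=\id-\bP^{\perp}$ for the projector onto the $\min(N,d)$-dimensional random subspace $V$ (which, for isotropic Gaussian weights, is Haar-distributed on the Grassmannian), I expand $\|\bP^{\perp}\bB\bP^{\perp}\|_F^2 = \Tr(\bB\bP^{\perp}\bB\bP^{\perp})$ and compute the expectation over a Haar projector of rank $d-\min(N,d)=(d-N)_+ =: m$. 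Using the standard Weingarten-type moments $\E[\bP^{\perp}_{ab}\bP^{\perp}_{cd}] = \frac{m}{d}\cdot\frac{1}{d-1}\big(\ldots\big)$ — concretely $\E[\bP^{\perp}\bA\bP^{\perp}] = \alpha\,\bA + \beta\,\Tr(\bA)\id$ and $\E[\Tr(\bP^{\perp}\bA\bP^{\perp}\bA)] = \gamma\|\bA\|_F^2 + \delta\,\Tr(\bA)^2$ with $\alpha,\beta,\gamma,\delta$ rational functions of $m,d$ — one gets, with $\rho_d:=N/d$, an exact finite-$d$ formula of the form $\E[R_{\NT,N}(f_*)] = 2[c_1(m,d)\|\bB\|_F^2 + c_2(m,d)\Tr(\bB)^2]$; taking $d\to\infty$ with $m/d\to(1-\rho)_+$ collapses $c_1\to(1-\rho)_+^2$ and $d\,c_2\to (1-\rho)_+ - (1-\rho)_+^2$, after using $\|f_*\|_{L^2}^2 = 2\|\bB\|_F^2$ to normalize. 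This matches the claimed expression since the $\Tr(\bB)^2/(d\|\bB\|_F^2)$-coefficient is $(1-\rho)_+ - (1-\rho)_+^2 = (1-\rho)_+\rho$ when $\rho<1$, which reorganizes exactly into $(1-\rho)_+^2(1-\Tr(\bB)^2/(d\|\bB\|_F^2)) + (1-\rho)_+\Tr(\bB)^2/(d\|\bB\|_F^2)$.

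The main obstacle is not conceptual but bookkeeping: getting the Weingarten constants $\gamma,\delta$ for $\E[\Tr(\bP^{\perp}\bB\bP^{\perp}\bB)]$ exactly right at finite $d$, and then verifying that the $d\to\infty$ limit is uniform enough that $\E[R_{\NT,N}]$ (an expectation, so no high-probability caveat is needed) converges to the stated constant — in particular checking that $\|d\bGamma\|_{\rm op}$ being bounded and the mild regularity of $\bB$'s spectrum (implicitly, $\|\bB\|_{\rm op}\lesssim\|\bB\|_F/\sqrt{\text{rank-ish}}$ is not even needed here since only $\|\bB\|_F$ and $\Tr(\bB)$ enter) suffice to control the error term $o_d(1)$. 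A secondary subtlety is the degenerate direction: the matrix $\bv\mapsto\bw_i\bv^{\sT}+\bv\bw_i^{\sT}$ is not injective when the symmetric part is formed, and one must be careful that the span $\mathcal{S}_N(\bW)$ generically has the "expected" dimension $Nd - \binom{N}{2}$ for $N\le d$ — but since we only use the orthogonal complement characterization $\bM\bw_i=0$, which holds regardless of such dimension counts, this issue is sidestepped cleanly.
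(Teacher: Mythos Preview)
Your proposal is correct and follows essentially the same route as the paper. Both arguments reduce the $\NT$ risk (after optimizing the offset) to $2\|\bB-\bM\|_F^2$ over symmetric $\bM$ of the form $\bW\bA^\sT+\bA\bW^\sT$, identify the residual as $2\|\bP^\perp\bB\bP^\perp\|_F^2$ where $\bP^\perp$ projects onto $\mathrm{span}(\bW)^\perp$, and then compute $\E[\|\bP^\perp\bB\bP^\perp\|_F^2]$ over the Haar-distributed projector; the only cosmetic difference is that the paper carries out this last expectation by hand via moments of orthonormal vectors $\E[\<\bv_k,\be_i\>^2\<\bv_l,\be_j\>^2]$ etc., whereas you package the same computation as a Weingarten-type identity $\E[\Tr(\bP^\perp\bB\bP^\perp\bB)]=\gamma\|\bB\|_F^2+\delta\,\Tr(\bB)^2$.
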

As for the case of random features, the $\NT$ risk depends on the target function $f_*(\bx)$ only through the ratio $\Tr(\bB)^2/(d\,\|\bB\|_F^2)$.
However, the normalized risk is always smaller than the baseline $R_{\NT,N} (f_*)=\|f_*\|_{L^2}^2$. Note that, by Cauchy-Schwartz, 
$\E [ R_{\NT,N} (f_*)]  \le (1 - \rho)_+\|f_*\|_{L^2}^2+o_d(1)$, with this worst case achieved when $\bB\propto \id$. In particular, 
$\E [ R_{\NT,N} (f_*)]$  vanishes asymptotically  for $\rho \geq 1$. This comes at the price of a larger number of parameters
to be fitted, namely $Nd$ instead of $N$.

\subsection{Neural network}
\label{sec:MainNN}

For the analysis of SGD-trained neural networks, 
we assume  $f_*$ to be a quadratic function as per Eq.~\eqref{eq:QF}, but we will now restrict to the positive semidefinite case  $\bB \succeq 0$.
We consider quadratic activations  $\sigma(x) = x^2$, and we fix the second layers weights to be $1$:
\[
\hat f(\bx; \bW, c) = \sum_{i=1}^N \< \bw_i, \bx\>^2 + c. 
\]
Notice that we use an explicit offset to account for the mismatch in means between $f_*$ and $\hat f$.
It is useful to introduce the population risk, as a function of the network parameters $\bW, c$:
\[
L(\bW, c) = \E[(f_*(\bx) - \hat f(\bx; \bW, c))^2] =  \E\Big[\Big( \< \bx \bx^\sT, \bB - \bW \bW^\sT \> + b_0 - c \Big)^2\Big]. 
\]
Here expectation is with respect to $\bx \sim \normal(\bzero, \id_d)$.
We will study a one-pass version of SGD, whereby at each iteration $k$ we perform a stochastic gradient step with respect to a fresh sample $(\bx_k, f_*(\bx_k))$
\[
(\bW_{k + 1}, c_{k+1}) = (\bW_k, c_k) - \varepsilon \nabla_{\bW, c} \Big( f_*(\bx_k) - \hat f (\bx_k; \bW, c) \Big)^2, 
\]
and define
\[
R_{\NN, N}(f_*;\ell, \varepsilon) \equiv L(\bW_{\ell},c_{\ell}) = \E_{\bx \sim \normal(\bzero, \id_d)}[(f_*(\bx) - \hat f(\bx; \bW_\ell, c_\ell))^2 ]. 
\]
Notice that this is the risk with respect to a new sample, independent from the ones used to train $\bW_{\ell},c_{\ell}$. It is the test error.
Also notice that $\ell$ is the number of SGD steps but also (because of the one-pass assumption) the sample size.
Our next theorem characterizes the asymptotic risk achieved by SGD. This prediction is reported in Figure \ref{fig:Main}.
\begin{theorem}\label{thm:NN_quadratic}
Let $f_*$ be a quadratic function as per Eq.~\eqref{eq:QF}, with $\bB \succeq 0$. Consider SGD with initialization $(\bW_0, c_0)$ whose
distribution is absolutely continuous with respect to the Lebesgue measure.
Let $R_{\NN, N}(f_*;\ell, \varepsilon)$ be the test prediction error after $\ell$ SGD steps with step size $\eps$.

Then we have (probability is over the initialization $(\bW_0, c_0)$ and the samples)
\begin{align*}
\lim_{t\to\infty}\lim_{\varepsilon \to 0}&\P\Big( \Big \vert R_{\NN, N}(f_*;\ell=t/\varepsilon,\varepsilon) - \inf_{\bW, c} L(\bW, c) \Big\vert \ge \delta ) = 0, \\
&\inf_{\bW, c} L(\bW, c) = 2 \sum_{i= N + 1}^{d} \lambda_{i}(\bB)^2,
\end{align*}
where $\lambda_1(\bB) \ge\lambda_2(\bB)\ge \dots\ge \lambda_d(\bB)$ are the ordered eigenvalues of $\bB$. 
\end{theorem}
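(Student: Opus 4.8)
\emph{Proof sketch.} The argument splits into two parts: identifying the value $\inf_{\bW,c}L(\bW,c)$, and showing that one-pass SGD converges to it. For the first part, I would start from the closed form of the population risk. Using that for $\bx\sim\normal(\bzero,\id_d)$ and symmetric $\bQ$ one has $\E[\bx^\sT\bQ\bx]=\Tr(\bQ)$ and $\Var(\bx^\sT\bQ\bx)=2\|\bQ\|_F^2$, the risk becomes
\[
L(\bW,c)=2\,\|\bB-\bW\bW^\sT\|_F^2+\big(\Tr(\bB-\bW\bW^\sT)+b_0-c\big)^2 .
\]
Minimizing over $c$ kills the second term, so $\inf_{\bW,c}L=2\min\{\|\bB-\bS\|_F^2:\bS\succeq 0,\ \mathrm{rank}(\bS)\le N\}$, where every feasible $\bS$ is realizable as $\bW\bW^\sT$ with $\bW\in\R^{d\times N}$. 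Since $\bB\succeq 0$, the PSD form of the Eckart--Young theorem gives the minimizer $\bS^\star$ equal to the truncation of $\bB$ to its $N$ leading eigendirections, whence $\inf_{\bW,c}L=2\sum_{i=N+1}^{d}\lambda_i(\bB)^2$, which is the stated value.

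For the second part the plan is the usual three-step scheme, and I begin with \emph{SGD $\to$ gradient flow}. The loss $L$ is a polynomial, hence real-analytic, and its sublevel sets are compact (the display forces $\|\bW\bW^\sT\|_F$ and $|c|$ bounded on $\{L\le K\}$). Since $L(\bW_0,c_0)<\infty$ almost surely, and the expected one-step decrease of $L$ along SGD dominates its $O(\eps^2)$ correction (whose coefficients are polynomial in the iterates) on bounded sets, a stopping-time/supermartingale argument confines $(\bW_k,c_k)_{k\le t/\eps}$ to a fixed compact set with probability tending to $1$; on that set $\nabla L$ is Lipschitz and the stochastic gradient (a fixed polynomial in a Gaussian vector) has all moments finite. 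A routine Gronwall/discretization estimate then yields, for each fixed $t$, $\sup_{k\le t/\eps}\|(\bW_k,c_k)-(\bW(k\eps),c(k\eps))\|\to 0$ in probability as $\eps\to 0$, where $(\bW(s),c(s))$ solves the gradient-flow ODE $\dot\bW=-\nabla_\bW L$, $\dot c=-\partial_c L$ from $(\bW_0,c_0)$; by continuity of $L$ this gives $R_{\NN,N}(f_*;t/\eps,\eps)\to L(\bW(t),c(t))$ in probability.

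Next, \emph{the flow converges to a critical point}: since $L$ is analytic and the trajectory stays in a compact set, a \L{}ojasiewicz gradient inequality forces it to converge to a single critical point. Finally, \emph{for a.e.\ initialization the limit is a global minimizer}. At a critical point $c=b_0+\Tr(\bB-\bW\bW^\sT)$, and $L$ restricts to $\tilde L(\bW):=2\|\bB-\bW\bW^\sT\|_F^2$, whose stationarity condition is the classical $\bB\bW=\bW\bW^\sT\bW$. A short Hessian computation shows that at any critical point of $L$, choosing the $c$-component of a perturbation suitably reduces the Hessian quadratic form of $L$ to that of $\tilde L$, so $L$ inherits the strict-saddle/no-spurious-minima structure of $\tilde L$. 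For $\bB\succeq 0$ this structure is classical: every critical point of $\tilde L$ is either a global minimizer or a strict saddle — a strictly negative Hessian direction is produced by rotating an eigendirection of $\bB$ that is used with too-small an eigenvalue toward a larger unused one, or by increasing the rank of $\bW\bW^\sT$ when a positive eigenvalue of $\bB$ is missed (cf.\ \cite{soltanolkotabi2019theoretical,du2018power,ge2017no}). A center-stable-manifold/saddle-avoidance argument for the (analytic) gradient flow then shows the set of initializations whose trajectory converges to a non-optimal critical point has Lebesgue measure zero; since the initialization law is absolutely continuous, $L(\bW(t),c(t))\to\inf_{\bW,c}L$ almost surely as $t\to\infty$. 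Combining the three steps and sending $\eps\to 0$, then $t\to\infty$, yields the theorem.

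\emph{Main obstacle.} The delicate point is the last step in the presence of degeneracies. When $\bB$ has repeated eigenvalues, or $N$ falls inside an eigenvalue block, the global minimizers — and possibly the non-optimal critical points — form positive-dimensional manifolds, so the textbook stable-manifold theorem for isolated saddles does not apply verbatim. I would handle this by stratifying the critical set according to which eigenprojections of $\bB$ enter $\bW\bW^\sT$ (and with what multiplicities), verifying that the strict-saddle property holds uniformly along each non-optimal stratum, and bounding the measure of the union of the corresponding stable sets stratum by stratum; the analyticity of $L$ is what makes this manageable. A secondary, routine issue is making the coupling of the first step survive integration over the random initialization, handled by conditioning on $(\bW_0,c_0)$ and using that almost every conditioning is ``good''.
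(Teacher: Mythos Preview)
Your proposal is correct and follows essentially the same four-step architecture as the paper: Eckart--Young for the minimum value, SGD$\to$gradient flow, {\L}ojasiewicz for convergence of the flow to a single critical point, and strict-saddle plus saddle-avoidance for generic convergence to a global minimizer. Two technical differences are worth flagging. First, for SGD$\to$flow the paper does not build a confinement/Gronwall argument by hand; it models one-pass SGD as a Markov jump process and invokes Kurtz's fluid-limit theorem directly. Second, and more relevant to what you call the ``main obstacle'': the paper does not stratify the (possibly positive-dimensional) critical set at all. It applies the local center-stable manifold theorem pointwise at \emph{each} strict saddle $\br$ (the theorem requires only that $\br$ be a fixed point of the time-$\varepsilon_0$ flow map with an unstable direction, not that it be isolated), obtains for each such $\br$ a ball $\mathsf{B}(\br,\varepsilon(\br))$ and a local stable manifold of codimension $\ge 1$, uses Lindel\"of's lemma to extract a countable subcover of the union of these balls, and concludes that the bad initialization set is contained in a countable union of preimages, under a diffeomorphism, of Lebesgue-null sets. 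No uniformity along strata and no case analysis on the eigenvalue multiplicities of $\bB$ is needed; the only place multiplicities enter is in the explicit strict-saddle computation, where the paper's negative-curvature bound is $-4\min\{\delta^{\mathrm{eig}},\delta^{\mathrm{sep}}\}$, already uniform over all non-optimal critical points.
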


The proof of this theorem depends on the following proposition concerning the landscape of the population risk, which is of independent interest. 
\begin{proposition}
\label{prop:landscape_NN_QF}
Let $f_*$ be a quadratic function as per Eq.~\eqref{eq:QF}, with $\bB \succeq 0$. 
For any sub-level set of the risk function $\Omega(B_0) = \{ \bx = (\bW, c) : L(\bW, c) \le B_0 \}$, there exists constants $\varepsilon, \delta > 0$ such that $L$ 
 is $(\varepsilon, \delta)$-strict saddle in the region $\Omega(B_0)$. Namely, for any $\bx \in \Omega(B_0)$ with $\| \nabla L(\bx) \|_2 \le \varepsilon$, we have $\lambda_{\min}(\nabla^2 L(\bx)) < -\delta$. 
\end{proposition}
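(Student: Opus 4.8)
The plan is to compute the gradient and Hessian of $L(\bW,c)$ explicitly, identify the critical points, and show that every critical point that is not a global minimizer has a direction of strict negative curvature, with the negative eigenvalue bounded away from zero uniformly on any sublevel set. First I would rewrite the risk in a convenient form. Let $\bM = \bM(\bW) = \bW\bW^\sT \succeq 0$ and $\bA = \bB - \bM$. Expanding the quadratic and using Gaussian moment identities ($\E[\<\bx\bx^\sT,\bA\>^2] = 2\|\bA\|_F^2 + \Tr(\bA)^2$, since for $\bx\sim\normal(\bzero,\id_d)$ the fourth moments give $\E[x_ix_jx_kx_l] = \delta_{ij}\delta_{kl}+\delta_{ik}\delta_{jl}+\delta_{il}\delta_{jk}$), one finds
\begin{align*}
L(\bW,c) = 2\|\bB - \bW\bW^\sT\|_F^2 + \big(\Tr(\bB) - \Tr(\bW\bW^\sT) + b_0 - c\big)^2.
\end{align*}
Minimizing over $c$ first is trivial: the optimal $c$ makes the second term vanish, so the effective objective in $\bW$ is $\tilde L(\bW) = 2\|\bB-\bW\bW^\sT\|_F^2$, and the offset $c$ never contributes negative curvature (it only adds a rank-one positive-semidefinite piece to the Hessian). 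Hence the strict-saddle analysis reduces to the matrix-factorization landscape of $\tilde L$.

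Next I would differentiate. Writing $\bW\in\reals^{d\times N}$, the gradient is $\nabla_{\bW}\tilde L = -8(\bB - \bW\bW^\sT)\bW$, so critical points satisfy $(\bB-\bW\bW^\sT)\bW = \bzero$. For $\bB\succeq 0$ this is exactly the classical set of critical points of low-rank PSD approximation: writing the eigendecomposition $\bB = \sum_i \lambda_i\bu_i\bu_i^\sT$ with $\lambda_1\ge\cdots\ge\lambda_d\ge 0$, every critical $\bW$ has $\bW\bW^\sT = \sum_{i\in S}\lambda_i\bu_i\bu_i^\sT$ for some index set $S$ with $|S|\le \min(N,d)$, and $\bW\bW^\sT$ commutes with $\bB$. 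The global minimizers are those with $S = \{1,\dots,\min(N,d)\}$ (ties among equal eigenvalues being harmless), which gives $\inf \tilde L = 2\sum_{i=N+1}^d\lambda_i(\bB)^2$ as claimed. For any other critical point there is an index $j\notin S$ and an index $k\in S$ (or a ``missing slot'' if $|S|<N$) with $\lambda_j > \lambda_k$ (respectively $\lambda_j>0$); the plan is to produce an explicit perturbation direction $\bDelta$ — essentially rotating mass from direction $\bu_k$ into direction $\bu_j$, or populating an unused column of $\bW$ with a small multiple of $\bu_j$ — and compute $\<\bDelta, \nabla^2\tilde L(\bW)[\bDelta]\> < 0$. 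Concretely, for the ``missing column'' case, if $\bv$ is a unit column of $\bW$-space orthogonal to the range of $\bW$ pre-image, set $\bDelta = \bu_j \bv^\sT$; then $\bW\bW^\sT$ is unchanged to first order but the second-order term contributes $-16\lambda_j\|\bu_j\|^2 = -16\lambda_j < 0$. The general Hessian quadratic form is
\begin{align*}
\nabla^2\tilde L(\bW)[\bDelta,\bDelta] = 16\,\|\bDelta\bW^\sT + \bW\bDelta^\sT\|_F^2/2 - 16\,\<\bB - \bW\bW^\sT,\; \bDelta\bDelta^\sT\>,
\end{align*}
(up to fixing constants), and the negative term wins for the chosen $\bDelta$ at any non-optimal critical point.

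The remaining — and I expect main — obstacle is the \emph{uniformity} required by the $(\eps,\delta)$-strict-saddle definition: one needs negative curvature not only exactly at critical points but at every $\bW$ in the sublevel set $\Omega(B_0)$ with $\|\nabla L(\bW)\|_2\le\eps$, with a single $\delta>0$. The standard route is a compactness/continuity argument: on $\Omega(B_0)$ the risk bound forces $\|\bW\bW^\sT\|_F \le \|\bB\|_F + \sqrt{B_0/2}$, hence $\bW$ lies in a compact set (modulo the $O(N)$ right-rotation invariance $\bW\mapsto\bW\bO$, which leaves $\tilde L$ invariant and can be quotiented out). If no uniform $\delta$ existed, one could extract a sequence $\bW_m$ with $\|\nabla\tilde L(\bW_m)\|\to 0$ and $\lambda_{\min}(\nabla^2\tilde L(\bW_m))\to 0^-$ or $\ge 0$; by compactness $\bW_m\to\bW_\infty$, which is then a critical point with $\lambda_{\min}(\nabla^2\tilde L(\bW_\infty))\ge 0$, i.e.\ a critical point that is not a strict saddle — contradicting the exact-critical-point analysis above, \emph{provided} one has also ruled out the global minimizers from the region $\{\|\nabla L\|\le\eps\}\cap\Omega(B_0)$, or more precisely shown that near a global min the gradient norm is bounded below away from $0$ unless one is genuinely close to the minimizing manifold (where there is nothing to prove since those points are excluded from "strict saddle" only in the sense that they are minima). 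The care needed here is that the set of critical points is not finite — it is a union of manifolds indexed by $S$ — so the contradiction argument must handle convergence to each stratum; this is where I would spend the most effort, likely by a direct lower bound showing that for $\bW$ with $\|\nabla\tilde L(\bW)\|\le\eps$ and $\tilde L(\bW)$ bounded above, $\bW\bW^\sT$ is within $O(\eps)$ of some critical $\bM_S$, and then Lipschitz-continuity of the Hessian transfers the negative eigenvalue (present at all non-optimal $\bM_S$ with a gap depending only on the eigenvalue separations of $\bB$, hence only on $B_0$ through the finitely many relevant strata) from $\bM_S$ to $\bW$.
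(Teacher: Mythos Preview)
Your proposal is correct and follows essentially the same route as the paper: explicit computation of the risk as $2\|\bB-\bW\bW^\sT\|_F^2$ plus a term handled by the offset $c$, characterization of critical points via $(\bB-\bW\bW^\sT)\bW=\bzero$, construction of explicit escape directions at non-optimal critical points (the paper splits into the same two cases you sketch — a rank-deficient $\bW_0$ and a full-rank $\bW_0$ missing a top eigendirection — and obtains the uniform bound $\lambda_{\min}(\nabla^2 L)\le -4\min\{\delta^{\rm eig},\delta^{\rm sep}\}$), and finally a compactness/continuity argument on the sublevel set. Your concern about the compactness step handling non-isolated critical strata is well-placed but ultimately not an obstacle, precisely because the negative-eigenvalue bound at every non-optimal critical point depends only on the spectral gaps of $\bB$ and not on which stratum one is near; the paper accordingly dispatches this step in one line.
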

We can now compare the risk achieved within the regimes $\RF$,
$\NT$ and $\NN$. Gathering the results of Corollary \ref{coro:RF_quad}, and Theorems \ref{thm:QF_NTK}, \ref{thm:NN_quadratic} 
(using $\bw_i\sim\normal(0,\id/d)$ for $\RF$  and $\NT$), we obtain
\begin{align}
\frac{R_{\M,N}(f_*)}{\|f_*\|_{L_2}^2}&\approx  \begin{dcases}
1- \frac{\rho}{1+\rho} \frac{\Tr(\bB)^2}{ d\|\bB\|_F^2} & \mbox{ for $\M=\RF$,}\\
(1-\rho)_+^2+\rho(1-\rho)_+ \frac{\Tr(\bB)^2}{ d\|\bB\|_F^2} & \mbox{ for $\M=\NT$,}\\
1- \frac{\sum_{i= 1}^{d \wedge N} \lambda_{i}(\bB)^2}{\|\bB\|_F^2}  & \mbox{ for $\M=\NN$.}
\end{dcases}
\end{align}
As anticipated, $\NN$ learns the most important directions in $f_*$, while $\RF$, $\NT$ do not.

\section{Main results: mixture of Gaussians}
\label{sec:MainMixture}

\begin{figure}
\phantom{A}\hspace{-0.25cm}\includegraphics[width=1.0\textwidth]{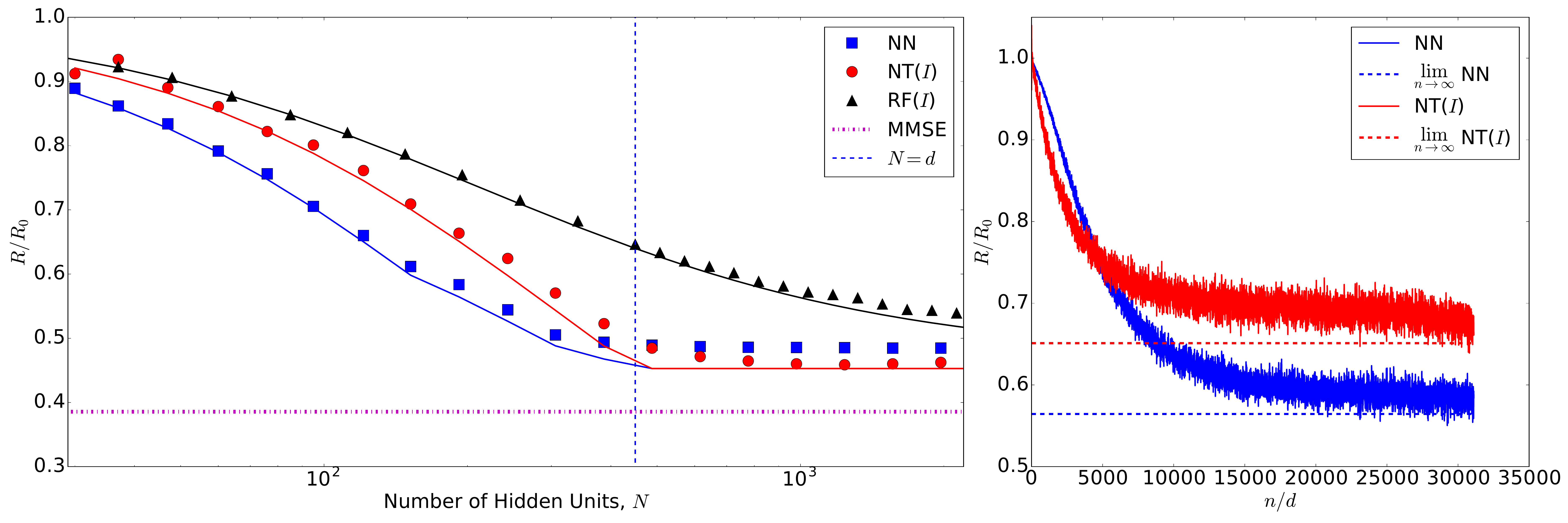}
\caption{Left frame: Prediction (test) error of a two-layer neural networks in fitting a mixture of Gaussians in $d=450$ dimensions, as a function of the number of
neurons $N$, within the three regimes $\RF$, $\NT$, $\NN$. Lines are analytical predictions obtained in this paper, and dots are empirical results (both in the population limit).
 Dotted line is the Bayes error. Right frame: Evolution of the risk for  $\NT$ and $\NN$ with the number of samples.}\label{fig:MG}
\end{figure}
In this section, we consider the mixture of Gaussian setting (\MG):  $y_i = \pm 1$ with equal probability $1/2$, and $\bx_i | y_i =+1 \sim \normal ( 0 , \bSigma^{(1)})$, $\bx_i | y_i = -1 \sim \normal ( 0 , \bSigma^{(2)} )$. We parametrize the covariances as $\bSigma^{(1)} = \bSigma - \bDelta$ and $\bSigma^{(2)} = \bSigma + \bDelta$, and 
 will make the following assumptions:
\begin{itemize}
\item[{\bf M1.}] There exists constants $0 < c_1 < c_2$ such that $c_1 \id_d \preceq \bSigma \preceq c_2 \id_d$;
\item[{\bf M2.}]  $\| \bDelta \|_{\rm op} = \Theta_d(1/\sqrt{d})$.
\end{itemize}
The scaling in assumption {\bf M2} ensures the signal-to-noise ratio to be of order one. If the eigenvalues of$\bDelta$ are much larger than $1/\sqrt{d}$, 
then it is easy to distinguish the two classes with high probability (they are asymptotically mutually singular). If   $\| \bDelta \|_{\rm op} = o_d(1/sqrt{d})$
then no non-trivial classifier exists.

We will denote by $\P_{\bSigma,\bDelta}$ the joint distribution of $(y, \bx )$ under the (\MG) model, and 
by $\E_{\bSigma,\bDelta}$ or $\E_{(y,\bx)}$ the corresponding expectation.
The minimum prediction risk within any of the regimes $\RF$, $\NT$, $\NN$ is defined by 
\[
R_{\M,N}(\P) = \inf_{f\in\cF_{\M,N}}\E_{(y, \bx)}\{(y-f(\bx))^2\}\, ,\;\;\;\;\; \M\in\{\RF,\NT,\NN\}\,.
\]

As mentioned in the introduction, the picture emerging from our analysis of the \MG\, model is aligned with the results
obtained in the previous section. We will  limit ourselves to stating the results without repeating comments that were made above.
Our results are compared with simulations in Figure \ref{fig:MG}. Notice that, in this case, the Bayes error (MMSE) is not achieved even for very wide networks
$N/d\gg 1$ either by $\NT$ or $\NN$. 

\subsection{Random seatures}
\label{sec:RFMixture}

As in the previous section, we generate random first-layer weights $(\bw_i)_{i\le N}\sim \normal(\bfzero,\bGamma)$. We consider a general activation function satisfying condition ${\bf A1}$.
We make the following assumption on $\bGamma,\bSigma$:
\begin{itemize}
\item[{\bf B2.}] We fix the weights' 
normalization by requiring $\E\{\<\bw_i,\bSigma\bw_i\>\}=\Tr(\bGamma \bSigma) = 1$. We assume that there exists a constant $C$ such that $\| d \cdot \bGamma \|_{\rm op} \le C$,
and that the empirical spectral distribution of 
$d \cdot (\bGamma^{1/2} \bSigma \bGamma^{1/2})$ converges weakly, as $d\to \infty$ to a probability distribution $\mathcal{D}$ over $\reals_{\ge 0}$.
\end{itemize}

\begin{theorem}\label{thm:RFMixture}
Consider the \MG\, distribution, with $\bSigma$ and $\bDelta$ satisfying condition {\bf M1} and {\bf M2}. 
Assume conditions {\bf A1}  and {\bf B2} to hold.   Define $\lambda_k =\E_{G \sim \normal(0, 1)}[\sigma(G) \He_k(G)]$ to be the $k$-th Hermite coefficient of $\sigma$ and assume without loss of generality $\lambda_0 = 0$. Define $\tilde{\lambda} = \E[\sigma(G)^2] - \lambda^2_1$.  Let $\psi>0$ be the unique solution of
\begin{align}
- \tilde{\lambda} = - \frac{\rho}{\psi} + \int \frac{\lambda_1^2 t}{1 + \lambda_1^2 t \psi} \,\mathcal{D}(\de t)\, .
\end{align}
Define $\zeta_1(d)\equiv d\, \Tr (\bSigma\bGamma \bSigma \bGamma) / 2$,
$\zeta_2(d) \equiv d\, \Tr (\bDelta \bGamma)^2/4$. 
Then, the following holds as $N,d\to\infty$ with $N/d\to\rho$:
\begin{align}\label{eq:nonasy_mse_mixture}
R_{\RF,N}(\P_{\bSigma, \bDelta}) &= 
\frac{1 + \zeta_1(d) \lambda_2^2 \psi}{1 + (\zeta_1(d) + \zeta_2(d)) \lambda_2^2 \psi } + o_{d, \P}(1),\, .
\end{align}
Moreover, assume $\zeta_1(d)$ $\zeta_2(d)$ to have limits as $d\to\infty$, i.e. we have $ \lim_{d \to \infty} \zeta_j (d) = \zeta_{j,*} $ for $j= 1,2$. Then the following holds as $\rho \rightarrow \infty$:
\begin{align}
\lim_{\rho \to \infty} \lim_{d \to \infty, N/d\to \rho} R_{\RF,N}(\P_{\bSigma, \bDelta})
= \frac{\zeta_{1,*} }{ \zeta_{1,*} + \zeta_{2,*} }.
\end{align}
\end{theorem}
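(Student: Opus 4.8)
The plan is to mirror the proof of Theorem~\ref{thm:RF}, treating $\bDelta$ as a perturbation of size $\Theta(d^{-1/2})$. Since $y=\pm1$ we have $\E[y^2]=1$, and minimizing the quadratic risk over the finite–dimensional class $\cF_{\RF,N}$ gives $R_{\RF,N}(\P_{\bSigma,\bDelta}) = 1 - \bv^\top\bU^{-1}\bv$, where $\bv=\E_{(y,\bx)}[\,y\,\sigma(\bW^\top\bx)\,]\in\R^N$ and $\bU=\E_{\bx}[\sigma(\bW^\top\bx)\sigma(\bW^\top\bx)^\top]\in\R^{N\times N}$ (expectation over the mixture marginal; $\bU$ is invertible with high probability since $\bU\succeq\frac12\tilde\lambda\,\id_N$ asymptotically). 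Everything reduces to identifying the deterministic limit of $\bv^\top\bU^{-1}\bv$.

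For $\bv$, conditioning on the label gives $\bv_i=\frac12\big[h(\langle\bw_i,\bSigma^{(1)}\bw_i\rangle)-h(\langle\bw_i,\bSigma^{(2)}\bw_i\rangle)\big]$, where
\[
h(\tau)\;=\;\E_{Z\sim\normal(0,1)}[\sigma(\sqrt\tau\,Z)]\;=\;\sum_{m\ge 0}\frac{\lambda_{2m}}{2^m\,m!}\,(\tau-1)^m ,
\]
so $h(1)=\lambda_0=0$ and $h'(1)=\lambda_2/2$. Using $\langle\bw_i,\bSigma^{(j)}\bw_i\rangle=\langle\bw_i,\bSigma\bw_i\rangle\mp\langle\bw_i,\bDelta\bw_i\rangle$, the concentration $\langle\bw_i,\bSigma\bw_i\rangle\to\Tr(\bGamma\bSigma)=1$ from \textbf{B2}, and $\|\bDelta\|_\op=\Theta(d^{-1/2})$, a Taylor expansion yields $\bv_i=-\frac{\lambda_2}{2}\langle\bw_i,\bDelta\bw_i\rangle+r_i$ with a remainder $r$ whose contribution to $\bv^\top\bU^{-1}\bv$ is $o_{d,\P}(1)$; moreover $\langle\bw_i,\bDelta\bw_i\rangle=\bar q+\tilde q_i$ with $\bar q=\Tr(\bGamma\bDelta)$ and $\E[\tilde q_i^2]=2\Tr((\bGamma\bDelta)^2)=o(1/d)$, so $\bv=-\frac{\lambda_2}{2}\bar q\,\ones+o_{\ell_2}(1)$ and $d\,\bar q^2=4\,\zeta_2(d)$. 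For $\bU$, the $\bDelta$-dependence cancels at first order in the two-component average and is $O(1/d)$ at second order, so up to negligible corrections $\bU$ is the random-features Gram matrix of $\normal(\bzero,\bSigma)$, which by the standard Hermite linearization equals $\tilde\lambda\,\id_N+\lambda_1^2\,\bW^\top\bSigma\bW+\frac{\lambda_2^2}{2}(\bW^\top\bSigma\bW)^{\circ 2}+\mathcal E$, with $(\cdot)^{\circ 2}$ the entrywise square and $\mathcal E$ a remainder (higher Hermite blocks, diagonal corrections) irrelevant for the bilinear form against $\ones$.

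It remains to evaluate $\bv^\top\bU^{-1}\bv=\frac{\lambda_2^2}{4}\bar q^2\,\ones^\top\bU^{-1}\ones+o_{d,\P}(1)$, and here one runs the deterministic-equivalent argument of Theorem~\ref{thm:RF}, with $d\,\bGamma^{1/2}\bSigma\bGamma^{1/2}$ (empirical spectral distribution $\to\mathcal D$ by \textbf{B2}) in the role of $d\,\bGamma$ and $\bDelta$ in the role of the target $\bB$. The linear block $\bm{M}:=\tilde\lambda\,\id_N+\lambda_1^2\,\bW^\top\bSigma\bW=\tilde\lambda\,\id_N+\frac{\lambda_1^2}{d}\bZ^\top\bT\bZ$ ($\bT=d\,\bGamma^{1/2}\bSigma\bGamma^{1/2}$, $\bZ$ with i.i.d.\ $\normal(\bzero,\id_d)$ columns) is a deformed Wishart matrix whose resolvent has a deterministic equivalent governed by the Marchenko--Pastur/Silverstein fixed-point equation; that fixed point is exactly $\psi$ (the same fixed-point equation as in Theorem~\ref{thm:RF}). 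The rank-$N$ quadratic-feature block $\frac{\lambda_2^2}{2}(\bW^\top\bSigma\bW)^{\circ 2}=\frac{\lambda_2^2}{2}\bS^\top\bS$, where $\bS$ has columns the symmetric tensors $\bSigma^{1/2}\bw_i\otimes\bSigma^{1/2}\bw_i$, is absorbed by the push-through identity $\bS\bU^{-1}\bS^\top=\bm{K}(\id+\frac{\lambda_2^2}{2}\bm{K})^{-1}$ with $\bm{K}=\bS\bm{M}^{-1}\bS^\top$, together with $\ones\approx\bS^\top\mathrm{vec}(\bSigma^{-1/2}\bGamma^{-1}\bSigma^{-1/2}/d)$. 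Bookkeeping the resulting scalar self-consistent equation, the quadratic block enters (through its interaction with $\bm{M}^{-1}$) via the moment $d\,\Tr(\bSigma\bGamma\bSigma\bGamma)=2\zeta_1(d)$, while $\bDelta$ enters only via $d\,(\Tr(\bGamma\bDelta))^2=4\zeta_2(d)$, giving $\ones^\top\bU^{-1}\ones=\frac{d\,\psi}{1+(\zeta_1(d)+\zeta_2(d))\lambda_2^2\psi}+o_{d,\P}(d)$, hence $\bv^\top\bU^{-1}\bv\to\frac{\zeta_2\lambda_2^2\psi}{1+(\zeta_1+\zeta_2)\lambda_2^2\psi}$ and $R_{\RF,N}(\P_{\bSigma,\bDelta})=\frac{1+\zeta_1\lambda_2^2\psi}{1+(\zeta_1+\zeta_2)\lambda_2^2\psi}+o_{d,\P}(1)$. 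For the $\rho\to\infty$ limit: \textbf{B2} forces $\int t\,\mathcal D(\de t)=\lim_d\Tr(\bGamma\bSigma)=1$, so the $\psi$-equation pushes $\psi\to\infty$ with $\psi\sim\rho/\tilde\lambda$; then $\zeta_1\lambda_2^2\psi$ and $\zeta_2\lambda_2^2\psi$ diverge proportionally and the ratio tends to $\zeta_{1,*}/(\zeta_{1,*}+\zeta_{2,*})$.

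The crux is the random-matrix step in the third paragraph: establishing the deterministic equivalent for quadratic forms in $\bU^{-1}$ when $\bU$ is the sum of the deformed Wishart $\bm{M}$ and the rank-$N$ outer-product block $\frac{\lambda_2^2}{2}\bS^\top\bS$ whose features are built from the same Gaussian vectors; this needs a leave-one-out / resolvent-perturbation analysis coupling the two blocks (as in Theorem~\ref{thm:RF}). The secondary, more routine but still delicate, task is to verify that every $\bDelta$-perturbation remainder --- $r$ in the expansion of $\bv$, the fluctuation $\tilde q$, and $\mathcal E$ in $\bU$ --- contributes only $o_{d,\P}(1)$ under nothing more than $\|\bDelta\|_\op=\Theta(d^{-1/2})$ together with \textbf{M1} and \textbf{B2}.
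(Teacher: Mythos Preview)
Your representation $R=1-\bv^\top\bU^{-1}\bv$ and your treatment of $\bv$ are correct and match the paper. The gap is in $\bU$. You write that ``the $\bDelta$-dependence cancels at first order in the two-component average and is $O(1/d)$ at second order, so up to negligible corrections $\bU$ is the random-features Gram matrix of $\normal(\bzero,\bSigma)$.'' This is false at the precision needed. Averaging the zeroth-Hermite part of the two conditional kernels leaves, after the first-order cancellation, $\tfrac{\lambda_2^2}{4}\langle\bw_i,\bDelta\bw_i\rangle\langle\bw_j,\bDelta\bw_j\rangle\approx\tfrac{\lambda_2^2}{4}\Tr(\bGamma\bDelta)^2$, which is $O(1/d)$ \emph{entrywise}---but it is the same constant in every entry of an $N\times N$ matrix, i.e.\ a rank-one term $\tfrac{\lambda_2^2}{4}\Tr(\bGamma\bDelta)^2\ones\ones^\top$ of operator norm $\Theta(1)$. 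That term is exactly what puts $\zeta_2$ into $\kappa$ and hence into the denominator of $\ones^\top\bU^{-1}\ones$. Your sketch is internally inconsistent here: you drop $\bDelta$ from $\bU$, yet assert $\ones^\top\bU^{-1}\ones=d\psi/(1+(\zeta_1+\zeta_2)\lambda_2^2\psi)$. With a $\bDelta$-free $\bU$ one would instead obtain $R=(1+(\zeta_1-\zeta_2)\lambda_2^2\psi)/(1+\zeta_1\lambda_2^2\psi)$, which is not the theorem.

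Separately, your route for the quadratic Hermite block---keeping $\tfrac{\lambda_2^2}{2}(\bW^\top\bSigma\bW)^{\circ 2}=\tfrac{\lambda_2^2}{2}\bS^\top\bS$ as a rank-$N$ perturbation and running Woodbury in the $d^2$-dimensional tensor space---is much heavier than needed and not fleshed out. The paper instead uses El~Karoui's estimate to show that $\big(\langle\bw_i,\bSigma\bw_j\rangle^2\big)_{ij}$ is, in operator norm, $\Tr((\bGamma\bSigma)^2)\,\ones\ones^\top+\id_N+o_{d,\P}(1)$: the quadratic block collapses to a rank-one term (contributing $\zeta_1$ to $\kappa$) plus a diagonal absorbed into $\tilde\lambda$. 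Combined with the $\bDelta$-induced rank-one term above and the centering correction $\bmu\bmu^\top$ (for which one needs a decoupling step $\langle\ones,\bA_0^{-1}\bmu\rangle/\sqrt d=o_{d,\P}(1)$ that your proposal does not address), one gets $\bU\approx\bA_0+(\kappa/d)\ones\ones^\top+\bmu\bmu^\top$ with $\kappa=\lambda_2^2(\zeta_1+\zeta_2)$, and a single $2\times 2$ Sherman--Morrison--Woodbury then yields $\ones^\top\bU_0^{-1}\ones/d=\psi/(1+\kappa\psi)+o_{d,\P}(1)$. Your $\rho\to\infty$ argument is fine.
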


\subsection{Neural tangent}
\label{sec:NTMixture}

For the $\NT$ model, we first state our theorem for general $\bSigma$ and $\bw_i \sim \normal ( \bzero , \bGamma )$ and then give an explicit concentration result in the case $\bSigma = \id$ and isotropic weights $\bw_i \sim \normal ( \bzero , \id / d )$. 
\begin{theorem}
\label{thm:NTK_MG}
Let $\P_{\bSigma,\bDelta}$ be the mixture of Gaussian distribution, with $\bSigma$ and $\bDelta$ satisfying conditions {\bf M1} and {\bf M2}. Further assume $\sigma(x) = x^2$. Then, the following holds for almost every $\bW \in \R^{d \times N}$ (with respect to the Lebesgue measure): 
\[
R_{\NT,N} (\P_{\bSigma,\bDelta}) = \frac{2}{2 + \| \Tilde \bDelta \|_F^2 - \| \bP_{\perp } \Tilde \bDelta \bP_{\perp } \|^2_F }+ o_{d} (1),
\]
where $\Tilde \bDelta = \bSigma^{-1/2} \bDelta \bSigma^{-1/2}$ and $\bP_{\perp} = \id - \bSigma^{1/2} \bW ( \bW^\sT \bSigma \bW)^{-1} \bW^\sT \bSigma^{1/2}$ is the projection perpendicular to $\text{{\rm span}} ( \bSigma^{1/2} \bW)$. 

Assuming further that $\bSigma = \id$ and $\bw_i \sim_{i.i.d.} \normal ( \bzero , \id_d / d )$, we have as $N,d \to \infty$ with $N/d \to \rho$:
\begin{align*}
&R_{\NT,N} (\P_{\id,\bDelta})  = \frac{2}{2 +  \kappa (\rho , \bDelta) \, \| \bDelta \|^2_F }+ o_{d,\P} (1),\\
& \kappa (\rho , \bDelta ) = 1 - (1 - \rho)_+^2 \Big(1 -  \frac{\Tr (\bDelta)^2}{d \| \bDelta \|_F^2 } \Big) - (1 - \rho)_+  \frac{\Tr (\bDelta)^2}{d \| \bDelta \|_F^2 }  ,
\end{align*}
In particular, for $\rho \geq 1$, we have (for almost every $\bW$)
\[
R_{\NT,N} (\P_{\id,\bDelta})    = \frac{1}{1 + \| \bDelta \|^2_F/2} + o_{d,\P} (1).
\]
\end{theorem}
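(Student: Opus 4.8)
The plan is to reduce $R_{\NT,N}$ to an explicit finite--dimensional quadratic program and solve it in closed form. Since $\sigma(x)=x^2$ gives $\sigma'(x)=2x$, every $f\in\cF_{\NT,N}(\bW)$ can be written as $f(\bx)=c+\<\bx\bx^\sT,\bS\>$ with $\bS=\bW\bA^\sT+\bA\bW^\sT$, $\bA=[\ba_1,\dots,\ba_N]\in\R^{d\times N}$; conversely, as $\bA$ ranges over $\R^{d\times N}$, $\bS$ ranges over exactly the symmetric matrices obeying $(\id-\bP_{W})\bS(\id-\bP_{W})=0$, where $\bP_{W}$ is the orthogonal projector onto $\mathrm{span}(\bW)$ (the inclusion ``$\subseteq$'' is immediate from $(\id-\bP_{W})\bW=0$; for ``$\supseteq$'' split $\bS$ into its $\bP_{W}$--blocks and write each as $\bW(\cdot)+(\cdot)\bW^\sT$). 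Hence $R_{\NT,N}(\P_{\bSigma,\bDelta})=\min_{c\in\R,\,\bS}\E[(y-c-\<\bx\bx^\sT,\bS\>)^2]$, the minimum over feasible $\bS$, attained since it is a convex quadratic on an affine subspace.

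Next I would expand this objective using only $\E[y]=0$, $\E[y\,\bx\bx^\sT]=-\bDelta$, $\E[\bx\bx^\sT]=\bSigma$, together with the fourth--moment formula $\E[(\bx^\sT\bS\bx)^2]=(\tr\bS\bSigma)^2+(\tr\bS\bDelta)^2+2\tr(\bS\bSigma\bS\bSigma)+2\tr(\bS\bDelta\bS\bDelta)$, obtained by conditioning on $y$, applying the Gaussian identity $\E[(\bg^\sT\bR\bg)^2]=(\tr\bR)^2+2\tr\bR^2$ to $\bR=(\bSigma\mp\bDelta)^{1/2}\bS(\bSigma\mp\bDelta)^{1/2}$, and averaging over $y=\pm1$ (odd--in--$\bDelta$ terms cancel). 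Minimizing over $c$ gives $c=-\tr\bS\bSigma$, which exactly cancels $(\tr\bS\bSigma)^2$ and leaves $J(\bS)=1+2\<\bDelta,\bS\>+(\tr\bS\bDelta)^2+2\tr(\bS\bSigma\bS\bSigma)+2\tr(\bS\bDelta\bS\bDelta)$. Changing variables to $\bT=\bSigma^{1/2}\bS\bSigma^{1/2}$, the feasibility constraint becomes $\bP_{\perp}\bT\bP_{\perp}=0$ with $\bP_{\perp}=\id-\bSigma^{1/2}\bW(\bW^\sT\bSigma\bW)^{-1}\bW^\sT\bSigma^{1/2}$ (because $\bx^\sT\bS\bx=(\bSigma^{-1/2}\bx)^\sT\bT(\bSigma^{-1/2}\bx)$ and $\bSigma^{-1/2}(\mathrm{span}(\bW)^\perp)=(\mathrm{span}(\bSigma^{1/2}\bW))^\perp$), and, writing $u:=\<\tilde\bDelta,\bT\>=\<\bDelta,\bS\>$ with $\tilde\bDelta=\bSigma^{-1/2}\bDelta\bSigma^{-1/2}$, the identity $1+2u+u^2=(1+u)^2$ collapses the objective to $J=(1+u)^2+2\|\bT\|_F^2+2\tr(\bT\tilde\bDelta\bT\tilde\bDelta)$.

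The last term is negligible: since $J(0)=1$, any near--minimizer has $J\le1+o(1)$, while $J(\bS)\ge1-2\|\bDelta\|_F\|\bS\|_F+(2c_1^2-o(1))\|\bS\|_F^2$ (using $(\tr\bS\bDelta)^2\ge0$, $\tr(\bS\bSigma\bS\bSigma)\ge c_1^2\|\bS\|_F^2$ and $|\tr(\bS\bDelta\bS\bDelta)|\le\|\bDelta\|_{\rm op}^2\|\bS\|_F^2$), so assumptions {\bf M1}--{\bf M2} force $\|\bS\|_F=O(\|\bDelta\|_F)=O(1)$ at (near-)minimizers, hence $\|\bT\|_F=O(1)$ and $|2\tr(\bT\tilde\bDelta\bT\tilde\bDelta)|\le2\|\tilde\bDelta\|_{\rm op}^2\|\bT\|_F^2=O(1/d)$. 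Thus $\min J$ coincides, up to $o_d(1)$, with the minimum of $(1+u)^2+2\|\bT\|_F^2$ over feasible $\bT$. For a fixed value of $u$, the least $\|\bT\|_F^2$ subject to $\<\tilde\bDelta,\bT\>=u$ and $\bP_{\perp}\bT\bP_{\perp}=0$ is $u^2/A$, attained by $\bT$ proportional to $\tilde\bDelta-\bP_{\perp}\tilde\bDelta\bP_{\perp}$ (the projection of $\tilde\bDelta$ onto the feasible subspace), where $A:=\|\tilde\bDelta\|_F^2-\|\bP_{\perp}\tilde\bDelta\bP_{\perp}\|_F^2$; minimizing $(1+u)^2+2u^2/A$ over $u\in\R$ gives $2/(2+A)$. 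This yields the first display, valid for every $\bW$ making $\bW^\sT\bSigma\bW$ invertible.

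Finally, for $\bSigma=\id$ and $\bw_i\sim_{i.i.d.}\normal(\bzero,\id_d/d)$ it remains to evaluate $A=\|\bDelta\|_F^2-\|\bP_{\perp}\bDelta\bP_{\perp}\|_F^2$, where now $\bP_{\perp}$ projects onto the orthocomplement of a uniformly random $(N\wedge d)$--dimensional subspace. If $\rho\ge1$ this subspace equals $\R^d$ with probability one, so $\bP_{\perp}=0$ and $A=\|\bDelta\|_F^2$ exactly, giving $R_{\NT,N}(\P_{\id,\bDelta})=1/(1+\|\bDelta\|_F^2/2)+o_{d,\P}(1)$. For $\rho<1$ I would use $\|\bP_{\perp}\bDelta\bP_{\perp}\|_F^2=\tr(\bP_{\perp}\bDelta\bP_{\perp}\bDelta)$ and the Haar second--moment identity $\E[\tr(\bP\bDelta\bP\bDelta)]=\tfrac{k(kd-1)}{d(d^2-1)}\|\bDelta\|_F^2+\tfrac{k(d-k)}{d(d^2-1)}(\tr\bDelta)^2$ for a rank--$k$ Haar projector $\bP$, with $k=d-(N\wedge d)$ and $k/d\to1-\rho$; this gives $\E[\|\bP_{\perp}\bDelta\bP_{\perp}\|_F^2]=(1-\rho)^2\|\bDelta\|_F^2+\rho(1-\rho)\tr(\bDelta)^2/d+o(1)$, hence $A=\kappa(\rho,\bDelta)\|\bDelta\|_F^2+o(1)$ with $\kappa$ as in the statement, and then $R_{\NT,N}=2/(2+A)+o_{d,\P}(1)$. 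Concentration of $\|\bP_{\perp}\bDelta\bP_{\perp}\|_F^2$ around its mean with $o(1)$ fluctuations follows either from the analogous fourth--moment computation or from Lipschitz concentration of $\bW\mapsto\|\bP_{\perp}\bDelta\bP_{\perp}\|_F$ on the probability--$(1-o(1))$ event that $\bW/\sqrt d$ is well conditioned. The main obstacle is precisely this last step — pinning down the exact $(1-\rho)^2$, $(1-\rho)$ and $\tr(\bDelta)^2/d$ coefficients via the Weingarten calculus and pairing it with a matching $o(1)$ concentration bound; everything upstream is a routine (and, thanks to the $(1+u)^2$ collapse, rather clean) quadratic computation.
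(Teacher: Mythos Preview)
Your proposal is correct and tracks the paper's argument closely; the overall skeleton (reduce $\cF_{\NT,N}$ to quadratic forms $\langle\bx\bx^\sT,\bS\rangle$ with the block constraint $\bP_\perp\bS\bP_\perp=0$, drop the $\tr(\bS\bDelta\bS\bDelta)$ term using {\bf M1}--{\bf M2}, and identify the minimum via the projection of $\tilde\bDelta$ onto the feasible subspace) is exactly what the paper does. The packaging differs in two places. First, the paper isolates a ``representation lemma'' that first optimizes the scalar factor and offset for a fixed $\bGamma$, obtaining $L(\bGamma)\approx 2/(2+\langle\bGamma,\bDelta\rangle^2/\|\bSigma^{1/2}\bGamma\bSigma^{1/2}\|_F^2)$, and then maximizes the ratio via a Lagrangian; your $(1+u)^2$ collapse followed by the one--line projection argument is a cleaner way to reach the same $2/(2+A)$. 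Second, for the step you flag as the main obstacle, the paper takes a slightly smoother path: the expectation $\E\|\bP_\perp\bDelta\bP_\perp\|_F^2$ is read off directly from the earlier computation in the quadratic--function case (the paper's Theorem~\ref{thm:QF_NTK}), which gives exactly the $(1-\rho)_+^2$ and $(1-\rho)_+$ coefficients you are after without invoking Weingarten calculus, and the $o_{d,\P}(1)$ concentration is obtained by viewing $\bP\mapsto\|\bP^\sT\bDelta\bP\|_F^2$ as a Lipschitz function on the Stiefel manifold (gradient bounded by $4\|\bDelta\|_{\rm op}\|\bDelta\|_F=O(1)$) and applying Ledoux's normal concentration there---this is tidier than conditioning on a well--conditioned event for $\bW$.
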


\subsection{Neural network}
\label{sec:NNMixture}

We consider quadratic activations with general offset and coefficients
$\hat f (\bx ; \bW, \ba, c) = \sum_{i=1}^N a_i \< \bw_i , \bx \>^2 +c$.
This is optimized over $(a_i,\bw_i)_{i\le N}$ and $c$.
\begin{theorem}\label{thm:NN_MG}
Let $\P_{\bSigma,\bDelta}$ be the mixture of Gaussian distribution, with $\bSigma$ and $\bDelta$ satisfying conditions {\bf M1} and {\bf M2}. Then, the following holds
\[
R_{\NN,N} (\P_{\bSigma,\bDelta}) = \frac{2}{2 + \sum_{i= 1}^{N\wedge d} \lambda_{i}(\Tilde \bDelta)^2}+ o_d (1),
\]
where $\Tilde \bDelta = \bSigma^{-1/2} \bDelta \bSigma^{-1/2}$ and $\lambda_1 (\Tilde \bDelta) \ge \lambda_1 (\Tilde \bDelta) \ge\dots\ge \lambda_d(\Tilde\bDelta)$
 are the singular values of $\Tilde \bDelta$.
In particular, for $\rho \geq 1$, we have
\[
R_{\NN, N} (\P_{\id,\bDelta})    = \frac{1}{1 + \| \Tilde\bDelta \|^2_F/2}+ o_{d} (1).
\]
\end{theorem}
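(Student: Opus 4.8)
The plan is to turn the infimum defining $R_{\NN,N}(\P_{\bSigma,\bDelta})$ into a finite-dimensional matrix optimization; there is no randomness and no SGD in this statement, so the claim is a deterministic variational identity. With $\sigma(u)=u^2$ every network $\hat f(\bx;\bW,\ba,c)=\sum_{i=1}^N a_i\<\bw_i,\bx\>^2+c$ equals $\<\bx\bx^\sT,\boldsymbol{M}\>+c$ with $\boldsymbol{M}=\sum_{i=1}^N a_i\bw_i\bw_i^\sT$, and conversely every symmetric $d\times d$ matrix $\boldsymbol{M}$ with $\mathrm{rank}(\boldsymbol{M})\le N$ arises this way (spectral decomposition, allowing $a_i<0$). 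Hence
\[
R_{\NN,N}(\P_{\bSigma,\bDelta})=\inf\big\{\,L(\boldsymbol{M},c):\ \boldsymbol{M}=\boldsymbol{M}^\sT,\ \mathrm{rank}(\boldsymbol{M})\le N,\ c\in\R\,\big\},\qquad L(\boldsymbol{M},c):=\E_{(y,\bx)}\big[(y-\<\bx\bx^\sT,\boldsymbol{M}\>-c)^2\big],
\]
and (for $d$ large) the infimum is attained, since $L$ is continuous and, as will be seen, coercive.

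I would then expand $L$ in closed form. Using $\E[y]=0$, $\E[y\,\bx\bx^\sT]=-\bDelta$, $\E[\bx\bx^\sT]=\bSigma$, and the Gaussian moment identity $\E[(\bx^\sT\boldsymbol{M}\bx)^2]=\tr(\boldsymbol{C}\boldsymbol{M})^2+2\tr(\boldsymbol{C}\boldsymbol{M}\boldsymbol{C}\boldsymbol{M})$ for $\bx\sim\normal(\bzero,\boldsymbol{C})$, applied with $\boldsymbol{C}=\bSigma\mp\bDelta$ and averaged over the two mixture components, a short computation gives
\[
L(\boldsymbol{M},c)=1+2\tr(\bDelta\boldsymbol{M})+\tr(\bDelta\boldsymbol{M})^2+2\tr(\bSigma\boldsymbol{M}\bSigma\boldsymbol{M})+2\tr(\bDelta\boldsymbol{M}\bDelta\boldsymbol{M})+\big(\tr(\bSigma\boldsymbol{M})+c\big)^2.
\]
Minimizing over $c$ removes the last term ($c=-\tr(\bSigma\boldsymbol{M})$), and the rank-preserving substitution $\boldsymbol{N}=\bSigma^{1/2}\boldsymbol{M}\bSigma^{1/2}$ recasts the remainder as
\[
\bar L(\boldsymbol{N})=1+2\<\Tilde\bDelta,\boldsymbol{N}\>+\<\Tilde\bDelta,\boldsymbol{N}\>^2+2\|\boldsymbol{N}\|_F^2+2\tr\big((\Tilde\bDelta\boldsymbol{N})^2\big),\qquad \Tilde\bDelta=\bSigma^{-1/2}\bDelta\bSigma^{-1/2}.
\]
Conditions {\bf M1}--{\bf M2} give $\|\Tilde\bDelta\|_{\op}=\Theta(d^{-1/2})$ and $\|\Tilde\bDelta\|_F=O(1)$. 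Since $1+2\<\Tilde\bDelta,\boldsymbol{N}\>+2\|\boldsymbol{N}\|_F^2\ge 1-2\|\Tilde\bDelta\|_F\|\boldsymbol{N}\|_F+2\|\boldsymbol{N}\|_F^2$ is coercive in $\|\boldsymbol{N}\|_F$, $\bar L(\bzero)=1$, and $|\tr((\Tilde\bDelta\boldsymbol{N})^2)|\le\|\Tilde\bDelta\|_{\op}^2\|\boldsymbol{N}\|_F^2$, every (near-)minimizer of $\bar L$, or of its quartic-free part $\boldsymbol{N}\mapsto 2\<\Tilde\bDelta,\boldsymbol{N}\>+\<\Tilde\bDelta,\boldsymbol{N}\>^2+2\|\boldsymbol{N}\|_F^2$, has $\|\boldsymbol{N}\|_F=O(1)$; dropping the term $2\tr((\Tilde\bDelta\boldsymbol{N})^2)=O(1/d)$ therefore alters the optimal value by only $o_d(1)$.

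It remains to minimize $2s+s^2+2\|\boldsymbol{N}\|_F^2$ with $s:=\<\Tilde\bDelta,\boldsymbol{N}\>$ over symmetric $\boldsymbol{N}$ of rank $\le N$. For a fixed $s$ I need the minimal value of $\|\boldsymbol{N}\|_F^2$, and the key estimate is a rank-restricted Cauchy--Schwarz bound: letting $\bP$ be the orthogonal projector onto the range of $\boldsymbol{N}$ one has $\boldsymbol{N}=\bP\boldsymbol{N}\bP$, hence $|s|=|\<\bP\Tilde\bDelta\bP,\boldsymbol{N}\>|\le\|\bP\Tilde\bDelta\bP\|_F\,\|\boldsymbol{N}\|_F$, and diagonalizing $\Tilde\bDelta=\sum_j\mu_j\bu_j\bu_j^\sT$,
\[
\|\bP\Tilde\bDelta\bP\|_F^2\le\|\bP\Tilde\bDelta\|_F^2=\sum_j\mu_j^2\<\bu_j,\bP\bu_j\>\le\sum_{i=1}^{N\wedge d}\lambda_i(\Tilde\bDelta)^2=:\Lambda_N,
\]
because $\<\bu_j,\bP\bu_j\>\in[0,1]$ with $\sum_j\<\bu_j,\bP\bu_j\>=\mathrm{rank}(\bP)\le N$, so the weighted sum is maximized by concentrating on the $N$ largest $\mu_j^2$. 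Thus $\|\boldsymbol{N}\|_F^2\ge s^2/\Lambda_N$, with equality at $\boldsymbol{N}=(s/\Lambda_N)\,\bP_\star\Tilde\bDelta\bP_\star$ where $\bP_\star$ projects onto the top-$N$ eigenspace of $\Tilde\bDelta$. The problem reduces to $\min_{s\in\R}\{2s+s^2(1+2/\Lambda_N)\}$, attained at $s=-(1+2/\Lambda_N)^{-1}$ with value $-\Lambda_N/(\Lambda_N+2)$; hence $R_{\NN,N}(\P_{\bSigma,\bDelta})=1-\Lambda_N/(\Lambda_N+2)+o_d(1)=2/(2+\sum_{i=1}^{N\wedge d}\lambda_i(\Tilde\bDelta)^2)+o_d(1)$, and the case $\rho\ge1$ follows by taking $\Lambda_N=\|\Tilde\bDelta\|_F^2$.

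The main obstacle, I expect, is the pair of facts surrounding the rank-restricted Cauchy--Schwarz step: because the rank constraint makes the feasible set non-convex, one must argue both that supporting $\boldsymbol{N}$ on the top-$N$ eigendirections of $\Tilde\bDelta$ is optimal and that the resulting value is genuinely a lower bound, and then one must make the deletion of $2\tr((\Tilde\bDelta\boldsymbol{N})^2)$ \emph{uniform} via the coercivity estimate so that the $o_d(1)$ error attaches to the true minimum rather than holding only pointwise. The Gaussian moment expansion, the elimination of $c$, and the scalar minimization in $s$ are routine.
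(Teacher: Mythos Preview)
Your proposal is correct and follows the same route as the paper: expand the risk in Gaussian moments, eliminate $c$, drop the term $2\tr((\Tilde\bDelta\boldsymbol{N})^2)=O(d^{-1}\|\boldsymbol{N}\|_F^2)$ uniformly via the coercivity bound, and solve the remaining rank-constrained quadratic by a Cauchy--Schwarz argument that singles out the top $N\wedge d$ eigendirections of $\Tilde\bDelta$. The paper packages the first steps into a representation lemma giving $L(\bGamma)=2/(2+\<\bGamma,\bDelta\>^2/\|\bSigma^{1/2}\bGamma\bSigma^{1/2}\|_F^2)+o_d(1)$ and then maximizes the ratio $\<\bGamma,\Tilde\bDelta\>^2/\|\bGamma\|_F^2$ via the eigendecomposition $\bGamma=\bU\bS\bU^\sT$ and Cauchy--Schwarz on $\mathrm{ddiag}(\bU^\sT\Tilde\bDelta\bU)$, whereas you keep the scalar $s=\<\Tilde\bDelta,\boldsymbol{N}\>$ explicit and argue through the projector $\bP$ onto the range of $\boldsymbol{N}$; these are interchangeable.
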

Let us emphasize that, for this setting,  we do not have a convergence result for SGD as for the model \QF, cf. Theorem \ref{thm:NN_quadratic}. 
However, because of certain analogies between the two models, we expect a similar result to hold for mixtures of Gaussians.

We can now compare the risks achieved within the regimes $\RF$,
$\NT$ and $\NN$. Gathering the results of Theorems \ref{thm:RFMixture}, \ref{thm:NTK_MG} and \ref{thm:NN_MG} for $\bSigma = \id$ and $\sigma(x) = x^2 -1$ (using $\bw_i \sim \normal ( \bzero , \id/d)$ for \RF\, and \NT), we obtain
\begin{align}
R_{\M,N}(\P_{\id,\bDelta}) &\approx  \begin{dcases}
 \frac{1 }{1+  \frac{\rho}{1 + 2\rho} \cdot \frac{\tr (\bDelta)^2}{2 d  }}  & \mbox{ for $\M=\RF$,}\\
 \frac{1}{1 + \kappa ( \rho , \bDelta ) \| \bDelta \|_F^2 /2}  & \mbox{ for $\M=\NT$,}\\
\frac{1}{1 + \sum_{i= 1}^{N\wedge d} \lambda_{i}(\bDelta)^2/2}  & \mbox{ for $\M=\NN$.}
\end{dcases}
\end{align}
We recover a similar behavior as in the case of the (\QF ) model: $\NN$ learns the most important directions of $\bDelta$, while $\RF$, $\NT$ do not. Note that the Bayes error is not achieved in this model.

\section{Numerical Experiments}
\label{sec:Numerical}
For the experiments illustrated in Figures \ref{fig:Main} and \ref{fig:MG}, we use feature size of $d=450$, and  number of hidden units 
$N \in \{45, \cdots, 4500\}$. $\NT$ and $\NN$ models are trained with SGD in TensorFlow \cite{abadi2016tensorflow}. We run a total of $2.0 \times 10^5$ SGD steps for each (\QF) model and $1.4 \times 10^5$ steps for each (\MG) model. The SGD batch size is fixed at $100$ and the step size is chosen from the grid $\{0.001,\cdots, 0.03\}$ where the hyper-parameter that achieves the best fit is used for the figures. $\RF$ models are fitted directly by solving KKT conditions with $5.0 \times 10^{5}$ observations. After fitting the model, the test error is evaluated on $1.0 \times 10^4$ fresh samples. In our figures, each $\RF$ data point corresponds to the test error averaged over $10$ models with independent realizations of $\bW$. 

For (\QF)  experiments, we choose 
$\bB$ to be diagonal with diagonal elements chosen i.i.d from standard exponential distribution with parameter $1$. For (\MG) experiments, $\bDelta$ is also diagonal with the diagonal 
element chosen uniformly from the set $\{ \frac{2}{\sqrt{d}}, \frac{1.5}{\sqrt{d}}, \frac{1}{\sqrt{d}}\}$.

\section*{Acknowledgements}

This work was partially supported by grants NSF DMS-1613091, CCF-1714305, IIS-1741162, and
ONR N00014-18-1-2729, NSF DMS-1418362, NSF DMS-1407813.

\bibliographystyle{amsalpha}

\newcommand{\etalchar}[1]{$^{#1}$}
\providecommand{\bysame}{\leavevmode\hbox to3em{\hrulefill}\thinspace}
\providecommand{\MR}{\relax\ifhmode\unskip\space\fi MR }
\providecommand{\MRhref}[2]{%
  \href{http://www.ams.org/mathscinet-getitem?mr=#1}{#2}
}
\providecommand{\href}[2]{#2}

\clearpage

\appendix

\section{Technical background}

\subsection{Hermite polynomials}

The Hermite polynomials $\{\He_k\}_{k\ge 0}$ form an orthogonal basis of $L^2(\reals,\gamma)$, where $\gamma(\de x) = e^{-x^2/2}\de x/\sqrt{2\pi}$ 
is the standard Gaussian measure, and $\He_k$ has degree $k$. We will follow the classical normalization (here and below, expectation is with respect to
$G\sim\normal(0,1)$):
\begin{align}
\E\big\{\He_j(G) \,\He_k(G)\big\} = k!\, \delta_{jk}\, .
\end{align}
As a consequence, for any function $g\in L^2(\reals,\gamma)$, we have the decomposition
\begin{align}
g(x) = \sum_{k=0}^{\infty}\frac{\mu_k(g)}{k!}\, \He_k(x)\, ,\;\;\;\;\;\; \mu_k(g) \equiv \E\big\{g(G)\, \He_k(G)\}\, .
\end{align}

\subsection{Notations}
Throughout the proofs, $O_d(\, \cdot \, )$  (resp. $o_d (\, \cdot \,)$) denotes the standard big-O (resp. little-o) notation, where the subscript $d$ emphasizes the asymptotic variable. We denote $O_{d,\P} (\, \cdot \,)$ (resp. $o_{d,\P} (\, \cdot \,)$) the big-O (resp. little-o) in probability notation: $h_1 (d) = O_{d,\P} ( h_2(d) )$ if for any $\eps > 0$, there exists $C_\eps > 0 $ and $d_\eps \in \Z_{>0}$, such that
\[
\begin{aligned}
\P ( |h_1 (d) / h_2 (d) | > C_{\eps}  ) \le \eps, \qquad \forall d \ge d_{\eps},
\end{aligned}
\]
and respectively: $h_1 (d) = o_{d,\P} ( h_2(d) )$, if $h_1 (d) / h_2 (d)$ converges to $0$ in probability.

We will occasionally hide logarithmic factors  using the  $\Tilde O_d (\, \cdot\, )$ notation (resp. $\Tilde o_d (\, \cdot \, )$): $h_1(d) = \tilde O_d(h_2(d))$ if there exists a constant $C$ 
such that $h_1(d) \le C(\log d)^C h_2(d)$. Similarly, we will denote $\Tilde O_{d,\P} (\, \cdot\, )$ (resp. $\Tilde o_{d,\P} (\, \cdot \, )$) when considering the big-O in probability notation up to a logarithmic factor.

\section{Proofs for quadratic functions}

Our results for quadratic functions (\QF) assume $\bx_i\sim\normal(0,\id_d)$ and $y_i = f_*(\bx_i)$ where
\begin{align}
f_{*}(\bx_i) \equiv b_0+\< \bx , \bB \bx \>\, .\label{eq:QF_proof}
\end{align}
Throughout this section, we will denote $\E_{\bx}$ the expectation operator with respect to $\bx \sim\normal(0,\id_d)$, and $\E_{\bw}$ the expectation operator with respect to $\bw \sim\normal(0,\bGamma)$.

\subsection{Random Features model: proof of Theorem \ref{thm:RF}}
\label{sec:proof_MainRF}

Recall the definition
\[
R_{\RF, N}(f_*) = \arg\min_{\hf\in\cF_{\RF,N}(\bW)}\E\big\{(f_* (\bx) -\hf(\bx))^2\big\},
\]
where
\[
\cF_{\RF, N}(\bW) = \Big\{ f_N( \bx) =  \sum_{i=1}^N  a_i\sigma(\< \bw_i, \bx\>): \; a_i \in \R, i \in [N] \Big\}.
\]
Note that it is easy to see from the proof that the result stays the same if we add an offset $c$.

\subsubsection{Representation of the $\RF$ risk}
\begin{lemma}\label{lem:QF_risk_representation}
Consider the $\RF$ model. We have
\begin{align}
R_{\RF,N}(f_*) = \E_{\bx }[f_*(\bx) ^2] - \bV^\sT \bU^{-1} \bV, 
\end{align}
where $\bV = [V_1, \ldots, V_N]^\sT$, and $\bU = (U_{ij})_{i, j \in [N]}$, with
\[
\begin{aligned}
V_i =& \E_\bx[f_*(\bx) \sigma(\< \bw_i, \bx\>)], \\
U_{ij} =& \E_\bx[\sigma(\< \bw_i, \bx\>) \sigma(\< \bw_j, \bx\>)].
\end{aligned}
\]
\end{lemma}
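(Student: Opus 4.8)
The plan is to recognize $R_{\RF,N}(f_*)$ as the value of a finite-dimensional least-squares problem and to solve it via the normal equations.

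First I would verify that all quantities are well-defined. Under assumption \textbf{A1} we have $\sigma(u)^2 \le c_0\exp(c_1 u^2/2)$ with $c_1<1$, while $\<\bw_i,\bx\> \sim \normal(0,\|\bw_i\|_2^2)$ with $\|\bw_i\|_2^2$ concentrating around $\Tr(\bGamma)=1$ (in particular $\|\bw_i\|_2 = O_{d,\P}(1)$); hence each feature function $\phi_i(\bx) := \sigma(\<\bw_i,\bx\>)$ lies in $L^2(\R^d,\normal(0,\id_d))$. Together with $f_*\in L^2$ (being a quadratic function of a Gaussian vector), this makes $V_i = \E_\bx[f_*\phi_i]$ and $U_{ij}=\E_\bx[\phi_i\phi_j]$ finite, and exhibits $\cF_{\RF,N}(\bW)$ as a finite-dimensional linear subspace of $L^2$.

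Next I would expand the objective as a quadratic form in the coefficients. Writing $\hf=\sum_{i=1}^N a_i\phi_i$ and $\ba=(a_1,\dots,a_N)^\sT$, one has $\E_\bx[(f_*(\bx)-\hf(\bx))^2] = \E_\bx[f_*(\bx)^2] - 2\ba^\sT\bV + \ba^\sT\bU\ba$, a convex quadratic in $\ba$ since $\bU\succeq 0$ is a Gram matrix. Setting the gradient $-2\bV+2\bU\ba$ to zero yields the normal equations $\bU\ba=\bV$; when $\bU\succ 0$ the unique minimizer is $\ba^\star=\bU^{-1}\bV$, and substituting back gives the minimum value $\E_\bx[f_*(\bx)^2]-\bV^\sT\bU^{-1}\bV$, which is the claim. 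In the degenerate case, $\bV$ still lies in the range of $\bU$ (both $\ker\bU$ and the orthogonality of $\bV$ to it are governed by linear dependencies among the $\phi_i$ in $L^2$), so the normal equations remain solvable and the value equals $\E_\bx[f_*^2]-\bV^\sT\bU^{+}\bV$; one then notes that almost surely the $\phi_i$ are linearly independent in $L^2$, so $\bU$ is invertible with probability one and the formula as stated holds.

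There is no genuine obstacle here: the statement is exactly the textbook expression for the squared distance from $f_*$ to a finite-dimensional subspace, and the only points meriting a line of care are the $L^2$-integrability of the random features under \textbf{A1} and the almost-sure nonsingularity of $\bU$. The substantive content — evaluating $\bV$ and $\bU$ for the quadratic target $f_*$ and carrying out the random-matrix asymptotics of $\bV^\sT\bU^{-1}\bV$ — is deferred to the subsequent lemmas.
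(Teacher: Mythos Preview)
Your proposal is correct and follows essentially the same approach as the paper, which simply notes that the KKT conditions give the optimum at $\ba = \bU^{-1}\bV$ and substitutes back. You have merely been more careful than the paper in verifying integrability under {\bf A1} and handling the invertibility of $\bU$.
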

\begin{proof}[Proof of Lemma \ref{lem:QF_risk_representation}. ]
Simply write the KKT conditions. The optimum is achieved at $\ba = \bU^{-1} \bV$. 
\end{proof}

\subsubsection{Approximation of kernel matrix $\bU$}

\begin{lemma}\label{lem:RF_QF_kernel}
Let $\sigma \in L^2(\R ,\gamma)$ be an activation function. Denote $\lambda_k =\E_{G \sim \normal(0, 1)}[\sigma(G) \He_k(G)]$ the $k$-th Hermite coefficient of $\sigma$ and assume $\lambda_0 = 0$. 
Let $\bU = (U_{ij})_{i, j \in [N]}$ be a random matrix with
\[
\begin{aligned}
U_{ij} =& \E_{\bx}[\sigma(\< \bw_i, \bx\>) \sigma(\< \bw_j, \bx\>)],
\end{aligned}
\]
where $(\bw_i)_{i \in [N]} \sim \normal(\bzero, \bGamma)$ independently. Assume conditions {\bf A1} and {\bf A2} hold. 

Let $\bW = (\bw_1, \ldots, \bw_N) \in \R^{d \times N}$, and denote $\bU_0 = \{(U_0)_{ij}\}_{i,j\in[N]}$, with
\[
(U_0)_{ij} = \tilde \lambda \delta_{ij} + \lambda_1^2 \< \bw_i, \bw_j\> + \kappa / d + \mu_i \mu_j,
\]
where 
\[
\begin{aligned}
\mu_i =& \lambda_2 (\| \bw_i \|_2^2 - 1) / 2,\\
\tilde \lambda =& \E[\sigma(G)^2] - \lambda^2_1, \\
\kappa = & d \lambda_2^2 \tr(\bGamma^2)/2.
\end{aligned}
\]
Then we have as $N/d = \rho$ and $d \to \infty$, 
\[
\| \bU - \bU_0 \|_{\op} = o_{d, \P}(1).
\]

\end{lemma}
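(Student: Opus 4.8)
The plan is to expand every entry $U_{ij}$ in Hermite polynomials, regroup the series into a sum of matrices indexed by the Hermite degree, and match each against the corresponding block of $\bU_0 = \tilde\lambda\,\id_N + \lambda_1^2\,\bW^\sT\bW + (\kappa/d)\,\ones\ones^\sT + \bmu\bmu^\sT$, where $\bmu=(\mu_1,\dots,\mu_N)^\sT$. Fixing the weights, $(\<\bw_i,\bx\>,\<\bw_j,\bx\>)$ is a centered Gaussian with second moments $\|\bw_i\|^2,\|\bw_j\|^2,\<\bw_i,\bw_j\>$; writing $\sigma(\|\bw_i\| u)=\sum_{k\ge0}\tfrac{\lambda_k(\|\bw_i\|)}{k!}\He_k(u)$ with $\lambda_k(s):=\E_{G\sim\normal(0,1)}[\sigma(sG)\He_k(G)]$ and using $\E[\He_j(Z_1)\He_k(Z_2)]=\delta_{jk}k!\,t^k$ for a standard bivariate Gaussian of correlation $t$, one gets
\[
U_{ij}=\sum_{k\ge0}\frac{\lambda_k(\|\bw_i\|)\lambda_k(\|\bw_j\|)}{k!}\,t_{ij}^{\,k},\qquad t_{ij}:=\frac{\<\bw_i,\bw_j\>}{\|\bw_i\|\|\bw_j\|}.
\]
From the Hermite generating function I would first record the scaling identities $\lambda_0(s)=\tfrac{\lambda_2}{2}(s^2-1)+O((s^2-1)^2)$, $\lambda_1(s)=s(\lambda_1+O(s^2-1))$, $\lambda_2(s)=s^2(\lambda_2+O(s^2-1))$, and (using {\bf A1}) that $s\mapsto\E[\sigma(sG)^2]$ and each $\lambda_k(\cdot)$ are smooth with bounded derivatives on $[1/2,3/2]$.

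Second, I would gather the concentration inputs: writing $\bw_i=\bGamma^{1/2}\bg_i$ with $\bg_i\sim\normal(\bzero,\id_d)$ i.i.d., by Hanson--Wright and a union bound over $O(d)$ (resp.\ $O(d^2)$) indices one has $\max_i\big|\|\bw_i\|^2-1\big|=\tilde O_{d,\P}(d^{-1/2})$ and $\max_{i\ne j}|\<\bw_i,\bw_j\>|=\tilde O_{d,\P}(d^{-1/2})$; by {\bf A2} and the standard non-asymptotic operator-norm bound for Gaussian matrices \cite{vershynin2010introduction}, $\|\bW^\sT\bW\|_{\op}\le\|\bGamma\|_{\op}(\sqrt N+\sqrt d)^2(1+o_{d,\P}(1))=O_{d,\P}(1)$; and $\E\|\bmu\|_2^2=\tfrac{\lambda_2^2}{4}N\cdot2\Tr(\bGamma^2)=O_d(1)$. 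I would then split $\bU=\bU^{(0)}+\bU^{(1)}+\bU^{(2)}+\bU^{(\ge3)}$ by Hermite degree. The degrees $k\ne2$ are easy: $\bU^{(0)}=\tilde\bmu\tilde\bmu^\sT$ with $\tilde\mu_i:=\lambda_0(\|\bw_i\|)=\mu_i+O(\tau_i^2)$, $\tau_i:=\|\bw_i\|^2-1$, so $\|\bU^{(0)}-\bmu\bmu^\sT\|_{\op}=\tilde O_{d,\P}(d^{-1/2})$; with $\bD:=\diag(\lambda_1(\|\bw_i\|)/\|\bw_i\|)=\lambda_1\id_N+\tilde O_{d,\P}(d^{-1/2})$ one has $\bU^{(1)}=\bD\bW^\sT\bW\bD$, hence $\|\bU^{(1)}-\lambda_1^2\bW^\sT\bW\|_{\op}=o_{d,\P}(1)$; and for $k\ge3$ the off-diagonal entries are $\le C|t_{ij}|^3=\tilde O_{d,\P}(d^{-3/2})$ (by Cauchy--Schwarz and $\sum_k\lambda_k(s)^2/k!=\E[\sigma(sG)^2]<\infty$), giving off-diagonal Frobenius norm $\tilde O_{d,\P}(N/d^{3/2})=o_{d,\P}(1)$, while the diagonals of $\bU^{(\ge3)}$ and $\bU^{(2)}$ equal $\sum_{k\ge3}\lambda_k^2/k!+\tilde O_{d,\P}(d^{-1/2})$ and $\lambda_2^2/2+\tilde O_{d,\P}(d^{-1/2})$ uniformly, reproducing $\tilde\lambda\,\id_N$ since $\tilde\lambda=\lambda_2^2/2+\sum_{k\ge3}\lambda_k^2/k!$ (the $\kappa/d$ term of $\bU_0$ being $O_d(1/d)$).

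Third comes the one delicate piece, the off-diagonal part of $\bU^{(2)}$, with entries $\tfrac12\beta_i\beta_j\<\bw_i,\bw_j\>^2$, $\beta_i=\lambda_2(\|\bw_i\|)/\|\bw_i\|^2=\lambda_2+\tilde O_{d,\P}(d^{-1/2})$ uniformly. After reducing $\beta_i\beta_j$ to $\lambda_2^2$ (legitimate once the matrix $[\<\bw_i,\bw_j\>^2]_{i\ne j}$ is known to have operator norm $O_{d,\P}(1)$, which the decomposition below gives), I would write, for $i\ne j$, $\<\bw_i,\bw_j\>^2=\<\bw_i\bw_i^\sT,\bw_j\bw_j^\sT\>=\<\bA_i,\bA_j\>+\<\bA_i,\bGamma\>+\<\bA_j,\bGamma\>+\Tr(\bGamma^2)$ with $\bA_i:=\bw_i\bw_i^\sT-\bGamma$. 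The constant term contributes exactly $\tfrac{\lambda_2^2}{2}\Tr(\bGamma^2)(\ones\ones^\sT-\id_N)=\tfrac{\kappa}{d}(\ones\ones^\sT-\id_N)$, matching $\bU_0$; the $\<\bA_i,\bGamma\>=\bw_i^\sT\bGamma\bw_i-\Tr(\bGamma^2)$ terms give a rank-$\le2$ matrix $\bz\ones^\sT+\ones\bz^\sT$ (plus a negligible diagonal) with $\E\|\bz\|_2^2=O_d(N\,\Tr(\bGamma^4))=O_d(d^{-2})$, hence operator norm $\le\|\bz\|_2\sqrt N=o_{d,\P}(1)$.

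The genuinely hard term is $M:=\big(\<\bA_i,\bA_j\>\,\mathbf 1\{i\ne j\}\big)_{i,j\le N}$: it has Frobenius norm $\Theta_{d,\P}(1)$, so a crude bound fails, and one must use the near-independence of its entries. I would bound $\|M\|_{\op}$ by the fourth trace moment,
\[
\E\,\Tr(M^4)=\sum_{\substack{i_1,i_2,i_3,i_4\le N\\ i_1\ne i_2,\ i_2\ne i_3,\ i_3\ne i_4,\ i_4\ne i_1}}\E\big[\<\bA_{i_1},\bA_{i_2}\>\<\bA_{i_2},\bA_{i_3}\>\<\bA_{i_3},\bA_{i_4}\>\<\bA_{i_4},\bA_{i_1}\>\big],
\]
estimating the sum by the coincidence pattern of $(i_1,\dots,i_4)$: the $O(N^3)$ walks with exactly one repeated index contribute $O_d(d^{-4})$ each (using $\E\<\bA_1,\bA_2\>^4=O_d(d^{-4})$ and $\E[(\mathrm{vec}(\bA_1)^\sT\bSigma_A\mathrm{vec}(\bA_1))^2]=O_d(d^{-4})$, where $\bSigma_A=\E[\mathrm{vec}(\bA_1)\mathrm{vec}(\bA_1)^\sT]$ satisfies $\|\bSigma_A\|_{\op}=O_d(d^{-2})$ and $\E\|\bA_1\|_F^4=O_d(1)$, all coming from $\|\bGamma\|_{\op}=O_d(1/d)$), while the fully-distinct and doubly-repeated walks contribute $O_d(d^{-2})$; thus $\E\,\Tr(M^4)=O_d(1/d)$ and Markov yields $\|M\|_{\op}=o_{d,\P}(1)$. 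Assembling all the pieces gives $\|\bU-\bU_0\|_{\op}=o_{d,\P}(1)$. I expect this last step---the degree-two off-diagonal block at its critical scale---to be the only part requiring real care; the rest rests on the two elementary facts $\max_i\big|\|\bw_i\|^2-1\big|=\tilde O_{d,\P}(d^{-1/2})$ and $\|\bW^\sT\bW\|_{\op}=O_{d,\P}(1)$.
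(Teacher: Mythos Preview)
Your proof is correct and follows the same architecture as the paper's: Hermite expansion of $U_{ij}$ followed by a degree-by-degree comparison with the blocks of $\bU_0$, with identical treatment of the $k=0,1$ and $k\ge 3$ contributions. The one substantive difference is in the off-diagonal degree-$2$ block: the paper disposes of $\big\|(\<\bw_i,\bw_j\>^2)_{i,j}-\Tr(\bGamma^2)\ones\ones^\sT-\id_N\big\|_{\op}=o_{d,\P}(1)$ by invoking the estimates inside the proof of Theorem~2.1 of \cite{el2010spectrum}, whereas you give a self-contained argument via the decomposition $\<\bw_i,\bw_j\>^2=\<\bA_i,\bA_j\>+\<\bA_i,\bGamma\>+\<\bA_j,\bGamma\>+\Tr(\bGamma^2)$ and a fourth trace-moment bound on the centered Gram matrix $M=(\<\bA_i,\bA_j\>)_{i\ne j}$. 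Your moment count is right (the dominant $O(N^3)$ singly-repeated pattern at $O(d^{-4})$ per term gives $\E\Tr(M^4)=O(d^{-1})$), and this is in fact essentially what the El~Karoui argument does; so the two proofs coincide once the citation is unpacked. One small caveat: the interval $[1/2,3/2]$ on which you claim smoothness of $s\mapsto\lambda_k(s)$ may be too wide under {\bf A1} (you need $c_1 s^2<1$), but you only use it on an $o_{d,\P}(1)$-neighborhood of $1$, where it holds.
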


\begin{proof}[Proof of Lemma \ref{lem:RF_QF_kernel}]~

\noindent
{\bf Step 1. Hermite expansion of $\sigma$ for $\| \bw_i \|_2 \neq 1$. } Denote $\sigma_i (x) = \sigma(\| \bw_i \|_2 \cdot x)$. First notice that by a change of variables, we get
\begin{equation}
\E[ \sigma(tG) ] = \E [ (\sigma(G) / t) \exp( G^2 (1 - 1/t^2)/2) ].
\label{eq:change_var_hermite}
\end{equation}
By Assumption {\bf A1}, there exists $c_1 <1$ such that
\[
\sigma(u)^2 \exp( u^2 (1 - 1/t^2)) \leq c_0 \exp ( u^2 ( c_1/2 + 1- 1/t^2) ).
\]
Hence for $|t-1|$ sufficiently small, we have $\sigma_i \in L^2(\R , \gamma)$ and we can consider its Hermite expansion
\[
\sigma_i (x) = \sum_{k = 0}^\infty \frac{\zeta_k(\sigma_i)}{k!} \He_k(x),
\]
where
\[
\zeta_k(\sigma_i) = \E_{G \sim \normal(0, 1)}[\sigma(\| \bw_i \|_2 G) \He_k(G)].
\]
Denote the Hermite expansion of $\sigma$ to be 
\[
\sigma(x) = \sum_{k = 0}^\infty \lambda_k(\sigma) \He_k(x)/k!,
\]
where
\[
\lambda_k(\sigma) = \E_{G \sim \normal(0, 1)}[\sigma( G) \He_k(G)].
\]
By dominated convergence theorem, we have
\[
\lim_{t \to 1}\E_{G \sim \normal(0, 1)}[(\sigma(G) - \sigma(tG))^2] = 0.
\]
In addition, by sub-Gaussianity of the norm of a multivariate Gaussian random variable (see \cite{vershynin2010introduction}), it is easy to show that
\begin{equation}
\sup_{i \in [N]} \vert \| \bw_i \|_2 - 1 \vert = o_{d, \P}(1).
\label{eq:sup_wi_RF_QF}
\end{equation}
Hence we have
\begin{align}
\sup_{i \in [N]} \| \sigma - \sigma_i \|_{L^2} =& o_{d, \P}(1), \nonumber \\
\sup_{i \in [N]} \vert \zeta_k(\sigma_i) - \lambda_k(\sigma) \vert \le& \sup_{i \in [N]} \| \sigma - \sigma_i \|_{L^2} \E[\He_k(G)^2]^{1/2} =  o_{d, \P}(1), \label{eq:control_sigma_i}
\end{align}
for any fixed integer $k$. 

\noindent
{\bf Step 2. Expansion of $\bU$. } Denote $\bu_i = \bw_i / \| \bw_i \|_2$, then we have
\begin{equation}
U_{ij} = \underbrace{\zeta_0(\sigma_i) \zeta_0(\sigma_j)}_{T_{0, ij}} + \underbrace{\zeta_1(\sigma_i) \zeta_1(\sigma_j)\< \bu_i, \bu_j\>}_{T_{1, ij}} + \underbrace{\zeta_2(\sigma_i) \zeta_2(\sigma_j) \frac{\< \bu_i, \bu_j\>^2}{2}}_{T_{2, ij}} +  \underbrace{\sum_{k\geq 3} \zeta_k(\sigma_i) \zeta_k(\sigma_j) \frac{\< \bu_i, \bu_j\>^k}{ k!}}_{T_{3, ij}}. 
\label{eq:decomposition_U_QF}
\end{equation}
We define
\[
\bT_k = \left(\zeta_k(\sigma_i) \zeta_k(\sigma_j)\frac{ \< \bw_i, \bw_j\>^k}{k! } \right)_{i, j \in [N]}. 
\]

\noindent
{\bf Step 3. Term $\bT_0$. } By definition of $\mu_i$, we have
\[
\bT_0 = (\zeta_0(\sigma_i) \zeta_0(\sigma_j))_{i, j \in [N]} = \bD_{0} [ (\lambda_2/2)^2 (\| \bw_i \|_2^2 - 1) (\| \bw_j \|_2^2 - 1)]_{i, j \in [N]} \bD_{0},
\]
where (by the assumption that $\E_G[\sigma(G)] = 0$)
\[
(\bD_0)_{ii} =  \frac{\zeta_0(\sigma_i) }{   \lambda_2 (\| \bw_i \|_2^2 - 1)/2}= \E \Big[ \frac{\sigma(\| \bw_i \| G) - \sigma(G)}{  \| \bw_i \|_2 - 1}\Big] \cdot \frac{1}{ \lambda_2 (\| \bw_i \|_2 + 1)/2}.
\]
Let us show:
\begin{equation}
\lim_{t \to 1} \E\Big[\frac{\sigma(tG) - \sigma(G) }{t - 1} \Big] = \lambda_2(\sigma), 
\label{eq:conv_lambda_2_dom}
\end{equation}
or equivalently:
\[
\lim_{t \to 1} \E \Big[ \frac{ \sigma(tG) - \sigma(G) }{t - 1} - (G^2 - 1) \sigma(G) \Big] = 0
\]
Recall the change of variable \eqref{eq:change_var_hermite} and do a first order Taylor expansion of the exponential: there exists a function $\xi ( G) \in [0,G]$ such that
\[
\begin{aligned}
& \E \Big[ \frac{ \sigma(tG) - \sigma(G) }{t - 1} - (G^2 - 1) \sigma(G) \Big] \\
= &\E \Big[ \sigma(G) \Big( \exp( G^2 (1 - 1/t^2)/2)  - t - t(t-1) (G^2 - 1) \Big) \Big] \cdot \frac{1}{t(t-1)} \\
= & \E \Big[ \sigma(G) (t-1) \Big(  1 - G^2 [2t + 1 ] /(2t^2)  +  G^4 (t+1)^2/(8t^4) \exp ( \xi(G)^2 (1-1/t^2)/2) \Big) \Big] \cdot \frac{1}{t} .
\end{aligned}
\]
We see that the integrand goes to zero as $t \to 1$.  For $|t - 1|$ sufficiently small, we have
\[
\frac{\Big\vert \exp( G^2 (1 - 1/t^2)/2)  - t - t(t-1) (G^2 - 1) \Big\vert }{| t-1|} \leq 2 + 2 G^2 + 2 G^4  \exp( G^2/5 ),
\]
which is squared integrable. Recalling that $\sigma \in  L^2(\R , \gamma)$, we obtain \eqref{eq:conv_lambda_2_dom} by dominated convergence.

Hence, combining \eqref{eq:sup_wi_RF_QF} and \eqref{eq:conv_lambda_2_dom} gives 
\[
\| \bD_0 - \id_d \|_{\op} = o_{d, \P}(1). 
\]
Furthermore, for $\bmu = (\mu_i)_{i\in[N]}$ with $\mu_i = \lambda_2 (\| \bw_i \|_2^2 - 1)/2$, we have
\[
\E [ \| \bmu \bmu^\sT \|_{\op} ] = \E [ \| \bmu \|_2^2] = \frac{\lambda_2^2}{4}	N \E [ (\| \bw_i \|^2_2 - 1)^2 ] = \frac{\lambda_2^2}{2}	N \| \bGamma \|_F^2 \leq \frac{\lambda_2^2}{2}	N^2 \| \bGamma \|_{\op}^2 = O_{d,\P} (1),
\]
where the last equality comes from assumption {\bf A2}.  We  get
\begin{equation}
  \| \bT_0 - \bmu \bmu^\sT \|_{\op} \leq 2 \| \bD_0 - \id_d \|_{\op}  \| \bmu \bmu^\sT \|_{\op}  ( \| \bD_0 \|_{\op} +1) = o_{d, \P}(1).
  \label{eq:U_bound_1_QF}
\end{equation}

\noindent
{\bf Step 4. Term $\bT_1$. }
For $\bT_1$, we have
\[
\bT_1 = (\zeta_1(\sigma_i) \zeta_1(\sigma_j)\< \bu_i, \bu_j\>)_{i, j \in [N]} = \bD_1 \bW^\sT \bW \bD_1,
\]
where 
\[
\bD_1 = \diag( (\zeta_1(\sigma_i)) / \| \bw_i \|_2 ). 
\]
By the uniform convergence of $\zeta_1(\sigma_i)$ to $\lambda_1(\sigma)$, cf Eq.~\eqref{eq:control_sigma_i}, we have
\[
\| \bD_1 - \lambda_1(\sigma) \id_d \|_{\op} = o_{d, \P}(1). 
\]
Moreover, we have
\[
\| \bW^\sT \bW \|_{\op} =\| \bW \bW^\sT \|_{\op} \leq \| \sqrt{d} \bGamma^{1/2} \|_{\op}^2 \| \bG \bG^\sT \|_{\op} = O_{d, \P}(1),
\]
where we denoted by $\bG$ the matrix with columns $\bg_ i \sim \normal ( \bzero, \id_d/d)$. Hence, we have 
\begin{equation}
\| \bT_1 -  \lambda_1^2 \bW^\sT \bW \|_{\op} \le \| \bD_1 - \lambda_1 \id_d \|_{\op} \| \bW^\sT \bW \|_{\op} ( \| \bD_1 \|_{\op} +1 ) = o_{d, \P}(1).
\label{eq:U_bound_2_QF}
\end{equation}

\noindent
{\bf Step 5. Term $\bT_2$. } We have
\[
\bT_2 = (\zeta_2(\sigma_i) \zeta_2(\sigma_j)\< \bu_i, \bu_j\>^2/2)_{i, j \in [N]} = \bD_2 (\<\bw_i, \bw_j\>^2/2)_{i, j \in [N]} \bD_2,
\]
where 
\[
\bD_2 = \diag( (\zeta_2(\sigma_i)) / \| \bw_i \|_2^2 ).
\]
By the uniform convergence of $\zeta_2(\sigma_i)$ to $\lambda_2(\sigma)$, we have
\[
\| \bD_2 - \lambda_2 \id_d \|_{\op} = o_{d, \P}(1).
\]
Moreover, we have (see below)
\[
\| (\<\bw_i, \bw_j\>^2)_{i, j \in [N]} \|_{\op} = O_{d, \P}(1).
\]
Hence, we have 
\[
\| \bT_2 -  \lambda_2^2 (\< \bw_i, \bw_j\>^2/2)_{i, j \in [N]} \|_{\op} \le \| \bD_2 - \lambda_2 \id_d \|_{\op} \| (\< \bw_i, \bw_j\>^2/2)_{i, j \in [N]} \|_{\op} (\| \bD_2 \|_{\op} + 1) = o_{d, \P}(1).
\]
Moreover, by the estimates in proof of Theorem 2.1 in \cite{el2010spectrum}, we have 
\[
\| (\< \bw_i, \bw_j\>^2/2)_{i, j \in [N]} - [\tr(\bGamma^2)/2] \ones \ones^\sT - (1/2) \id_N \|_{\op} = o_{d, \P}(1). 
\]
Hence, we get 
\begin{equation}
\| \bT_2 -  \lambda_2^2 [\tr(\bGamma^2)/2] \ones \ones^\sT - [\lambda_2^2/2] \id_N \|_{\op} = o_{d, \P}(1).
\label{eq:U_bound_3_QF}
\end{equation}

\noindent
{\bf Step 6. Term $\sum_{k\ge 3}\ddiag(\bT_k)$. } Denote $\ddiag(\bT_k)$ the diagonal matrix composed of diagonal entries of $\bT_k$. We have 
\[
\begin{aligned}
&\Big\vert \sum_{k\ge 3} ( (T_k)_{ii} - \lambda_{k}(\sigma)^2 / k!) \Big\vert = \Big\vert \| \sigma_i \|_{L^2}^2 - \sum_{k = 0}^2 \zeta_k(\sigma_i)^2 / k! - \| \sigma \|_{L^2}^2 + \sum_{k = 0}^2 \lambda_k(\sigma)^2 / k! \Big\vert \\
\le& \| \sigma - \sigma_i \|_{L^2} [ 2\| \sigma \|_{L^2} + \| \sigma - \sigma_i \|_{L^2} ] + \sum_{k = 0}^2 \vert \zeta_k(\sigma_i)^2 - \lambda_k(\sigma)^2 \vert / k!.
\end{aligned}
\]
Note that we have shown (cf Eq.~\eqref{eq:control_sigma_i})
\[
\sup_{i \in [N]} \max\Big\{ \| \sigma - \sigma_i \|_{L^2}, \max_{k = 0, 1, 2} \vert \zeta_k(\sigma_i) - \lambda_k(\sigma) \vert \Big\} = o_{d, \P}(1). 
\]
Therefore, we have 
\begin{equation}
\Big\| \sum_{k \ge 3} \ddiag(\bT_k) - ( \Tilde \lambda - \lambda_2^2 / 2) \id_N \Big\|_{\op} = o_{d, \P}(1).
\label{eq:U_bound_4_QF}
\end{equation}

\noindent
{\bf Step 7. Term $\sum_{k\ge 3} [ \bT_k - \ddiag(\bT_k)] $. } We have 
\[
\begin{aligned}
&\Big\Vert \sum_{k \ge 3} [ \bT_k - \ddiag(\bT_k)] \Big\Vert_F \le \sum_{k \ge 3} \| \bT_k - \ddiag(\bT_k) \|_F \\
\le& \sum_{k\ge 3} \Big[ \Big( \sum_{i, j = 1}^N \zeta_k(\sigma_i)^2 \zeta_k(\sigma_j)^2 \Big) \Big(\sup_{i \neq j }\< \bu_i, \bu_j \>^{2k} / (k!)^2 \Big) \Big]^{1/2}\\
\le& \Big[ \sum_{k \ge 3} \sum_{i=1}^N \zeta_k(\sigma_i)^2 / k! \Big] \max_{i \neq j} \< \bu_i, \bu_j\>^3  \\
\le& \| \sigma_i \|_{L^2}^2  \times N  \max_{i \neq j} \< \bu_i, \bu_j\>^3.
\end{aligned}
\]
Note we have $\max_{i \in [N]} \| \sigma_i \|_{L^2}^2 = O_{d, \P}(1)$. Moreover, we have (see for example Lemma 10 in \cite{ghorbani2019linearized})
\[
 \max_{i \neq j} \< \bu_i, \bu_j\>^3= \tilde O_{d, \P}(d^{- 3/2}). 
\]
Therefore, we have 
\begin{equation}
\Big\Vert \sum_{k \ge 3} [ \bT_k - \ddiag(\bT_k) ] \Big\Vert_F = o_{d, \P}(1). 
\label{eq:U_bound_5_QF}
\end{equation}

Combining the bounds \eqref{eq:U_bound_1_QF}, \eqref{eq:U_bound_2_QF}, \eqref{eq:U_bound_3_QF}, \eqref{eq:U_bound_4_QF} and \eqref{eq:U_bound_5_QF} into the decomposition \eqref{eq:decomposition_U_QF} proves the lemma. 
\end{proof}

\subsubsection{Approximation of the $\bV$ vector}

\begin{lemma}\label{lem:QF_V}
Under the assumptions of Theorem \ref{thm:RF}, define $\bV = (V_1, \ldots, V_N)^\sT$ with
\[
V_i = \E_\bx[f_*(\bx) \sigma(\< \bw_i, \bx\>)]
\]
where $(\bw_i)_{i \in [N]} \sim \normal(\bzero, \bGamma)$ independently. Then as $N/d = \rho$ with $d \to \infty$, we have 
\[
\| \bV - \tau  \ones / \sqrt{d} \|_2^2 = \| \bB \|_F^2 \cdot o_{d, \P}( 1 ),
\]
where
\[
\tau = \sqrt{d} \cdot \lambda_2\tr(\bB \bGamma).
\]
\end{lemma}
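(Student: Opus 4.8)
The plan is to first derive an exact formula for each coordinate $V_i$, and then prove the claimed $\ell^2$-closeness to a constant vector by second-moment estimates. \emph{For the closed form:} since $\E(f_*)=0$ forces $b_0=-\tr(\bB)$, the target is a pure degree-two Hermite function, $f_*(\bx)=\langle\bB,\bx\bx^\sT-\id\rangle$. Conditioning on $\bw_i$, write $\bu_i=\bw_i/\|\bw_i\|_2$ and $\sigma_i(t)=\sigma(\|\bw_i\|_2\,t)$; then $\bx\mapsto\sigma(\langle\bw_i,\bx\rangle)=\sigma_i(\langle\bu_i,\bx\rangle)$ has degree-two Hermite component $\tfrac12\zeta_2(\sigma_i)\,\He_2(\langle\bu_i,\bx\rangle)=\tfrac12\zeta_2(\sigma_i)\,\langle\bu_i\bu_i^\sT,\bx\bx^\sT-\id\rangle$, where $\zeta_2(\sigma_i)=\E_{G\sim\normal(0,1)}[\sigma(\|\bw_i\|_2\,G)\He_2(G)]$ (this is the $\bV$-counterpart of the Hermite expansion of $\bU$ carried out in Lemma~\ref{lem:RF_QF_kernel}). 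Since $f_*$ is orthogonal to all Hermite levels other than $2$ under $\bx\sim\normal(\bzero,\id_d)$, and $\E\big[\langle\bB,\bx\bx^\sT-\id\rangle\,\langle\bu_i\bu_i^\sT,\bx\bx^\sT-\id\rangle\big]=2\,\bu_i^\sT\bB\bu_i$ (covariance of two quadratic forms), I get the exact identity $V_i=\zeta_2(\sigma_i)\,\bu_i^\sT\bB\bu_i=\zeta_2(\sigma_i)\,\langle\bw_i,\bB\bw_i\rangle/\|\bw_i\|_2^2$. As $\tau/\sqrt d=\lambda_2\tr(\bB\bGamma)$, the lemma reduces to showing $\sum_{i\le N}\big(V_i-\lambda_2\tr(\bB\bGamma)\big)^2=\|\bB\|_F^2\,o_{d,\P}(1)$.

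\emph{For the concentration:} I would work on the bulk event $\mathcal E=\bigcap_{i\le N}\mathcal E_i$ with $\mathcal E_i=\{\,|\,\|\bw_i\|_2^2-1|\le\delta_0\}$, for a small fixed $\delta_0>0$ (depending only on $c_1<1$) chosen so that $\sigma_i$ stays square-integrable and $r\mapsto\zeta_2(\sigma_r)/r^2$ stays $C^1$ whenever $|r-1|\le\delta_0$. Writing $\bw_i=\bGamma^{1/2}\bg_i$ with $\bg_i\sim\normal(\bzero,\id_d)$, the quadratic form $\|\bw_i\|_2^2=\bg_i^\sT\bGamma\bg_i$ has mean $\tr(\bGamma)=1$ and, by Assumption {\bf A2} ($\|\bGamma\|_{\op}\le C/d$, hence $\|\bGamma\|_F^2\le C/d$), a sub-exponential deviation bound $\P(|\,\|\bw_i\|_2^2-1|\ge\delta_0)\le 2e^{-cd}$, so a union bound over $N=\rho d$ coordinates gives $\P(\mathcal E)\to 1$. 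Since $\mathbf 1_{\mathcal E}\le\mathbf 1_{\mathcal E_i}$, it suffices to bound $\sum_i\E\big[\mathbf 1_{\mathcal E_i}(V_i-\lambda_2\tr(\bB\bGamma))^2\big]$ and invoke Markov. I would split $V_i-\lambda_2\tr(\bB\bGamma)=\big(\zeta_2(\sigma_i)/\|\bw_i\|_2^2-\lambda_2\big)\langle\bw_i,\bB\bw_i\rangle+\lambda_2\big(\langle\bw_i,\bB\bw_i\rangle-\tr(\bB\bGamma)\big)$. The second term is controlled by $\Var(\langle\bw_i,\bB\bw_i\rangle)=2\tr((\bB\bGamma)^2)\le 2\|\bGamma\|_{\op}^2\|\bB\|_F^2=O(\|\bB\|_F^2/d^2)$. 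The first term is controlled by a Taylor/dominated-convergence argument, parallel to Step~3 in the proof of Lemma~\ref{lem:RF_QF_kernel} and using {\bf A1} with $c_1<1$, which gives $\zeta_2(\sigma_1)=\lambda_2$ and a local Lipschitz bound $|\zeta_2(\sigma_i)/\|\bw_i\|_2^2-\lambda_2|\le L\,|\,\|\bw_i\|_2^2-1|$ on $\mathcal E_i$, combined with the Gaussian-chaos (Wick) moment bounds $\E[\langle\bw_i,\bB\bw_i\rangle^4]=O(\|\bB\|_F^4/d^2)$ and $\E[(\|\bw_i\|_2^2-1)^4]=O(1/d^2)$ and Cauchy--Schwarz. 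Each coordinate then contributes $O(\|\bB\|_F^2/d^2)$, so summing the $N=\rho d$ coordinates gives $\|\bB\|_F^2\,O(1/d)$; Markov together with $\P(\mathcal E)\to 1$ then yields $\sum_{i\le N}(V_i-\lambda_2\tr(\bB\bGamma))^2=\|\bB\|_F^2\,o_{d,\P}(1)$, which is the claim.

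\emph{Main obstacle.} The difficulty is entirely quantitative bookkeeping: the target error must come out as $\|\bB\|_F^2\times o(1)$ \emph{after summing $\Theta(d)$ coordinates}, so one cannot afford the naive bound $|\langle\bw_i,\bB\bw_i\rangle|\le\|\bB\|_{\op}\|\bw_i\|_2^2$ inside the $\ell^2$ sum — that only produces $\|\bB\|_F^2\times O(1)$. Every estimate must instead be routed through the sharp moments $\E[\langle\bw_i,\bB\bw_i\rangle^2]=O(\|\bB\|_F^2/d)$, which genuinely exploit $\|\bGamma\|_{\op}=O(1/d)$, together with the $O(1/d)$-smallness of the fluctuation of $\|\bw_i\|_2^2$ and of $\zeta_2(\sigma_i)-\lambda_2$. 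The restriction to the bulk event $\mathcal E$ is a minor but necessary technicality: $\zeta_2(\sigma_r)$ (and $1/\|\bw_i\|_2^2$) can blow up on an event of exponentially small probability, so the relevant unconditional expectations could otherwise be infinite.
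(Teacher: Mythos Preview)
Your proposal is correct and follows essentially the same route as the paper. Both derive the identical closed form $V_i=\zeta_2(\sigma_i)\,\langle\bw_i,\bB\bw_i\rangle/\|\bw_i\|_2^2$ via the degree-two Hermite projection, then telescope to $\lambda_2\tr(\bB\bGamma)$ using (i) differentiability of $r\mapsto\zeta_2(\sigma_r)$ at $r=1$ and smallness of $\|\bw_i\|_2-1$, and (ii) the variance bound $\Var(\langle\bw_i,\bB\bw_i\rangle)=2\|\bGamma^{1/2}\bB\bGamma^{1/2}\|_F^2=O(\|\bB\|_F^2/d^2)$. The only cosmetic differences are that the paper splits the telescoping into three steps ($\zeta_2(\sigma_i)\to\lambda_2$, then $1/\|\bw_i\|_2^2\to 1$, then $\langle\bw_i,\bB\bw_i\rangle\to\tr(\bB\bGamma)$) whereas you merge the first two, and the paper controls the cross term by a $\sup\times\text{sum}$ bound ($\sup_i\langle\bw_i,\bB\bw_i\rangle/\|\bw_i\|_2^2=o_{d,\P}(1)$ via sub-Gaussian concentration of $\|\bB^{1/2}\bw_i\|_2$, times $\sum_i(\|\bw_i\|_2-1)^2=O_{d,\P}(1)$) whereas you use Cauchy--Schwarz on fourth moments together with an explicit truncation to the bulk event; both work and yield the same $O(\|\bB\|_F^2/d)$ after summing over $N=\rho d$ coordinates.
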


\begin{proof}[Proof of Lemma \ref{lem:QF_V}] Without loss of generality, we assume $\| \bB \|_F = 1$ in the proof (it suffices to divide $V_i$ by $\| \bB \|_F$). Consider $\bw_i \in \R^d$. Take $\bR$ to be an orthogonal matrix such that $\bR \bw_i = \| \bw_i \|_2 \be_1$, then we have   
\[
\begin{aligned}
V_i =& \E_\bx [ f_* ( \bR^\sT \bx ) \sigma (\| \bw_i \|_2 x_1 ) ] \\
= & \E_\bx [ ( \< \bx , \bR \bB \bR^\sT \bx \> - \tr (\bB) ) \sigma (\| \bw_i \|_2 x_1 ) ] \\
= & \E_{x_1} \Big[ \Big(  x_1^2  \frac{\< \bw_i , \bB \bw_i \>}{\| \bw_i \|_2^2} + \tr ( \bP_{\perp \bw_i} \bB) - \tr (\bB) \Big) \sigma (\| \bw_i \|_2 x_1 )  \Big] \\
=& \E_{x_1} \Big[   (x_1^2 - 1)  \frac{\< \bw_i , \bB \bw_i \>}{\| \bw_i \|_2^2}  \sigma (\| \bw_i \|_2 x_1 )  \Big] \\ 
\equiv & \frac{\< \bw_i , \bB \bw_i \>}{\| \bw_i \|_2^2} \zeta_2 ( \sigma_i ),
\end{aligned}
\]
where $\bP_{\perp \bw_i}$ is the projection on the hyperplane orthogonal to $\bw_i$, and we recall the definition of $\zeta_2 ( \sigma_i) $ of Lemma \ref{lem:RF_QF_kernel}:
\[
\zeta_2 ( \sigma_i ) =\E_G [(G^2 - 1) \sigma ( \| \bw_i \|_2 G ) ],
\]
with $G$ a standard normal random variable. 

We define the following interpolating variables:
\[
V^{(1)}_i =  \frac{\< \bw_i , \bB \bw_i \>}{\| \bw_i \|_2^2} \lambda_2 , \qquad V^{(2)}_i =  \< \bw_i , \bB \bw_i \> \lambda_2, \qquad V^{(3)}_i = \tr( \bGamma \bB ) \lambda_2,
\]
and the associated vectors $\bV^{(1)}$, $\bV^{(2)}$ and $\bV^{(3)}$. We bound successively the distance between these vectors. We will denote by $\bP_{\bw_i}$ the projection onto vector $\bw_i$. First, we consider:
\[
\| \bV - \bV^{(1)} \|_2^2 = \sum_{i=1}^N \tr ( \bP_{\bw_i} \bB )^2 ( \zeta_2 (\sigma_i) - \lambda_2 )^2.
\]
One can check, using a similar argument as for Eq.~\eqref{eq:conv_lambda_2_dom} and dominated convergence, that
\begin{equation}
\lim_{t \to 1 } \frac{\E [ (G^2 - 1 ) ( \sigma (t G) - \sigma (G) ) ]}{t - 1} = \lambda_4 (\sigma) + 2 \lambda_2 ( \sigma).
\label{eq:diff_lambda_2_RF_QF}
\end{equation}
Hence, recalling \eqref{eq:sup_wi_RF_QF}, we have 
\begin{equation}
\| \bV - \bV^{(1)} \|_2^2 = O_{d,\P} \Big( \Big( \sup_{i\in[N]} \tr ( \bP_{\bw_i} \bB ) \Big)^2  \sum_{i = 1}^N ( \|\bw_i \|_2 - 1 )^2 \Big).
\label{eq:V_V1_diff_exp_RF_QF}
\end{equation}
Let us first show that the sum is bounded with high probability: denoting $\bg \sim \normal ( \bzero , \id_d ) $, classical sub-Gaussian concentration inequalities (see for example Theorem 6.3.2 in \cite{vershynin2010introduction}) shows that
\begin{equation}
\Big\| \| \bGamma^{1/2} \bg \|_2 - \| \bGamma^{1/2} \|_F \Big\|_{\psi_2} \leq C \| \bGamma^{1/2} \|_{\text{op}},
\label{eq:subGaussian_orlicz_RF_QF}
\end{equation}
where $\| \cdot \|_{\psi_2}$ denotes the sub-Gaussian Orlicz norm. By assumption, we have $\| \bGamma^{1/2} \|_{\text{op}} = \| \bGamma \|_{\text{op}}^{1/2} = O_d (d^{-1/2})$, and $\| \bGamma^{1/2} \|_F = \sqrt{\tr \bGamma} = 1$. Hence, for $\bw_i \sim \normal ( \bzero , \bGamma ) $, we have 
\begin{equation}
\Big\| \sqrt{d} \| \bw_i \|_2 - \sqrt{d} \Big\|_{\psi_2} \leq C.
\label{eq:orlicz_normwi_RF_QF}
\end{equation}
Therefore, we have
\begin{equation}
\sum_{i = 1}^N ( \|\bw_i \|_2 - 1 )^2 = O_{d, \P}(1). 
 \label{eq:bound_sum_RF_QF}
\end{equation}
Furthermore, we readily have (for example from \eqref{eq:sup_wi_RF_QF})
\begin{equation}
    \sup_{i\in [N]} \| \bw_i \|^{-4} = O_{d,\P} (1).
    \label{eq:bound_inv_wi_RF_QF}
\end{equation}
Noticing that $\tr ( \bw_i \bw_i^\sT \bB) = \| \bB^{1/2} \bw_i \|^2_2$ and by the same argument as for \eqref{eq:subGaussian_orlicz_RF_QF}, we have:
\begin{equation}
\Big\| \| \bB^{1/2} \bGamma^{1/2} \bg \|_2 - \E[\| \bB^{1/2}  \bGamma^{1/2} \bg \|_2 ]  \Big\|_{\psi_2} \leq C \| \bB^{1/2} \bGamma^{1/2} \|_{\text{op}}.
\label{eq:orlicz_Bwi_RF_QF}
\end{equation}
By assumption {\bf A2}, we have $\| \bB^{1/2} \bGamma^{1/2} \|_{\text{op}} \leq \| \bB^{1/2}  \|_{\text{op}} \| \bGamma^{1/2} \|_{\text{op}}= O_d ( d^{-1/2})$ and 
\[
\E [ \| \bB^{1/2}  \bGamma^{1/2} \bg \|_2 ] \leq ( \E [ \| \bB^{1/2}  \bGamma^{1/2} \bg \|_2^2])^{1/2} = \tr ( \bGamma \bB )^{1/2} \leq   \|\bGamma \|_F^{1/2} \| \bB \|_F^{1/2} \leq \| \bGamma \|_{\op}^{1/4} \tr ( \bGamma)^{1/4} = O_d (d^{-1/2}),
\]
which combined with \eqref{eq:orlicz_Bwi_RF_QF} yields
\begin{equation}
\sup_{i \in [N]} \| \bB^{1/2} \bw_i \|_2^2 = o_{d,\P} (1).
\label{eq:sup_Bwi_RF_QF}
\end{equation}
Combining the bounds \eqref{eq:bound_sum_RF_QF}, \eqref{eq:bound_inv_wi_RF_QF} and \eqref{eq:sup_Bwi_RF_QF} into \eqref{eq:V_V1_diff_exp_RF_QF}, we get
\begin{equation}
\| \bV - \bV^{(1)} \|^2_2 = o_{d,\P} (1).
\label{eq:V_V1_RF_QF}
\end{equation}
Consider now 
\begin{equation}\label{eqn:bound_V1_v2}
\begin{aligned}
\| \bV^{(1)} - \bV^{(2)} \|_2^2 = &\sum_{i=1}^N \lambda_2^2 \< \bw_i , \bB \bw_i \>^2 \Big( \frac{1}{\| \bw_i \|_2^2} - 1 \Big)^2 \\
\leq &\lambda_2^2 \Big( \sup_{i \in [N]} \| \bB^{1/2} \bw_i \|^2_2 / \| \bw_i \|_2^2 \Big) \sum_{i=1}^N  (\|\bw_i \|_2^2 - 1)^2.
\end{aligned}
\end{equation}
We have
\[
\begin{aligned}
\E_{\bw_i \sim \normal (\bzero , \bGamma ) } [(\|\bw_i \|_2^2 - 1)^2] = & \E_{\bg \sim \normal (\bzero , \id ) } [(\< \bg \bg^\sT , \bGamma \> - \tr( \bGamma ) )^2] = 2 \| \bGamma \|^2_F =  O_{d , \P} (d^{-1}).
\end{aligned}
\]
Hence we must have
\[
\sum_{i=1}^N  (\|\bw_i \|_2^2 - 1)^2 = O_{d, \P}(1),
\]
which, combined with \eqref{eq:sup_Bwi_RF_QF} and \eqref{eqn:bound_V1_v2}, yields
\begin{equation}
\| \bV^{(1)} - \bV^{(2)} \|_2^2 =  o_{d,\P} (1).
\label{eq:V1_V2_RF_QF}
\end{equation}
Consider the last comparison:
\[
\begin{aligned}
\| \bV^{(2)} - \bV^{(3)} \|_2^2 = &\sum_{i=1}^N \lambda_2^2 \Big( \< \bw_i , \bB \bw_i \> - \tr ( \bGamma \bB ) \Big)^2.
\end{aligned}
\]
Taking the expectation:
\[
\begin{aligned}
\E_{\bw_i \sim \normal (\bzero , \bGamma ) } [( \< \bw_i , \bB \bw_i \> - \tr ( \bGamma \bB ) )^2] = & \E_{\bg \sim \normal (\bzero , \id ) } [(\< \bg \bg^\sT , \bGamma^{1/2} \bB \bGamma^{1/2} \> - \tr( \bGamma \bB) )^2] \\
=& 2 \| \bGamma^{1/2} \bB \bGamma^{1/2} \|^2_F \\
\leq &  2 \| \bGamma \|^2_{\text{op}} \| \bB \|^2_F=  O_{d} (d^{-2}).
\end{aligned}
\]
We conclude that
\[
\sum_{i=1}^N \Big( \< \bw_i , \bB \bw_i \> - \tr ( \bGamma \bB ) \Big)^2 = o_{d,\P} (1),
\]
and therefore
\begin{equation}
\| \bV^{(2)} - \bV^{(3)} \|_2^2 =  o_{d,\P} (1),
\label{eq:V2_V3_RF_QF}
\end{equation}
where $\bV^{(3)} = \lambda_2 \tr (\bGamma \bB ) \ones  $. Combining the above three bounds \eqref{eq:V_V1_diff_exp_RF_QF}, \eqref{eq:V1_V2_RF_QF} and \eqref{eq:V2_V3_RF_QF} yields the desired result.
\end{proof}

\subsubsection{Calculating $\ones^\sT \bU_0^{-1} \ones / d$}

The following proposition is stated in slightly more general terms, in order to be used in both the proofs of Theorem \ref{thm:RF} and Theorem \ref{thm:RFMixture}.

\begin{proposition}\label{prop:one_U_one_expression}
Let $(\bw_i)_{i \in [N]} \sim \normal(\bzero, \bGamma)$ independently, where $\bGamma$ satisfies assumption {\bf A2} (resp. {\bf B2}). Denote by $\lambda_k =\E_{G \sim \normal(0, 1)}[\sigma(G) \He_k(G)]$ the $k$-th Hermite coefficient of $\sigma$. Define $\tilde{\lambda} = \E_{G \sim \normal(0, 1)}[\sigma(G)^2] - \lambda^2_1$. Consider $\kappa \equiv \kappa (d)$ positive constants that are uniformly upper bounded. Define 
\[
\bU_0 = \bA_0 +  \kappa \ones \ones^\sT / d + \bmu \bmu^\sT,
\]
where 
\[
\begin{aligned}
\bA_0 =& \tilde \lambda \id_N + \lambda_1^2 \bW^\sT \bW, \\
\mu_i =& \lambda_2 (\| \bw_i \|_2^2 - 1)/ 2.
\end{aligned}
\]

Then we have 
\[
\< \ones, \bU_0^{-1} \ones\> /d = \psi / (1 + \kappa \psi) + o_{d, \P}(1),
\]
where $\psi>0$ is the unique solution of 
\begin{align}
- \tilde{\lambda}= - \frac{\rho}{\psi} + \int \frac{\lambda_1^2 t}{1 + \lambda_1^2 t \psi} \mathcal{D}(\de t)\,,
\end{align}
where $\mathcal{D}$ is the empirical distribution of eigenvalues of $d \cdot \bGamma$. 
\end{proposition}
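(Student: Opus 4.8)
The plan is to strip the two rank-one perturbations off $\bU_0=\bA_0+(\kappa/d)\ones\ones^\sT+\bmu\bmu^\sT$ with the Sherman--Morrison formula, reducing everything to three scalars built from the resolvent of $\bA_0=\tilde\lambda\id_N+\lambda_1^2\bW^\sT\bW$:
\[
a_d:=\tfrac1d\ones^\sT\bA_0^{-1}\ones,\qquad b_d:=\tfrac1{\sqrt d}\ones^\sT\bA_0^{-1}\bmu,\qquad c_d:=\bmu^\sT\bA_0^{-1}\bmu .
\]
Applying Sherman--Morrison first to $\bU_1:=\bA_0+\bmu\bmu^\sT$ and then to $\bU_0=\bU_1+(\kappa/d)\ones\ones^\sT$ yields the exact identity
\[
\tfrac1d\ones^\sT\bU_0^{-1}\ones=\frac{x_d}{1+\kappa x_d},\qquad x_d:=\tfrac1d\ones^\sT\bU_1^{-1}\ones=a_d-\frac{b_d^2}{1+c_d}.
\]
Since $\sigma$ is nonlinear with $\lambda_0=0$ (Assumption {\bf A1}) we have $\tilde\lambda>0$, so $\|\bA_0^{-1}\|_\op\le 1/\tilde\lambda$, $\bU_1\succ0$, and all quantities are finite and $x_d>0$. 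The map $x\mapsto x/(1+\kappa x)$ satisfies $|x/(1+\kappa x)-\psi/(1+\kappa\psi)|\le|x-\psi|$ for $x,\psi\ge0$ and $\kappa\ge0$, so it suffices to prove $a_d\to\psi$ and $b_d\to0$ in probability, together with $c_d=O_{d,\P}(1)$; the last is immediate from $c_d\le\|\bmu\|_2^2/\tilde\lambda$ and $\E\|\bmu\|_2^2=\tfrac12 N\lambda_2^2\tr(\bGamma^2)=(N/d)\kappa=O_d(1)$.

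For $a_d$, the key observation is that the law of $\bA_0$ is invariant under $\bA_0\mapsto\bO^\sT\bA_0\bO$ for every fixed $N\times N$ orthogonal matrix $\bO$ (writing $\bW=\bGamma^{1/2}\bZ$ with $\bZ$ standard Gaussian, this follows from $\bZ\bO\stackrel{d}{=}\bZ$). Hence $\E[\ones^\sT\bA_0^{-1}\ones]=\|\ones\|_2^2\,\E[\tfrac1N\tr\bA_0^{-1}]=\E[\tr\bA_0^{-1}]$, and a Gaussian Poincaré bound (using $\|\bGamma\|_\op=O_d(1/d)$ and $\|\bA_0^{-1}\|_\op\le 1/\tilde\lambda$, which make the relevant gradient $O_{d,\P}(\sqrt d)$ while $\tr\bA_0^{-1}$ is of order $d$) gives $a_d=\rho\cdot\tfrac1N\tr\bA_0^{-1}+o_{d,\P}(1)$. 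To identify $g:=\lim\tfrac1N\tr\bA_0^{-1}$ I would run the standard leave-one-out/cavity computation on the $d$ rank-one summands of $\bW^\sT\bW$ written in the eigenbasis of $\bGamma$: combining $(\tilde\bz^{(k)})^\sT\bA_0^{-1}\tilde\bz^{(k)}\approx \tr(\bA_0^{-1})/(1+\lambda_1^2\gamma_k\tr\bA_0^{-1})$ with $\tr(\bA_0\bA_0^{-1})=N$ gives $1=\tilde\lambda g+\lambda_1^2 g\sum_{k\le d}\gamma_k/(1+\lambda_1^2\gamma_k N g)$; substituting $\psi:=\rho g$ (so $Ng=d\psi$) and using $\tfrac1d\sum_{k\le d}\delta_{d\gamma_k}\to\mathcal D$ produces exactly the stated fixed-point equation. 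Its solution $\psi>0$ is unique because $\psi\mapsto\psi\big(\tilde\lambda+\int\tfrac{\lambda_1^2 t}{1+\lambda_1^2 t\psi}\mathcal D(\de t)\big)$ is continuous and strictly increasing from $0$ to $+\infty$ (here $\tilde\lambda>0$ is essential) and hits $\rho$ once; equivalently one may quote a known anisotropic Marchenko--Pastur deterministic equivalent for the resolvent of $\bW^\sT\bW$ at the point $-\tilde\lambda/\lambda_1^2$, which lies outside the spectrum. (If $\lambda_1=0$ then $\bA_0=\tilde\lambda\id_N$ and $a_d=\rho/\tilde\lambda=\psi$ exactly.)

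The main work is $b_d\to0$, and the difficulty is that $\bmu$ is \emph{not} independent of $\bA_0$: $\mu_i=\tfrac{\lambda_2}2((\bW^\sT\bW)_{ii}-1)$ involves the $i$-th column of $\bW$, which also enters $\bA_0$. I would bound $\E[b_d^2]=\tfrac1d\sum_{i,j}\E\big[\mu_i\mu_j(\bA_0^{-1}\ones)_i(\bA_0^{-1}\ones)_j\big]$ by a leave-one-(or two-)out decoupling. For $i\ne j$, replace $(\bA_0^{-1}\ones)_i$ and $(\bA_0^{-1}\ones)_j$ by the corresponding entries computed after deleting columns $i$ and $j$ of $\bW$ (a matrix independent of those two columns); the resulting expectation vanishes since $\E[\mu_i]=0$ and the $\bw$'s are independent, while the replacement error is controlled by rank-$\le2$ resolvent perturbation estimates together with $\|\bW\|_\op=O_{d,\P}(1)$ and $\|\bA_0^{-1}\|_\op\le1/\tilde\lambda$. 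The diagonal terms are handled by the one-column version of the same decoupling and the crude bounds $\E\mu_i^2=\kappa/d$, $\|\bA_0^{-1}\ones\|_2^2\le N/\tilde\lambda^2$, giving a contribution $O_d(\kappa\rho/(d\tilde\lambda^2))=o_d(1)$. (When $\lambda_1=0$, $b_d=(\tilde\lambda\sqrt d)^{-1}\sum_i\mu_i\to0$ by a plain variance bound.) Combining, $x_d=a_d-b_d^2/(1+c_d)=\psi+o_{d,\P}(1)$, and the Lipschitz inequality gives $\tfrac1d\ones^\sT\bU_0^{-1}\ones=\psi/(1+\kappa\psi)+o_{d,\P}(1)$.

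I expect the $b_d\to0$ step to be the only genuine obstacle: it requires careful accounting of the resolvent perturbation errors incurred by removing one or two columns of $\bW$, in particular checking that each such error, once multiplied by $\mu_i=O_{d,\P}(d^{-1/2})$, summed over the $O(d)$ indices and divided by $d$, still tends to $0$. The remaining ingredients --- the Sherman--Morrison algebra, Gaussian concentration, and the cavity derivation of the Marchenko--Pastur fixed point --- are routine and of the same flavour as the random-matrix estimates already used in the paper (e.g.\ in Lemma~\ref{lem:RF_QF_kernel}).
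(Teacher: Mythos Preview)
Your reduction via two Sherman--Morrison steps to the scalars $a_d,b_d,c_d$ is exactly the structure of the paper's Lemma~\ref{lem:expression_for_one_U_one} (they do it in one rank-$2$ Woodbury step, but the algebra is identical). Your identification $a_d\to\psi$ via orthogonal invariance of $\bA_0$ plus random-matrix theory matches Lemma~\ref{lem:rand_quad_form}; the only cosmetic differences are Poincar\'e vs.\ Hanson--Wright for the quadratic-form concentration, and a cavity derivation of the fixed point vs.\ a direct citation of Silverstein's equation.

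The genuine divergence is in proving $b_d=\tfrac{1}{\sqrt d}\,\ones^\sT\bA_0^{-1}\bmu\to 0$ (the paper's Lemma~\ref{lem:inner_prod_limit}). You propose a leave-one/two-column-out resolvent perturbation. The paper instead exploits a Gaussian-specific decoupling: first approximate $\bGamma$ by a \emph{block} matrix $\bGamma_\eps=\diag(\gamma_1\id_{d_1},\dots,\gamma_K\id_{d_K})$; then, within each isotropic block, use that $\|\bw_{i,k}\|_2$ is independent of $\bw_{i,k}/\|\bw_{i,k}\|_2$ to realize $\bA_0$ (in distribution, jointly with $\bmu$) via direction vectors drawn from an \emph{independent} copy $\tilde\bW$. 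Since $\bmu$ depends only on the norms, after an $o_{d,\P}(1)$ correction one obtains $\tilde\bA_0\perp\bmu$, and then
$\E\big[(\ones^\sT\tilde\bA_0^{-1}\bmu)^2/d\,\big|\,\tilde\bW\big]=\E[\mu_1^2]\cdot\ones^\sT\tilde\bA_0^{-2}\ones/d=O(1/d)$
is immediate. Your route would also work but is more laborious: the phrase ``the corresponding entries computed after deleting columns $i$ and $j$ of $\bW$'' is not literally well-defined (the dimension of $\bA_0$ drops), so you would need to go through Schur complements to express $(\bA_0^{-1}\ones)_i$ in terms of $\bw_i$ and the $(N{-}1)\times(N{-}1)$ minor, and then track the rank-one perturbation errors carefully enough that, after summing $O(d^2)$ off-diagonal terms and dividing by $d$, the total is still $o(1)$. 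The paper's trick sidesteps all of that resolvent bookkeeping at the price of the extra (but easy) block-approximation step; your approach, once made precise, has the compensating advantage of not relying on the special Gaussian fact that norm and direction are independent.
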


The proof of Proposition \ref{prop:one_U_one_expression} is a direct combination of Lemma \ref{lem:inner_prod_limit}, \ref{lem:expression_for_one_U_one}, and \ref{lem:rand_quad_form} below. 

\begin{lemma} \label{lem:inner_prod_limit}
Let $(\bw_i)_{i \in [N]} \sim \normal(\bzero, \bGamma)$ independently. Assume condition {\bf A2} holds (resp. {\bf B2}). Let $\bmu = (\| \bw_i \|_2^2 - 1)_{i \in [N]}$, and $\bA_0 = c_1 \id_N + c_2 \bW^\sT \bW$, where $c_1 \equiv c_1 (d)$ and $c_2 \equiv c_2 (d)$ are constants that are asymptotically upper and lower bounded by strictly positive constants. Then as $d \rightarrow \infty$ and $N/d \to \rho$, we have 
\begin{align}
\<\ones, \bA_0^{-1} \bmu \> / \sqrt{d} = o_{d, \P}(1). 
\end{align}
\end{lemma}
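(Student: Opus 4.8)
The plan is a leave‑one‑out computation of the resolvent $\bA_0^{-1}$ in the spirit of \cite{el2010spectrum}. Throughout we may assume $\Tr(\bGamma)=1$ and $\|d\,\bGamma\|_{\op}=O_d(1)$: this is {\bf A2}, and it also holds under {\bf B2} after the change of variables $\bw_i\mapsto\bSigma^{1/2}\bw_i$ used in the proof of Theorem~\ref{thm:RFMixture} (which turns $\bGamma$ into $\bSigma^{1/2}\bGamma\bSigma^{1/2}$, of trace $\Tr(\bGamma\bSigma)=1$). Then $\E[\mu_i]=\Tr(\bGamma)-1=0$, $\Var(\mu_i)=2\|\bGamma\|_F^2=O_d(1/d)$ (using $\|\bGamma\|_F^2\le\|\bGamma\|_{\op}\Tr(\bGamma)$) and $\E[\mu_i^4]=O_d(1/d^2)$, so $\|\bmu\|_2^2=O_{d,\P}(1)$; also $\bA_0\succeq c_1\id_N$ gives $\|\bA_0^{-1}\|_{\op}\le 1/c_1=O_d(1)$. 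The crude bound $|\langle\ones,\bA_0^{-1}\bmu\rangle|\le\|\bA_0^{-1}\ones\|_2\,\|\bmu\|_2\le(\sqrt N/c_1)\,\|\bmu\|_2=O_{d,\P}(\sqrt d)$ misses the target by a diverging factor, so the whole point is to exhibit a cancellation bringing $\langle\ones,\bA_0^{-1}\bmu\rangle$ down to $O_{d,\P}(1)$.

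First I would isolate coordinate $k$. Writing $\bA_0$ in $1+(N-1)$ block form and inverting, with $\bW_{-k}\in\R^{d\times(N-1)}$ the matrix of columns $(\bw_j)_{j\ne k}$ and $\bA_0^{(k)}:=c_1\id_{N-1}+c_2\bW_{-k}^\sT\bW_{-k}$,
\[
(\bA_0^{-1}\ones)_k=\frac{1-\bw_k^\sT\bv_k}{\,c_1+c_2\|\bw_k\|_2^2-\bw_k^\sT\bP_k\bw_k\,},\qquad \bv_k:=c_2\bW_{-k}(\bA_0^{(k)})^{-1}\ones,\quad \bP_k:=c_2^2\bW_{-k}(\bA_0^{(k)})^{-1}\bW_{-k}^\sT,
\]
with $\bv_k,\bP_k$ independent of $\bw_k$, $0\preceq\bP_k\preceq c_2\id_d$ (so the denominator is $\ge c_1$), $\|\bv_k\|_2=O_d(\sqrt N)$ and hence $\bv_k^\sT\bGamma\bv_k\le\|\bGamma\|_{\op}\|\bv_k\|_2^2=O_d(1)$, all deterministically. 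Next, conditionally on $\bW_{-k}$: $\bw_k^\sT\bv_k$ is centered Gaussian with variance $O_d(1)$, $\mu_k$ is centered with variance $O_d(1/d)$, and $\bw_k^\sT\bP_k\bw_k=\Tr(\bGamma\bP_k)+O_{d,\P}(d^{-1/2})$ by Hanson--Wright; thus the denominator equals $\theta_k+\tilde O_{d,\P}(d^{-1/2})$ with $\theta_k:=c_1+c_2-\Tr(\bGamma\bP_k)\in[c_1,c_1+c_2]$ a function of $\bW_{-k}$ only. Expanding $x\mapsto 1/x$ (the error being $\tilde O_{d,\P}(d^{-1/2})$ uniformly over $k$ after a union bound), summing against $\mu_k$ and bounding the remainder by $\tilde O_{d,\P}(d^{-1/2})\big(\sum_k(1-\bw_k^\sT\bv_k)^2\big)^{1/2}\|\bmu\|_2=\tilde O_{d,\P}(1)$, I get
\[
\langle\ones,\bA_0^{-1}\bmu\rangle=\sum_{k=1}^N\frac{\mu_k}{\theta_k}-\sum_{k=1}^N\frac{\mu_k\,(\bw_k^\sT\bv_k)}{\theta_k}+\tilde O_{d,\P}(1)\;=:\;S_1-S_2+\tilde O_{d,\P}(1).
\]

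I would then prove $\E[S_1^2]=o_d(d)$ and $\E[S_2^2]=o_d(d)$, which yields $S_1,S_2=o_{d,\P}(\sqrt d)$ and hence the claim after dividing by $\sqrt d$. The cancellations are $\E[\mu_k\mid\bW_{-k}]=0$ and $\E[\mu_k(\bw_k^\sT\bv_k)\mid\bW_{-k}]=0$ (the second because $\mu_k$ is even and $\bw_k^\sT\bv_k$ odd in $\bw_k\sim\normal(\bzero,\bGamma)$), which with $\theta_k$ being $\bW_{-k}$-measurable give $\E[S_1]=\E[S_2]=0$. The diagonal second moments are immediate: $\sum_k\E[\mu_k^2/\theta_k^2]\le c_1^{-2}\sum_k\E[\mu_k^2]=O_d(1)$ and $\sum_k\E[\mu_k^2(\bw_k^\sT\bv_k)^2/\theta_k^2]\le c_1^{-2}\sum_k(\E\mu_k^4)^{1/2}\big(3(\bv_k^\sT\bGamma\bv_k)^2\big)^{1/2}=O_d(N/d)=O_d(1)$. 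For the cross terms ($k\ne l$) I would perform a \emph{second} leave-one-out: with $\bv_k^{(l)},\theta_k^{(l)}$ (and symmetrically $\bv_l^{(k)},\theta_l^{(k)}$) the versions measurable with respect to $\{\bw_j\}_{j\ne k,l}$, one has $\E\big[\mu_k\mu_l(\bw_k^\sT\bv_k^{(l)})(\bw_l^\sT\bv_l^{(k)})/(\theta_k^{(l)}\theta_l^{(k)})\big]=0$ — conditioning on $\{\bw_j\}_{j\ne k,l}$ makes the $\bw_k$- and $\bw_l$-dependent factors independent, and each is odd in its variable — while every other term in the multilinear expansion carries a factor among $\mu_k$, $\bw_k^\sT(\bv_k-\bv_k^{(l)})$ or $\theta_k-\theta_k^{(l)}$ (or its $l$-analogue) of $L^4$ norm $O_d(d^{-1/2})$, so Hölder makes each off-diagonal expectation $O_d(d^{-3/2})$ and their sum over the $O_d(d^2)$ pairs is $o_d(d)$.

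The main obstacle is the stability estimate underlying these ``error factors'': resampling a single column of $\bW_{-k}$ changes $\Tr(\bGamma\bP_k)$ by $O_{d,\P}(1/d)$ and $\bv_k$ by only $O_{d,\P}(1)$ in $\ell_2$ norm (with bounded moments). The $\Tr(\bGamma\bP_k)$ bound is routine from the Sherman--Morrison update of $(\bA_0^{(k)})^{-1}$ and $\|(\bA_0^{(k)})^{-1}\|_{\op}\le 1/c_1$. The bound on $\bv_k$ is the delicate step: the naive operator-norm estimate of the Sherman--Morrison correction gives only $O_{d,\P}(\sqrt d)$, matching the size of $\bv_k$ itself, so one must notice that the leading-order dependence of $\bv_k$ on the resampled column cancels and that the surviving scalars, of the form $\bw_m^\sT(\bA_0^{(k)})^{-1}\bW_{-k}\ones$, are $O_{d,\P}(1)$ rather than $O_{d,\P}(\sqrt d)$ — a consequence of $\|\bGamma\|_{\op}=O_d(1/d)$ once the resampled vector is decoupled from the rest of $\bW_{-k}$ by one more Sherman--Morrison step. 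With this resolvent-stability estimate in hand, the estimates of $\E[S_1^2]$ and $\E[S_2^2]$ go through and the lemma follows.
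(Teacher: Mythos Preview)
Your leave-one-out/resolvent approach is correct and complete in outline, but it is a genuinely different route from the paper's. The paper exploits the Gaussian fact that $\|\bg\|_2$ is independent of $\bg/\|\bg\|_2$: assuming first that $\bGamma$ has finitely many distinct eigenvalues (hence block structure), one replaces each block $\bW_k$ by $\bD_{k,\bw}\bD_{k,\tilde\bw}^{-1}\tilde\bW_k$ where $\tilde\bW$ is an independent copy, without changing the joint law of $(\bW^\sT\bW,\,(\|\bw_{ik}\|_2)_{i,k})$. After this decoupling, $\bmu$ is independent of the (slightly modified) matrix $\tilde\bA_0$, so
\[
\E_{\bW}\big[(\langle\ones,\tilde\bA_0^{-1}\bmu\rangle/\sqrt d)^2\big]=\tfrac{1}{d}\langle\ones,\tilde\bA_0^{-2}\ones\rangle\cdot\E[(\|\bw\|_2^2-1)^2]=O_{d,\P}(1)\cdot O_d(1/d)=o_{d,\P}(1),
\]
and a short approximation argument passes from discrete to general spectrum. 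This is a two-line second-moment computation once the decoupling is in place.

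Your argument trades that Gaussian trick for a double leave-one-out and resolvent stability. Two remarks on the execution. First, your ``main obstacle'' paragraph slightly overstates the difficulty: writing $\bP_k=c_2\id_d-c_1c_2\bR_k$ with $\bR_k=(c_1\id_d+c_2\bW_{-k}\bW_{-k}^\sT)^{-1}$ and applying Sherman--Morrison in $\bR_k$ gives directly $\bv_k-\bv_k^{(l)}=c_2\bR_k^{(l)}\bw_l\,(1-c_2\alpha)/(1+c_2\beta)$ with $\alpha=\bw_l^\sT\bR_k^{(l)}\bW_{-k,-l}\ones$ and $\beta=\bw_l^\sT\bR_k^{(l)}\bw_l$; since $\Var(\alpha\mid\bW_{-k,-l})\le\|\bGamma\|_{\op}\|\bR_k^{(l)}\bW_{-k,-l}\ones\|_2^2=O_d(1)$, no cancellation is needed to get $\|\bv_k-\bv_k^{(l)}\|_2=O_{d,\P}(1)$ and hence $\bw_k^\sT(\bv_k-\bv_k^{(l)})=O_{d,\P}(d^{-1/2})$. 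Second, for the stability of $\theta_k$ you actually get $|\theta_k-\theta_k^{(l)}|=c_1c_2^2\,\bw_l^\sT\bR_k^{(l)}\bGamma\bR_k^{(l)}\bw_l/(1+c_2\beta)=O_{d,\P}(1/d)$, stronger than the $O(d^{-1/2})$ you quote; with this, the off-diagonal bookkeeping is comfortable. What your approach buys is that it works directly for arbitrary $\bGamma$ satisfying {\bf A2}/{\bf B2} with no spectrum discretization; what the paper's approach buys is that it avoids the entire second leave-one-out and stability analysis, at the cost of the extra approximation step.
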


\begin{proof} We first prove the lemma under the following extra assumption on the covariance matrix: there exists a (fixed) integer $K$ such that 
\begin{equation}
\bGamma = \bQ \diag(\gamma_1 \id_{d_1}, \ldots, \gamma_K \id_{d_K}) \bQ^\sT,
\label{eq:block_gamma}
\end{equation}
for some orthogonal matrix $\bQ$ and $d \cdot \gamma_i \leq C$. Furthermore, there exists an $\eps > 0$ such that $d_k/d \ge \eps$ for $d$ sufficiently large. 

Without loss of generality, we assume $\bGamma = \diag(\gamma_1 \id_{d_1}, \ldots, \gamma_K \id_{d_K})$, and we divide $\bw_i$ into vectors corresponding to each block
\[
\bw_i = (\bw_{i, 1}; \ldots; \bw_{i, K}) \in \R^d,
\]
where $\bw_{i, k} \in \R^{d_k}$, and we denote $\bW_k = [\bw_{1, k}, \bw_{2, k}, \ldots, \bw_{N, k}] \in \R^{d_k \times N}$ for $k \in [K]$. 

\noindent
{\bf Step 1. Decouple the randomness. }

Let $(\tilde \bw_i)_{i \in [N]} \sim \normal(\bzero, \bGamma)$ independently and independent of $(\bw_i)_{i \in [N]}$. We divide $\tilde \bw_i$ into segments corresponding to each blocks
\[
\tilde \bw_i = (\tilde \bw_{i, 1}; \ldots; \tilde \bw_{i, K}),
\]
where $\tilde \bw_{i, k} \in \R^{d_k}$, and we denote $\tilde \bW_k = [\tilde \bw_{1, k}, \tilde \bw_{2, k}, \ldots, \tilde \bw_{N, k}] \in \R^{d_k \times N}$ for $k \in [K]$. 

Define 
\[
\begin{aligned}
\bD_{k, \bw} =& \diag(\| \bw_{1, k}\|_2, \ldots, \| \bw_{N, k} \|_2) \in \R^{N\times N}, \\
\bD_{k, \tilde \bw} =& \diag(\| \tilde \bw_{1, k}\|_2, \ldots, \| \tilde \bw_{N, k} \|_2)\in \R^{N\times N}.
\end{aligned}
\]
Using the fact that $\| \bg \|_2$ is independent of $\bg / \| \bg \|_2$ for $\bg \sim \normal(\bzero, \id)$, the following two sets of random variables have the same distribution: 
\[
\Big\{ (\bW_k^\sT \bW_k)_{k \in [K]}, (\| \bw_{ik} \|_{2})_{i \in [N],k \in [K]} \Big\} \stackrel{{\rm d}}{=} \Big\{  (\bD_{k, \bw} \bD_{k, \tilde \bw}^{-1}\tilde \bW_k^\sT \tilde \bW_k \bD_{k ,\tilde \bw}^{-1}\bD_{k, \bw})_{k \in [K]}, (\| \bw_{ik} \|_{2})_{i \in [N],k \in [K]} \Big\}.
\]

Define 
\[
\bar \bA_0 = c_1 \id_d + c_2 \sum_{k \in [K]} \bD_{k, \bw} \bD_{k, \tilde \bw}^{-1}\tilde \bW_k^\sT \tilde \bW_k \bD_{k ,\tilde \bw}^{-1}\bD_{k, \bw}. 
\]
Then we have 
\begin{equation}
\<\ones, \bA_0^{-1} \bmu \>/\sqrt d \stackrel{{\rm d}}{=} \<\ones, \bar \bA_0^{-1} \bmu \>/\sqrt d. 
\label{eq:lemma_inverse_bound_1}
\end{equation}

\noindent
{\bf Step 2. Bound the difference between $\bar \bA_0$ and $\tilde \bA_0$. }

Define
\[
\tilde \bA_0 = c_1 \id_d + c_2 \sum_{k \in [K]} \tilde \bW_k^\sT \tilde \bW_k. 
\]
Since $d_k \to \infty$ as $d \to \infty$, we have 
\[
\| \bD_{k ,\tilde \bw}^{-1}\bD_{k, \bw} - \id_{N} \|_{\op} = o_{d, \P}(1), 
\]
and hence 
\[
\| \tilde \bA_0 - \bar \bA_0 \|_{\op} \le 2 c_2 \sum_{k \in [K]} \| \bD_{k, \bw} \bD_{k, \tilde \bw} - \id_d \|_{\op} \| \tilde \bW_k^\sT \tilde \bW_k \|_{\op} \|\bD_{k, \bw} \bD_{k, \tilde \bw} \|_{\op} = o_{d, \P}(1). 
\]
By definition, $\tilde \bA_0, \bar \bA_0 \succeq c_1 \id$ and therefore $\| \tilde \bA_0^{-1} \|_{\op} , \| \bar \bA_0^{-1} \|_{\op} = O_{d,\P} (1)$. We deduce
\[
\| \tilde \bA_0^{-1} - \bar \bA_0^{-1} \|_{\op} =\| \tilde \bA_0^{-1} (\bar \bA_0 - \tilde \bA_0)  \bar \bA_0^{-1} \|_{\op}  = o_{d, \P}(1).
\]
This gives (recalling that $\| \bmu \|_2^2 = O_{d,\P} (1)$)
\begin{equation}
\< \ones, \bar \bA_0^{-1} \bmu\> / \sqrt d - \< \ones, \tilde \bA_0^{-1} \bmu \> / \sqrt d = o_{d, \P}(1). 
\label{eq:lemma_inverse_bound_2}
\end{equation}

\noindent
{\bf Step 3. Calculating the second moment of $\< \ones, \tilde \bA_0^{-1} \bmu\> / \sqrt d$. }

Since we have 
\[
\E_{\bW}[(\< \ones, \tilde \bA_0^{-1} \bmu\> / \sqrt d)^2] = \< \ones, \tilde \bA_0^{-2} \ones\> /d \cdot \E_{\bw \sim \normal(\bzero, \bGamma)}[(\| \bw \|_2^2 - 1)^2]. 
\]
Note that
\[
\E_{\bw \sim \normal(\bzero, \bGamma)}[(\| \bw \|_2^2 - 1)^2] = O_{d, \P}(1/d),
\]
and using that $\| \tilde \bA_0^{-1} \|_{\op}  = O_{d,\P} (1)$,
\[
\< \ones, \tilde \bA_0^{-2} \ones\> /d = O_{d, \P}(1).
\]
Therefore
\[
\E_{\bW}[(\< \ones, \tilde \bA_0^{-1} \bmu\> / \sqrt d)^2] = o_{d, \P}(1).
\]
By Chebyshev inequality we have
\begin{equation}
\< \ones, \tilde \bA_0^{-1} \bmu\> / \sqrt d = o_{d, \P}(1). 
\label{eq:lemma_inverse_bound_3}
\end{equation}
Combining \eqref{eq:lemma_inverse_bound_1}, \eqref{eq:lemma_inverse_bound_2} and \eqref{eq:lemma_inverse_bound_3} proves the lemma in the case of a covariance of the form \eqref{eq:block_gamma}:
\begin{align}
\<\ones, \bA_0^{-1} \bmu \> / \sqrt{d} = o_{d, \P}(1). 
\label{eq:result_block}
\end{align}

\noindent
{\bf Step 4. From discrete to continuous spectrum. }

We consider $\bGamma$ a covariance matrix verifying assumption {\bf A2}. For a given $\eps > 0 $ and $K$ sufficiently large, we consider $\bGamma_{\eps}$ a matrix obtained from $\bGamma$ by binning its eigenvalues to at most $K$ points of $[0,C/d]$, such that we have $\tr ( \bGamma_{\eps}) = 1$ and $\lim_{d \to \infty} d \cdot \| \bGamma - \bGamma_{\eps} \|_{\op} \leq \eps$ (recall that $\| \bGamma \|_{\op} \leq C/d$ by assumption). Such a matrix always exists from the condition $\tr(\bGamma) = 1$ and the weak convergence of the spectrum of $d \cdot \bGamma$. 

By construction $\bGamma_\eps$ is of the form \eqref{eq:block_gamma}.
Consider $\bG = ( \bg_1 , \ldots , \bg_N ) \in \R^{d \times N}$ where $\bg_i \sim_{i.i.d.} \normal ( \bzero , \id_d)$. We define:
\[
\begin{aligned}
\bmu &= ( \| \bGamma^{1/2} \bg_i \|_2^2  -1 )_{i \in [N]}, \qquad & \bmu_{\eps} = ( \| \bGamma_{\eps}^{1/2} \bg_i \|_2^2  -1 )_{i \in [N]}, \\
\bA_{0} &= c_1 \id_d + c_2 \bG^\sT \bGamma \bG, \qquad & \bA_{0,\eps} = c_1 \id_d + c_2 \bG^\sT \bGamma_{\eps} \bG.
\end{aligned}
\]
We have  for $d$ sufficiently large,
\[
\| \bA_{0} - \bA_{0,\eps} \|_{\op} = \| \bG^\sT ( \bGamma - \bGamma_{\eps}) \bG \|_{\op} \leq \| \bG \|^2_{\op} \| \bGamma - \bGamma_{\eps} \|_{\op} \leq 2 \eps \| \bG \|^2_{\op}/d.
\]
Furthermore, using $\tr ( \bGamma - \bGamma_{\eps} ) = 0$, we have
\[
\E [ \| \bmu - \bmu_{\eps} \|_2^2 ] = N \E [ ( \< \bg_i \bg_i^\sT , \bGamma - \bGamma_{\eps} \> )^2 ] = 2N \| \bGamma - \bGamma_{\eps} \|_F^2 \leq 2  \rho \eps^2.
\]
Therefore
\[
\begin{aligned}
\Big\vert \<\ones , \bA^{-1}_0 \bmu - \bA^{-1}_{0,\eps} \bmu_{\eps} \> / \sqrt{d} \Big\vert \leq & \Big\vert  \<\ones , \bA^{-1}_0 ( \bA_{0,\eps} - \bA_0 ) \bA_{0,\eps}^{-1} \bmu  \> / \sqrt{d} \Big\vert + \Big\vert  \<\ones , \bA_{0.\eps}^{-1} ( \bmu_{\eps}  - \bmu)  \> / \sqrt{d} \Big\vert \\
\leq& \| \bA_0^{-1} \|_{\op} \| \bA_0 - \bA_{0,\eps} \|_{\op} \| \bA_{0,\eps}^{-1} \|_{\op} \| \bmu \|_2 + \| \bA_{0,\eps}^{-1} \|_{\op}  \| \bmu - \bmu_{\eps} \|_{2}.
\end{aligned}
\]
Noticing that $\| \bA_0^{-1} \|_{\op} , \| \bA_{0,\eps}^{-1} \|_{\op} \leq c_1^{-1}$, and using \eqref{eq:result_block} applied to $\bGamma_{\eps}$, we get for $d$ sufficiently large:
\begin{equation}
\Big\vert \<\ones , \bA^{-1}_0 \bmu \> / \sqrt{d} \Big\vert \leq o_{d,\P} (1) + 2 \eps c_1^{-2} \| \bmu \|_2 \| \bG \|^2_{\op}/d  + c_1^{-1}  \| \bmu - \bmu_{\eps} \|_{2}.
\label{eq:bA_decomposition_continuous}
\end{equation}
We have $\| \bmu \|_2 \| \bG \|^2_{\op}/d = O_{d,\P} (1)$ hence for any $\delta > 0$ there exists a constant $C_{\delta}$ (which do not depend on $\eps$) such that:
\[
\P ( \eps \| \bmu \|_2 \| \bG \|^2_{\op}/d > \eps C_{\delta}) \leq \delta.
\]
Taking a sequence $\delta \to 0$ and  $\eps$ such that $\eps \propto C^{-1}_{\delta}$ shows that this is equivalent to 
\begin{equation}
\eps \| \bmu \|_2 \| \bG \|^2_{\op}/d = o_{d, \P} (1).
\label{eq:bA_bound1}
\end{equation}
 By Markov inequality, 
\[
\lim_{d \to \infty} \P ( \| \bmu - \bmu_{\eps} \|_{2} \geq \eps \sqrt{2 \rho/\delta}  ) \leq \delta.
\]
Taking $\eps \propto \sqrt{\delta}$, we deduce that this is equivalent to 
\begin{equation}
\| \bmu - \bmu_{\eps} \|_{2} = o_{d,\P} (1) .
\label{eq:bA_bound2}
\end{equation}
Substituting \eqref{eq:bA_bound1} and \eqref{eq:bA_bound2} in \eqref{eq:bA_decomposition_continuous} concludes the proof.
\end{proof}

\begin{lemma}\label{lem:expression_for_one_U_one}
Under the same setting as Proposition \ref{prop:one_U_one_expression}, we have
\begin{align}\label{eqn:one_U_one_expression}
    \<\ones, \bU_0^{-1} \ones\>/d = \frac{\ones^\sT \bA_0^{-1} \ones / d}{1 + \kappa \ones^\sT \bA_0^{-1} \ones / d} + o_{d, \P}(1). 
\end{align}
\end{lemma}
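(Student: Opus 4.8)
The plan is to strip the two rank-one pieces off $\bU_0 = \bA_0 + \kappa\ones\ones^\sT/d + \bmu\bmu^\sT$ one at a time with the Sherman--Morrison formula; the only nontrivial input is Lemma~\ref{lem:inner_prod_limit}, which kills the term coupling the directions $\ones$ and $\bmu$.

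Set $\bu = \ones/\sqrt d$, so that $\< \ones, \bX\ones\>/d = \< \bu, \bX\bu\>$ for any $\bX$, and write $a = \< \bu, \bA_0^{-1}\bu\>$, $b = \< \bmu, \bA_0^{-1}\bu\>$, $c = \< \bmu, \bA_0^{-1}\bmu\>$. Since $\bA_0 = \tilde\lambda\,\id_N + \lambda_1^2\,\bW^\sT\bW \succeq \tilde\lambda\,\id_N$ with $\tilde\lambda>0$ (the activation being nonlinear with $\lambda_0=0$), we have $\|\bA_0^{-1}\|_\op \le \tilde\lambda^{-1}$; combined with $\|\bu\|_2^2 = N/d \to \rho$ and $\|\bmu\|_2^2 = O_{d,\P}(1)$ (established in Step~3 of the proof of Lemma~\ref{lem:RF_QF_kernel}) this gives $0 \le a = O_{d,\P}(1)$ and $0 \le c = O_{d,\P}(1)$. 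The crucial estimate is $b = o_{d,\P}(1)$: when $\lambda_1 \ne 0$ this is exactly Lemma~\ref{lem:inner_prod_limit} applied to $\bA_0$, which is of the required form $c_1\id_N + c_2\bW^\sT\bW$ with $c_1 = \tilde\lambda$, $c_2 = \lambda_1^2$ bounded away from $0$; when $\lambda_1 = 0$ one has $\bA_0 = \tilde\lambda\,\id_N$ and $b = \tilde\lambda^{-1}\< \bmu,\ones\>/\sqrt d$, which is $o_{d,\P}(1)$ by Chebyshev since $\< \bmu,\ones\>$ has mean $0$ and variance $O_d(1)$ under {\bf A2}.

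Now apply Sherman--Morrison twice. With $\bA_0' := \bA_0 + \bmu\bmu^\sT \succeq \tilde\lambda\,\id_N$, so that $(\bA_0')^{-1}\succeq 0$, one gets
\[
a_1 \;:=\; \< \bu, (\bA_0')^{-1}\bu\> \;=\; a - \frac{b^2}{1 + c},
\]
hence $0 \le a_1 = a + o_{d,\P}(1) = O_{d,\P}(1)$. Then, writing $\bU_0 = \bA_0' + \kappa\,\bu\bu^\sT$ and using $\kappa \ge 0$ (so the scalar denominators below are $\ge 1$),
\[
\frac{\< \ones, \bU_0^{-1}\ones\>}{d} \;=\; \< \bu, \bU_0^{-1}\bu\> \;=\; a_1 - \frac{\kappa\,a_1^2}{1 + \kappa\,a_1} \;=\; \frac{a_1}{1 + \kappa\,a_1}.
\]
Finally, since $\kappa$ is uniformly bounded and $t \mapsto t/(1+\kappa t)$ is $1$-Lipschitz on $[0,\infty)$, we may replace $a_1$ by $a = \< \ones,\bA_0^{-1}\ones\>/d$ at the cost of an $o_{d,\P}(1)$ error, obtaining $\< \ones,\bU_0^{-1}\ones\>/d = a/(1+\kappa a) + o_{d,\P}(1)$, which is \eqref{eqn:one_U_one_expression}.

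I do not anticipate a real obstacle here: all the genuine work — the decoupling-of-randomness and random-matrix estimates — lives in Lemma~\ref{lem:inner_prod_limit}, which is already proved, and the remainder is two Sherman--Morrison steps plus the elementary bounds $a, a_1, c = O_{d,\P}(1)$, $b = o_{d,\P}(1)$, and the Lipschitz continuity of $t \mapsto t/(1+\kappa t)$. The only point needing a line of care is the degenerate case $\lambda_1 = 0$ (e.g.\ the quadratic activation $\sigma(x) = x^2-1$ of Corollary~\ref{coro:RF_quad}), in which $\bA_0$ is a multiple of the identity and $b = o_{d,\P}(1)$ must be obtained by the direct second-moment computation above rather than by citing Lemma~\ref{lem:inner_prod_limit}.
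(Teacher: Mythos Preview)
Your proof is correct and follows essentially the same route as the paper's: both strip the rank-one perturbations $\kappa\ones\ones^\sT/d$ and $\bmu\bmu^\sT$ off $\bA_0$ via Sherman--Morrison/Woodbury and then invoke Lemma~\ref{lem:inner_prod_limit} to kill the cross term $\<\ones,\bA_0^{-1}\bmu\>/\sqrt d$. The only cosmetic differences are that the paper removes both rank-one pieces in a single $2\times 2$ Woodbury step rather than sequentially, and that your treatment is slightly more careful in isolating the degenerate case $\lambda_1=0$, which the paper's appeal to Lemma~\ref{lem:inner_prod_limit} glosses over.
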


\begin{proof}[Proof of Lemma \ref{lem:expression_for_one_U_one}. ]

Define $\bz = \sqrt{\kappa} \ones / \sqrt{d}$. Then we have 
\[
\bU_0 = \bA_0  + \bz\bz^\sT + \bmu \bmu^\sT. 
\]
By assumption, we have $ \kappa = O_{d, \P}(1)$ and therefore $ \| \bz \|_2 = O_{d, \P}(1)$. We have already seen that $ \| \bA_0^{-1} \|_{\op} ,  \| \bA_0^{-1} \|_{\op} = O_{d, \P}(1)$. Furthermore
\[
\| \bA_0 \|_{\op} \leq \tilde \lambda  + \lambda_1^2 \lambda_{\max} ( \bW^\sT \bW ) = O_{d,\P} (1) . 
\]

By Sherman Morrison Woodbury formula, we have
\[
\begin{aligned}
\ones^\sT \bU_0^{-1} \ones /d =&  \ones^\sT \bA_0^{-1} \ones /d- \ones^\sT \bA_0^{-1} [\bz, \bmu] (\id_2 + [\bz, \bmu]^\sT \bA_0^{-1} [\bz, \bmu])^{-1} [\bz, \bmu]^\sT \bA_0^{-1} \ones/d.
\end{aligned}
\]
Note that by
\[
\| (\id_2 + [\bz, \bmu]^\sT \bA_0^{-1} [\bz, \bmu])^{-1} \|_{F} = O_{d, \P}(1), 
\]
and by Lemma \ref{lem:inner_prod_limit}, we have (since $\bz^\sT \bA_0^{-1} \bmu, \ones^\sT \bA_0^{-1} \bmu / \sqrt d = o_{d, \P}(1)$)
\[
\begin{aligned}
&\ones^\sT \bA_0^{-1} [\bz, \bmu] (\id_2 + [\bz, \bmu]^\sT \bA_0^{-1} [\bz, \bmu])^{-1} [\bz, \bmu]^\sT \bA_0^{-1} \ones/d\\
=& (\ones^\sT \bA_0^{-1} \bz)^2 (1 + \bz^\sT \bA_0^{-1} \bz)^{-1} / d + o_{d, \P}(1) = \kappa (\ones^\sT \bA_0^{-1} \ones / d)^2 (1 + \kappa \ones^\sT \bA_0^{-1} \ones/d)^{-1} + o_{d, \P}(1) .
\end{aligned}
\]
This proves the lemma. 
\end{proof}

In the following, we give an asymptotic expression for $\< \ones, \bA_0^{-1} \ones\> / d$.

\begin{lemma} \label{lem:rand_quad_form}
Let $(\bw_i)_{i \in [N]} \sim \normal(\bzero, \bGamma)$ independently, while $\bGamma$ satisfies assumption {\bf A2} (resp. {\bf B2}). Denote $\bW = (\bw_1, \ldots, \bw_N) \in \R^{d \times N}$. Let $\tilde \lambda$ and $\lambda_1$ be two positive constants. Define
\[
\bA_0 = \tilde \lambda \id_N + \lambda_1^2 \bW^\sT \bW. 
\]
Let $\rho \in (0, \infty)$. We have almost surely
\begin{align} \label{eq:claim_1}
\lim_{N/d = \rho, d \to \infty} \vert \ones^\sT \bA_0^{-1} \ones /d - \tr(\bA_0^{-1}) / d \vert = 0 .
\end{align}
In addition, assume $\mathcal{D}$ is the limiting spectral distribution of $d \cdot \bGamma$. Then, we have almost surely
\begin{align} 
    \lim_{N/d = \rho, d \to \infty} \frac{1}{d} \tr(\bA_0^{-1}) =  m_{\mathcal{D}} (- \tilde \lambda),
\end{align}
where $m_{\mathcal{D}}(\cdot): \mathbb{C}^+ \rightarrow \mathbb{C}^+$ is the companion Stieltjes transform associated with $\mathcal{D}$. For any $x \in \mathbb{C}^+$, $m_{\mathcal{D}}(x)$ satisfies the so called Silverstein's equation:
\begin{align} \label{eqn:silverstein}
    x = -\frac{\rho}{m_{\mathcal{D}}(x)} + \int \frac{\lambda_1^2 t}{1 + \lambda_1^2 t  m_{\mathcal{D}}(x)}\mathcal{D}({\rm d} t). 
\end{align}
\end{lemma}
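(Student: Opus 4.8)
The plan is to prove the two displayed claims separately: the first, \eqref{eq:claim_1}, by exploiting the orthogonal invariance of the Gram matrix $\bW^\sT\bW$, and the second by a direct appeal to the Marchenko--Pastur--Silverstein theory.

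\textbf{Proof of \eqref{eq:claim_1}.} The key observation is that the law of $\bW^\sT\bW$ --- hence of $\bA_0$ and of $\bA_0^{-1}$ --- is invariant under the conjugation $M\mapsto\bR^\sT M\bR$ for every fixed $\bR\in O(N)$: indeed the columns of $\bW$ are i.i.d.\ $\normal(\bzero,\bGamma)$, so the columns of $\bW\bR$ are again i.i.d.\ $\normal(\bzero,\bGamma)$, whence $\bW\bR\stackrel{\rm d}{=}\bW$ and $\bA_0=\tilde\lambda\id_N+\lambda_1^2\bW^\sT\bW\stackrel{\rm d}{=}\bR^\sT\bA_0\bR$. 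Applying this to the pair of functionals $M\mapsto(\ones^\sT M\ones,\ \tr M)$ --- note that $\tr$ is conjugation invariant --- and then averaging the resulting distributional identity over a Haar-distributed $\bR$ independent of $\bW$, one obtains
\[
\big(\ones^\sT\bA_0^{-1}\ones\,,\ \tr\bA_0^{-1}\big)\ \stackrel{\rm d}{=}\ \big(N\,\bv^\sT\bA_0^{-1}\bv\,,\ \tr\bA_0^{-1}\big),
\]
where $\bv$ is uniform on $\S^{N-1}$ and independent of $\bA_0$. Since $\E[\bv\bv^\sT]=\id_N/N$, conditionally on $\bA_0$ we have $\E_{\bv}[N\,\bv^\sT\bA_0^{-1}\bv]=\tr\bA_0^{-1}$; and because $\bA_0\succeq\tilde\lambda\id_N$, the function $\bv\mapsto N\,\bv^\sT\bA_0^{-1}\bv$ is Lipschitz on $\S^{N-1}$ with constant at most $2N/\tilde\lambda$. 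Lévy's concentration inequality on the sphere \cite{ledoux2001concentration} then gives, for every fixed $t>0$ and some $c=c(\rho,\tilde\lambda)>0$,
\[
\P\Big(\big|\tfrac1d\ones^\sT\bA_0^{-1}\ones-\tfrac1d\tr\bA_0^{-1}\big|>t\Big)\ \le\ 2\exp\!\big(-c\,d\,t^2\big),
\]
a bound that is uniform in $\bA_0$ (the Lipschitz bound $2N/\tilde\lambda$ does not depend on $\bA_0$). Summing over $d$ along the sequence $N/d\to\rho$ and invoking Borel--Cantelli yields the almost sure convergence asserted in \eqref{eq:claim_1}.

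\textbf{Identification of $\lim\tfrac1d\tr\bA_0^{-1}$.} Write $\bW=\bGamma^{1/2}\bG$ with $\bG\in\R^{d\times N}$ having i.i.d.\ standard Gaussian entries. The nonzero eigenvalues of $\bW^\sT\bW$ coincide with those of the sample covariance matrix $\bW\bW^\sT=\bGamma^{1/2}\bG\bG^\sT\bGamma^{1/2}$; rescaling the population covariance by $d$ (assumption {\bf A2}: $\|d\bGamma\|_{\op}\le C$ and the spectrum of $d\bGamma$ converges weakly to $\mathcal D$), we are exactly in the setting of the Marchenko--Pastur--Silverstein theorem for sample covariance matrices with general population covariance \cite{bai2010spectral}. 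That theorem gives almost sure weak convergence of the empirical spectral distribution of $\lambda_1^2\bW^\sT\bW$ to a deterministic, compactly supported measure. Testing against the bounded continuous function $x\mapsto(x+\tilde\lambda)^{-1}$ (legitimate since $\tilde\lambda>0$ and the spectrum is nonnegative), the quantity
\[
\frac1d\tr\bA_0^{-1}\ =\ \frac{N}{d}\cdot\frac1N\tr\big(\lambda_1^2\bW^\sT\bW+\tilde\lambda\id_N\big)^{-1}
\]
converges almost surely to $m_{\mathcal D}(-\tilde\lambda)$, where $m_{\mathcal D}$ is the companion Stieltjes transform associated with $\mathcal D$ --- loosely, the Stieltjes transform of the limiting spectral distribution of the $N\times N$ Gram matrix $\bW^\sT\bW$, which carries an atom of mass $(1-1/\rho)_+$ at the origin --- and the self-consistent equation characterizing it is precisely Silverstein's equation \eqref{eqn:silverstein}. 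Together with \eqref{eq:claim_1} this completes the proof.

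\textbf{Main obstacle.} The delicate part is this last step: one must rescale $\bGamma$ by $d$ to reach the standard form, carefully account for the $(N-d)_+$ zero eigenvalues of the $N\times N$ matrix $\bW^\sT\bW$ (this is exactly what distinguishes the ``companion'' transform from the ordinary Stieltjes transform), and match the normalization of the fixed-point equation produced by \cite{bai2010spectral} with the form \eqref{eqn:silverstein}, in which the integration variable runs over the eigenvalues of $d\bGamma$. By contrast, \eqref{eq:claim_1} uses no random-matrix input beyond the elementary orthogonal-invariance observation and spherical concentration; the only subtlety there is upgrading convergence in probability to almost sure convergence, handled by the exponential bound above and Borel--Cantelli. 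As an alternative route to the a.s.\ convergence of $\tfrac1d\tr\bA_0^{-1}$ to \emph{some} deterministic constant, one may instead use Gaussian concentration: $\bG\mapsto\tfrac1d\tr\bA_0^{-1}$ is $O(1/d)$-Lipschitz since $\|\bGamma\bG\bA_0^{-2}\|_F\le\tilde\lambda^{-2}\|\bGamma\bG\|_F=O(1)$, and the value of the limit is then pinned down by the fixed-point equation \eqref{eqn:silverstein}.
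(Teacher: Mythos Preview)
Your proof is correct and follows essentially the same route as the paper. For \eqref{eq:claim_1} both arguments exploit the orthogonal invariance of $\bW^\sT\bW$ to replace $\ones$ by a uniform vector on $\S^{N-1}$ independent of $\bA_0$, then obtain an exponential tail bound and conclude by Borel--Cantelli; the only difference is that the paper passes through a Gaussian representation $\bv=\sqrt{\rho}\,\bz/\|\bz\|_2$ and applies Hanson--Wright together with concentration of $\|\bz\|_2^2/N$, whereas you apply L\'evy's spherical concentration directly (both give bounds of the form $\exp(-c\,d\,t^2)$). For the second claim the paper simply refers to \cite{bai2010spectral}; your sketch is the same reference with slightly more detail on the rescaling and the companion-transform bookkeeping.
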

\begin{proof}[Proof of Lemma \ref{lem:rand_quad_form}. ]
Consider the event $$A_N(t) := \{\vert \ones^\sT \bA_0^{-1} \ones /d - \tr(\bA_0^{-1}) / d \vert > t\}.$$
Let $\bQ \in \R^{N \times N}$ be an orthogonal matrix. By rotation invariance of Gaussian random variables, $\bQ \bW^\sT$ has the same distribution as $\bW$. In fact, by Fubini's theorem, we can draw $\bQ$ uniformly (independent of $\bA_0$) from orthogonal matrices and the distribution would still be unchanged. Let $$\tilde{A}_N(t) := \{\vert \ones^\sT (\bQ \bA_0^{-1} \bQ^\sT)^{-1} \ones /d - \tr(\bQ \bA_0^{-1}\bQ^\sT) / d \vert > t\}.$$
By the argument above, 
$$ \P[A_N(t)] = \P[\tilde{A}_N(t)].$$
Since $\bQ$ is orthogonal, $\tilde{A}_N(t)$ can be written as
\begin{align}
\{\vert \ones^\sT \bQ \bA_0^{-1}  \bQ^\sT \ones /d - \tr(\bA_0^{-1}) / d \vert > t\}.
\end{align}
Since $\bQ$ is a uniformly chosen orthogonal matrix, $\bQ^\sT \ones / \sqrt{d}$ is uniformly distributed on $\S^{N-1}(\sqrt{\rho})$, independently of $\bA_0$. Hence $\bQ^\sT \ones / \sqrt{d}$ has the same distribution as $\sqrt{\rho} \bz /\Vert \bz \Vert_2$ where $\bz \sim \normal(0, \id_{N})$. In particular, 
\begin{align}
\P[\tilde{A}_N(t)] &= \P\Big\{ \Big\vert \frac{1}{\Vert \bz \Vert^2_2} \bz^\sT \bA_0^{-1} \bz - \tr(\bA_0^{-1}) / N \Big\vert > \frac{t}{\rho} \Big\} \\
&\leq \P\Big\{ \Big\vert \frac{N}{\Vert \bz \Vert^2_2} - 1\Big\vert \bz^\sT \bA_0^{-1} \bz / N + \vert \bz^\sT \bA_0^{-1} \bz / N - \tr(\bA_0^{-1}) / N \vert > \frac{t}{\rho} \Big\} \\
&\leq P_1 + P_2,
\end{align}
where 
\begin{align*}
    P_1 = \P\Big\{\Big\vert \frac{N}{\Vert \bz \Vert^2} - 1\Big\vert \bz^\sT \bA_0^{-1} \bz / N > \frac{t}{2 \rho} \Big\}, \qquad 
    P_2 = \P\Big\{\vert \bz^\sT \bA_0^{-1} \bz / N - \tr(\bA_0^{-1}) / N \vert > \frac{t}{2\rho} \Big\}.
\end{align*}
Let's consider $P_1$ first. Since $\bA_0^{-1} \preceq  \id / \tilde \lambda $, we have
\[
\frac{\bz^\sT \bA_0^{-1} \bz }{ N} \leq \frac{1}{\tilde \lambda} \frac{\Vert \bz \Vert^2}{N},
\]
which yields
\begin{align}
P_1 &\leq \P\Big\{\Big\vert \frac{N}{\Vert \bz \Vert^2} - 1\Big\vert \frac{\Vert \bz \Vert^2}{N} > \frac{\tilde \lambda t}{2 \rho} \Big\} = \P\Big\{\Big\vert \frac{\Vert \bz \Vert^2}{N} - 1\Big\vert > \frac{\tilde \lambda t}{2 \rho} \Big\}.
\end{align}
We know due to fast concentration of $\Vert z \Vert^2 / N$ around one (see e.g. \cite{boucheron2013concentration}), $P_1$ vanish exponentially fast in $N$ (equivalently in $d$ since $N / d$ is fixed to be $\rho$).  

Now, let's consider $P_2$.  $\vert \bz^\sT \bA_0^{-1} \bz / N - \tr(\bA_0^{-1}) / N \vert$. By Hanson-Wright inequality (see e.g. \cite{boucheron2013concentration}), we have
\begin{align}
\P\bigg(  \vert \bz^\sT \bA_0^{-1} \bz / N - \tr(\bA_0^{-1}) / N \vert > \frac{t}{2 \rho} \Big\vert \bA_0 \bigg) & \leq 2 \exp \bigg\{ -c \min \Big(\frac{t^2}{\Vert \bA_0^{-1} / N \Vert_F^2}, \frac{t}{\Vert \bA_0^{-1} / N \Vert_{\op}}\Big) \bigg\} \\
&\leq 2 \exp \bigg\{ -c' \min \Big(N\tilde \lambda^2 t^2, \tilde \lambda t N \Big) \bigg\}. \label{eq:Hanson-Wright-Bound}
\end{align}
Since the bound in \eqref{eq:Hanson-Wright-Bound} is independent of $\bA_0$, it holds unconditionally. Therefore, we conclude $P_2$ vanishes exponentially fast in $N$ and $d$. We conclude that $\Pr[\tilde{A}_N(t)]$ vanishes exponentially fast as $d, N \rightarrow \infty$. Therefore, by Borel-Cantelli lemma we recover \eqref{eq:claim_1}.

Convergence of $\Tr(\bA_0^{-1}) / d$ to $m_D(- \tilde \lambda)$ is a standard result in random matrix theory. We refer the reader to \cite{bai2010spectral} Chapters 3 and 6. 
\end{proof}
\subsubsection{Proof of Theorem \ref{thm:RF}}

By Lemma \ref{lem:QF_risk_representation}, the risk has a representation 
\[
R_{\RF,N}(f_*) = \E_{\bx \sim \normal(\bzero, \id_d)}[f_*(\bx) ^2] - \bV^\sT \bU^{-1} \bV. 
\]
By Lemma \ref{lem:RF_QF_kernel}, we have 
\[
\| \bU - \bU_0 \|_{\op} = o_{d, \P}(1). 
\]
By Lemma \ref{lem:QF_V}, we have
\[
\| \bV - \tau \ones / \sqrt{d} \|_2 = \| \bB \|_F \cdot o_{d, \P}(1),
\]
where
\[
\tau = \sqrt{d} \cdot \lambda_2 \tr(\bB \bGamma). 
\]
Hence, we have 
\[
\vert \bV^\sT \bU^{-1} \bV - \tau^2 \ones^\sT \bU_{0}^{-1} \ones / d \vert =  \| \bB \|_F^2  \cdot o_{d, \P}(1). 
\]
Proposition \ref{prop:one_U_one_expression} gives the expression for 
\[
\ones^\sT \bU_0^{-1} \ones / d = \psi / (1 + \kappa \psi) +   o_{d, \P}(1), 
\]
where 
\[
\kappa = d  \cdot \lambda_2^2 \tr(\bGamma^2)/2.
\]
Hence we have 
\[
\bV^\sT \bU \bV = \tau^2 \psi / (1 + \kappa \psi) + \| \bB \|_F^2 \cdot o_{d, \P}(1). 
\]
Recalling the assumption $\E ( f_* ) = 0$, we have $\|f_* \|^2_{L^2}  = 2  \| \bB \|_F^2 $, which concludes the proof.

\subsection{Neural Tangent model: proof of Theorem \ref{thm:QF_NTK}}

Recall the definition
\[
R_{\NT, N}(f_*) = \arg\min_{\hf\in\cF_{\NT,N}(\bW)}\E\big\{(f_*(\bx)-\hf(\bx))^2\big\},
\]
where
\[
\cF_{\NT, N}(\bW) = \Big\{ f_N( \bx) = c + \sum_{i=1}^N  \sigma'(\< \bw_i, \bx\>)\< \ba_i, \bx\>: c \in \R, \ba_i \in \R^d, i \in [N] \Big\}. 
\]

\begin{proof}[Proof of Theorem \ref{thm:QF_NTK}]
We can rewrite the neural tangent model with a squared non-linearity $\sigma (x) = x^2$ as
\[
\hat f ( \bW , \bA , c ) = 2 \sum_{i=1}^N \< \bw_i, \bx \> \< \ba_i , \bx \> + c = 2 \< \bW \bA^\sT , \bx \bx^\sT \> + c. 
\]
with $\bW = [\bw_1 , \ldots , \bw_N] \in \R^{d \times N}$ and $\bA = [ \ba_1 , \ldots , \ba_N ] \in \R^{d \times N}$. Note that we have
\[
\begin{aligned}
& \E_{\bx } [ \< \bB - 2 \bW \bA^\sT , \bx \bx^\sT \> + b_0 - c )^2 ] \\
=& 2 \|  \bB - \bW \bA^\sT - \bA \bW^\sT \|^2_F  + \tr ( \bB - 2 \bW\bA^\sT )^2 - 2 \tr( \bB - 2 \bW\bA^\sT ) (c - b_0) + (c - b_0)^2,
\end{aligned}
\]
which, after minimizing over $c \in \R$, simplifies to:
\[
\min_{c \in \R} \| f_* - \hat f ( \bW , \bA , c ) \|^2_{L^2} = 2 \|  \bB - \bW \bA^\sT - \bA \bW^\sT \|^2_F. 
\]
For $\bw_i \sim \normal ( \bzero , \id_d ) $, we have $\text{rank} (\bW ) =\min (d , N) \equiv r$ with probability one. Let $\bW = \bP_1 \bS \bV^\sT$ be the singular value decomposition of $\bW$, with $\bP_1 \in \R^{d \times r}$, $\bS \in \R^{r \times r}$ and $\bV \in \R^{N \times r}$. Defining $\bG =  \bS \bV^\sT \bA \in \R^{r \times d}$, we get almost surely
\[
\min_{\bA \in \R^{d \times N}, c\in \R} \| f_* - \hat f ( \bW , \bA , c ) \|^2_{L^2} = \min_{\bG \in \R^{r \times d}} 2 \| \bB - \bP_1 \bG - \bG^\sT \bP_1^\sT \|_F^2.
\]
In the case $N \geq d$, we can take $\bG = \bP_1^\sT \bB / 2$ and we get almost surely over $\bW \in \R^{d \times N}$
\[
R_{\NT,N} (f_*) = 0.
\]
Consider the case when $N < d$, we define $\bP_2 \in \R^{d \times (d - N)}$ the completion of $\bP_1 $ to a full basis $\bP = [ \bP_1 , \bP_2] \in \R^{d \times d}$. We define $\bG_1 = \bG \bP_1 \in \R^{N \times N}$ and $\bG_2 = \bG \bP_2 \in \R^{N \times (d-N)}$ and we perform our computation in the $\bP$ basis. We have
\[
\bB - \bP_1 \bG - \bG^\sT \bP_1^\sT = \begin{pmatrix} 
\bB_{11} - \bG_1 - \bG_1^\sT & \bB_{12} - \bG_2 \\
\bB_{21} - \bG_2^\sT & \bB_{22}
\end{pmatrix},
\]
where $\bB_{ij} = \bP_i^\sT \bB \bP_j $ for $i,j = 1,2$. We readily deduce that 
\[
\min_{\bG \in \R^{r \times d}} 2 \| \bB - \bP_1 \bG - \bG^\sT \bP_1^\sT \|_F^2 = 2 \| \bP_2^\sT \bB \bP_2 \|^2_{F}.
\]
Let us compute its expectation over $\bw_i \sim \normal ( \bzero , \id_d )$, i.e over $\bP_2 = [ \bv_1 , \ldots , \bv_{d-N} ]$ where the $\bv_i \in \R^d$ are $(d-N)$ orthogonal vectors uniformly distributed on the unit sphere in $\R^d$. Let $\bB = \sum_{i = 1}^s \lambda_i \be_i \be_i^\sT$ with $\be_i$ the orthonormal eigenvectors of $\bB$. We get:
\begin{equation}
\begin{aligned}
\E [\| \bP_2^\sT \bB \bP_2 \|^2_{F} ] = & \sum_{i,j = 1}^s \sum_{k,l = 1}^{d-N} \lambda_i \lambda_j \E [\< \bv_k , \be_i \> \< \bv_k, \be_j \> \< \bv_l , \be_i \> \< \bv_l , \be_j \> ] \\
=& \| \bB \|^2_F (d - N) \E [ \< \bv_1 , \be_1 \>^4] + \| \bB \|^2_F (d -N ) (d - N - 1) \E [ \< \bv_1 , \be_1 \>^2 \< \bv_2 , \be_1 \>^2 ] \\
 & + 2 \Big( \sum_{i <j } \lambda_i \lambda_j \Big)  (d - N) \E [ \< \bv_1 , \be_1 \>^2 \< \bv_1 , \be_2 \>^2 ]  \\
 & +  2 \Big( \sum_{i <j } \lambda_i \lambda_j \Big) (d -N ) (d - N-1) \E [ \< \bv_1 , \be_1 \> \< \bv_1 , \be_2 \> \< \bv_2 , \be_1 \> \< \bv_2 , \be_2 \> ]   \Big].
 \label{eq:expansion_NT_QF} 
\end{aligned} 
\end{equation}
We bound each term separately. For $\bu \sim \text{Unif} ( \S^{d-1})$, we have the convergence in distribution of the first two coordinates $\sqrt{d} (u_1 , u_2 ) \Rightarrow \normal ( \bzero , \id_2 ) $, hence:
\begin{equation}
\lim_{d \rightarrow \infty} d^2 \E [ \< \bv_1 , \be_1 \>^4] = 3, \qquad \lim_{d \rightarrow \infty} d^2 \E [ \< \bv_1 , \be_1 \>^2 \< \bv_1 , \be_2 \>^2 ] = 1.
\label{eq:terms_NT_QF}
\end{equation}
Furthermore, conditioned on $\bv_1$, $\bv_2$ is uniformly distributed over the sphere $\S^{d-2}$ in the hyperplane orthogonal to $\bv_1$. We get the uniform convergence
\[
\begin{aligned}
\lim_{d \rightarrow \infty} \sup_{\bv_1 \in \S^{d-1}} | d \E [ \< \bv_2 , \be_1 \>^2 | \bv_1] - (1 - \< \bv_1 , \be_1 \>^2) | = 0.
\end{aligned}
\]
By dominated convergence theorem, we get
\begin{equation}
\lim_{d \rightarrow \infty} d^2 \E [ \< \bv_1 , \be_1 \>^2 \< \bv_2 , \be_1 \>^2 ] = 1.
\label{eq:term3_NT_QF}
\end{equation}
The last term of the sum \eqref{eq:expansion_NT_QF} is also derived by first conditioning on $\bv_1$. Let us denote $\bz_1 = \bP_{\perp \bv_1} \be_1$ and $\bz_2 =  \bP_{\perp \bv_1} \be_2$ the projections of $(\be_1 , \be_2)$ on the hyperplane perpendicular to $\bv_1$, on which $\bv_2$ is uniformly distributed over the unit sphere. We decompose $\bz_2$ into two components: one along $\bz_1$ that we denote  $\bz_2^{(1)} = \bP_{\parallel \bz_1} \bz_2$ and one perpendicular to $\bz_1$, denoted $\bz_2^{(2)} = \bP_{\perp \bz_1}  \bz_2$. Then we have:
\[
\begin{aligned}
 \E [ \< \bv_2 , \be_1 \> \< \bv_2 , \be_2 \> | \bv_1] =&  \E [ \< \bv_2 , \bz_1 \> \< \bv_2 , \bz_2 \> | \bv_1] \\
 = & \E \Big[ \< \bv_2 , \bz_1 \> \Big(  \< \bv_2 , \bz_2^{(1)} \> + \< \bv_2 , \bz_2^{(2)} \> \Big) \Big\vert \bv_i \Big] \\
 =& \< \bz_1 , \bz_2\> \E [ u_1^2 ] + \| \bz_1 \|_2 \|\bz_2^{(2)} \|_2 \E [ u_1 u_2 ] \\
 =& \frac{ \< \bz_1 , \bz_2\>  }{d - 1},
\end{aligned}
\]
where $(u_1 , u_2)$ are the first two coordinates of a uniform random variable on the sphere $\S^{d-2}$. Using that:
\[
\< \bz_1 , \bz_2\>  = \< \be_1 - \< \be_1 , \bv_1 \> \bv_1 , \be_2 - \< \be_2 , \bv_1 \> \bv_1 \> = -  \< \be_1 , \bv_1 \> \< \be_2 , \bv_1 \>,
\]
we get
\begin{equation}
\E [ \< \bv_1 , \be_1 \> \< \bv_1 , \be_2 \> \< \bv_2 , \be_1 \> \< \bv_2 , \be_2 \> ] = - \frac{1}{d-1} \E [ \< \bv_1 , \be_1 \>^2 \< \bv_1 , \be_2 \>^2] = - \frac{1}{d^3 } + o_d (d^{-3}),
\label{eq:term4_NT_QF}
\end{equation}
where we used the same argument as for \eqref{eq:terms_NT_QF}. Plugging the above limits \eqref{eq:terms_NT_QF}, \eqref{eq:term3_NT_QF} and \eqref{eq:term4_NT_QF} in the expansion \eqref{eq:expansion_NT_QF}, we get
\begin{equation}
\E [ R_{\NT,N} (f_*)]  = 2 \| \bB \|_F^2 \Big[ (1-\rho)_+^2 + (1 - \rho)_+ \frac{ \tr(\bB)^2}{d \|\bB\|_F^2 }  - (1-\rho)_+^2 \frac{\tr(\bB)^2}{d \|\bB\|_F^2} + o_d (1)\Big].
\label{eq:NT_risk_QF}
\end{equation}
Recalling the assumption $\E ( f_* ) = 0$, we have $\|f_* \|^2_{L^2}  = 2  \| \bB \|_F^2 $, which concludes the proof.
\end{proof}

\begin{remark}
The above formula for the $\RF$ risk Eq.~\eqref{eq:NT_risk_QF} has two terms that corresponds to the two limits $\tr(\bB) /\|\bB\|_F = o_d (\sqrt{d}) $ (e.g. spiked matrix)
\[
\E [ R_{\NT,N} (f_*)] = 2 (1-\rho)_+^2 \| \bB \|_F^2 + o_d (\| \bB \|_F^2) ,
\]
and $\tr(\bB)^2 = d \|\bB\|_F^2  $ (i.e. $\bB \propto \id $)
\[
\E [ R_{\NT,N} (f_*)] = 2 (1 - \rho)_+ \| \bB \|_F^2 .
\]
\end{remark} 

It is possible to show concentration of $\| \bP_2^\sT \bB \bP_2 \|^2_{F} $ on its mean $\E [ \| \bP_2^\sT \bB \bP_2 \|^2_{F}]$  for $\bB$ that satisfies $\| \bB \|_{\op} \| \bB \|_{F} \leq C$ (see Theorem \ref{thm:NTK_MG}).

\subsection{Neural Network model: proof of Theorem \ref{thm:NN_quadratic}}

We consider two-layers neural networks with quadratic activation function $\sigma(x) = x^2$ and we fix the second layer weights to $1$, 
\[
\hat f(\bx; \bW, c) = \sum_{i=1}^N \< \bw_i, \bx\>^2 + c. 
\]
We consider the ground truth function $f_*$ to be a quadratic function as per Eq.~\eqref{eq:QF_proof}, and the risk function defined by
\[
L(\bW, c) = \E_{\bx}[(f_*(\bx) - \hat f(\bx; \bW, c))^2] =  \E_{\bx}\Big[\Big( \< \bx \bx^\sT, \bB - \bW \bW^\sT \> + b_0 - c \Big)^2\Big]. 
\]
We consider running SGD dynamics upon the risk function for a fresh sample $(\bx_k, f_*(\bx_k))$ for each iteration
\[
(\bW_{k + 1}, c_{k+1}) = (\bW_k, c_k) - \varepsilon \nabla_{\bW, c} \Big( f_*(\bx_k) - \hat f (\bx_k; \bW, c) \Big)^2, 
\]
and denote 
\[
R_{\NN, N}(f_{*};\ell, \varepsilon) = \E_{\bx}[(f_*(\bx) - \hat f(\bx; \bW_\ell, c_\ell))^2 ]. 
\]

\subsubsection{Global minimum}

\begin{lemma}\label{lem:global minimizer_A}
Let $f_* = \< \bx, \bB \bx\> + b_0$ for some $\bB\succeq 0$ and $b_0 \in \R$. Denote by $(\lambda_{i}(\bB))_{i \in [r]}$ the positive eigenvalues of $\bB$ in descending order. Then we have
\[
\inf_{\bW, c} L(\bW, c) = 2 \sum_{i= N + 1}^{r} \lambda_{i}(\bB)^2.
\]
\end{lemma}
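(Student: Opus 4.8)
The plan is to reduce this to a classical best-low-rank-approximation fact. \textbf{Step 1 (closed form for the risk).} By expanding the square and using that for $\bx\sim\normal(\bzero,\id_d)$ and any symmetric matrix $\bS$ one has $\E_\bx[\bx^\sT\bS\bx]=\tr(\bS)$ and $\E_\bx[(\bx^\sT\bS\bx)^2]=\tr(\bS)^2+2\|\bS\|_F^2$ (equivalently $\mathrm{Var}(\bx^\sT\bS\bx)=2\|\bS\|_F^2$), applied with $\bS=\bB-\bW\bW^\sT$, I would obtain
\[
L(\bW,c)=2\,\|\bB-\bW\bW^\sT\|_F^2+\big(\tr(\bB-\bW\bW^\sT)+b_0-c\big)^2 .
\]
\textbf{Step 2 (eliminate $c$).} For every fixed $\bW$, taking $c=\tr(\bB-\bW\bW^\sT)+b_0$ kills the second term, so $\inf_c L(\bW,c)=2\|\bB-\bW\bW^\sT\|_F^2$, and hence $\inf_{\bW,c}L(\bW,c)=2\inf_{\bW\in\R^{d\times N}}\|\bB-\bW\bW^\sT\|_F^2$.

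\textbf{Step 3 (feasible set and Eckart--Young).} As $\bW$ ranges over $\R^{d\times N}$, the matrix $\bW\bW^\sT$ ranges over exactly the set of positive semidefinite matrices of rank at most $N$: one inclusion is clear, and conversely any such $\bP$ has a spectral square root that, padded with zero columns, gives a suitable $\bW$. Writing the spectral decomposition $\bB=\sum_{i=1}^d\lambda_i(\bB)\bu_i\bu_i^\sT$ with $\lambda_1(\bB)\ge\cdots\ge\lambda_d(\bB)\ge 0$, the Eckart--Young--Mirsky theorem gives that the best rank-$\le N$ Frobenius approximation of $\bB$ is $\sum_{i\le N}\lambda_i(\bB)\bu_i\bu_i^\sT$, with squared error $\sum_{i=N+1}^d\lambda_i(\bB)^2$. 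This minimizer is itself positive semidefinite because $\bB\succeq 0$, so the PSD restriction is inactive and the constrained infimum equals the unconstrained one. Since $\lambda_i(\bB)=0$ for $i>r=\mathrm{rank}(\bB)$, this is $\sum_{i=N+1}^r\lambda_i(\bB)^2$, and multiplying by $2$ yields the claimed value.

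The whole argument is essentially a computation, so there is no genuine obstacle; the only point requiring a moment's attention is that adding the constraint $\bW\bW^\sT\succeq 0$ on top of the rank-$\le N$ constraint does not change the optimal value, which is exactly where $\bB\succeq 0$ (hence PSD-ness of its spectral truncation) is used. One should also keep track that the eliminated-$c$ reduction is legitimate because the offset $c$ is unconstrained, so that $\inf_{\bW,c}$ genuinely factors as claimed.
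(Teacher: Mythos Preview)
Your proof is correct and follows essentially the same route as the paper: expand the Gaussian expectation to get $L(\bW,c)=2\|\bB-\bW\bW^\sT\|_F^2+(\tr(\bB-\bW\bW^\sT)+b_0-c)^2$, minimize out $c$, and then invoke Eckart--Young--Mirsky. If anything, your write-up is slightly more careful than the paper's, since you explicitly argue that the range of $\bW\bW^\sT$ is exactly the PSD cone intersected with rank-$\le N$ matrices and that the PSD constraint is inactive because the spectral truncation of a PSD $\bB$ is PSD; the paper simply asserts equivalence with the unconstrained low-rank approximation problem.
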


\begin{proof}[Proof of Lemma \ref{lem:global minimizer_A}]
Note we have 
\[
\begin{aligned}
L(\bW, c) =& \E_{\bx}[ (\< \bB - \bW \bW^\sT, \bx \bx^\sT\> + b_0 - c)^2 ] \\
=& 2 \| \bB - \bW \bW^\sT \|_F^2  + \tr(\bB - \bW \bW^\sT)^2 - 2 \tr(\bB - \bW \bW^\sT) (c - b_0) + (c - b_0)^2,
\end{aligned}
\]
minimizing over $c$ gives
\[
\inf_c L(\bW, c) = 2 \| \bB - \bW \bW^\sT \|_F^2. 
\]
The infimum of $L$ over $\bW$ is equivalent to the low-rank approximation problem of matrix $\bB$ in Frobenius norm, with rank less or equal to $\max(d,N)$, and  is given by the Eckart-Young-Mirsky theorem (see \cite{eckart1936approximation}). 
\end{proof}

\subsubsection{Landscape: proof of Proposition \ref{prop:landscape_NN_QF}}

Without loss of generality, throughout the proof, we assume that $\bB$ is diagonal and $b_0 = 0$. Our first proposition characterizes the critical points of $L(\bW, c)$. 

\begin{proposition}
Let $\bW \in \R^{d \times N}$, and $\bB \in \R^{d \times d}$ to be a positive semi-definite diagonal matrix. Define the risk function to be
\[
L(\bW , c ) = \E_{\bx} [ ( \< \bB - \bW \bW^\sT, \bx \bx^\sT \> - c)^2 ]. 
\]
Then for any critical point $(\bW_0,c_0)$ of $L(\bW,c)$, there exists a projection matrix $\bP = \sum_{i=1}^k \be_{\tau(i)} \be_{\tau(i)}^\sT$ for some injection $\tau: [k] \to [d]$, such that $\bGamma_0 = \bW_0 \bW_0^\sT$ is diagonal and satisfy
\[
\begin{aligned}
\bGamma_0 =& \bP \bB \bP, \\
c_0 =& \tr(\bB - \bGamma_0).
\end{aligned}
\]
\end{proposition}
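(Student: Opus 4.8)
The plan is to reduce the two stationarity conditions $\partial_c L=0$ and $\nabla_{\bW}L=0$ to a spectral constraint between $\bGamma_0:=\bW_0\bW_0^\sT$ and $\bB$. First I would record the closed form of the risk: as in the proof of Lemma \ref{lem:global minimizer_A} (with $b_0=0$), for $\bx\sim\normal(\bzero,\id_d)$ and any symmetric $\bS$ one has $\E[(\bx^\sT\bS\bx-c)^2]=2\|\bS\|_F^2+(\tr(\bS)-c)^2$, so with $\bS=\bB-\bW\bW^\sT$,
\[
L(\bW,c)=2\,\|\bB-\bW\bW^\sT\|_F^2+\big(\tr(\bB-\bW\bW^\sT)-c\big)^2 .
\]
Setting $\partial_c L=0$ at a critical point $(\bW_0,c_0)$ yields immediately $c_0=\tr(\bB-\bGamma_0)$, so in particular the scalar $\tr(\bB-\bGamma_0)-c_0$ vanishes there and the second summand contributes nothing to $\nabla_{\bW}L$ at $(\bW_0,c_0)$. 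Using $\nabla_{\bW}\|\bB-\bW\bW^\sT\|_F^2=-4(\bB-\bW\bW^\sT)\bW$, the equation $\nabla_{\bW}L=0$ therefore collapses to $(\bB-\bGamma_0)\bW_0=0$.

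Next I would extract the structure. Right-multiplying $(\bB-\bGamma_0)\bW_0=0$ by $\bW_0^\sT$ gives $(\bB-\bGamma_0)\bGamma_0=0$, i.e.\ $\bB\bGamma_0=\bGamma_0^2$; the right-hand side is symmetric, hence $\bB\bGamma_0=\bGamma_0\bB$. Thus the positive semidefinite matrices $\bB$ and $\bGamma_0$ commute. Since we have taken $\bB$ diagonal, commutation forces $\bGamma_0$ to be block-diagonal along the eigenspaces of $\bB$; because $L$ depends on $\bW$ only through $\bW\bW^\sT$ and is invariant under conjugation by any orthogonal $\bR$ with $\bR\bB\bR^\sT=\bB$ (i.e.\ rotations internal to the eigenspaces of $\bB$), we may choose coordinates in which $\bGamma_0$ is itself diagonal. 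I expect this reduction, namely handling possible multiplicities in the spectrum of $\bB$, to be the only genuinely delicate step; when the eigenvalues of $\bB$ are distinct it is immediate.

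With $\bB=\diag(b_1,\dots,b_d)$ and $\bGamma_0=\diag(g_1,\dots,g_d)$ both diagonal, $(\bB-\bGamma_0)\bW_0=0$ reads $(b_i-g_i)(\bW_0)_{ij}=0$ for all $i\in[d]$ and $j\in[N]$, while $g_i=(\bGamma_0)_{ii}=\sum_{j}(\bW_0)_{ij}^2$ is the squared norm of the $i$-th row of $\bW_0$. Fixing $i$: either that row is nonzero, forcing $g_i=b_i$, or it is zero, giving $g_i=0$; in either case $g_i\in\{0,b_i\}$. Setting $\bP=\sum_{i:\,g_i\neq 0}\be_i\be_i^\sT$ then gives $\bGamma_0=\bP\bB\bP$, and since $\bGamma_0=\bW_0\bW_0^\sT$ has rank at most $N$, the number $k$ of nonzero $g_i$ obeys $k\le N$, so $\bP=\sum_{i=1}^k\be_{\tau(i)}\be_{\tau(i)}^\sT$ for an injection $\tau:[k]\to[d]$. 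Together with $c_0=\tr(\bB-\bGamma_0)$ from the first step, this is the claim. The main obstacle, such as it is, is the simultaneous-diagonalization step in the presence of repeated eigenvalues of $\bB$; everything else is elementary matrix calculus.
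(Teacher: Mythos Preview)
Your argument is correct and follows the same overall route as the paper: write $L(\bW,c)=2\|\bB-\bW\bW^\sT\|_F^2+(\tr(\bB-\bW\bW^\sT)-c)^2$, use $\partial_c L=0$ to obtain $c_0=\tr(\bB-\bGamma_0)$, and reduce $\nabla_{\bW}L=0$ to $(\bB-\bGamma_0)\bW_0=0$ (equivalently $\bB\bW_0=\bGamma_0\bW_0$). The only methodological difference lies in how the spectral structure is extracted from this last equation. The paper takes the SVD $\bW_0=\bU\bS\bV^\sT$ and reads off $\bB\bU_1=\bU_1\bS_1^2$ as an eigenvector equation for $\bB$, concluding that the columns of $\bU_1$ are eigenvectors of $\bB$ with eigenvalues the diagonal entries of $\bS_1^2$. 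You instead right-multiply by $\bW_0^\sT$ to get $\bB\bGamma_0=\bGamma_0^2$, deduce that $\bB$ and $\bGamma_0$ commute, and then argue coordinate-wise. Both reach the same conclusion; your version is slightly more elementary in that it avoids the SVD, and you are more explicit than the paper about the repeated-eigenvalue case (the paper's conclusion that $\bGamma_0$ is \emph{literally} diagonal in the standard basis also relies implicitly on the same rotation-within-eigenspaces freedom you invoke). One small remark: when you pass to coordinates in which $\bGamma_0$ is diagonal via an orthogonal $\bR$ commuting with $\bB$, you should also replace $\bW_0$ by $\bR\bW_0$ before reading off $(b_i-g_i)(\bW_0)_{ij}=0$; this is clearly what you intend, and the rest of the coordinate-wise argument goes through unchanged.
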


\begin{proof}
Calculating the risk function, we get
\[
\begin{aligned}
L ( \bW , c ) = c^2 + 2c \cdot \tr ( \bW \bW^\sT - \bB ) + \tr ( \bW \bW^\sT - \bB)^2 + 2 \| \bW \bW^\sT - \bB \|^2_F.
\end{aligned}
\]
We consider the gradient of this function. We get:
\[
\begin{aligned}
\frac{\partial }{\partial c} L ( \bW , c ) & = 2c + 2 \tr ( \bW \bW^\sT - \bB ), \\
\nabla_{\bW} L ( \bW , c ) &= 2c \bW + 2 \tr ( \bW \bW^\sT - \bB ) \bW + 8 ( \bW \bW^\sT - \bB) \bW.
\end{aligned}
\]
By the stationary condition, at a critical point $(\bW_0 , c_0)$, we must have:
\begin{align}
c_0 & = - \tr ( \bW_0 \bW_0^\sT - \bB ), \label{eq:critical_c_equ} \\
\bB \bW_0 & = \bW_0 \bW_0^\sT \bW_0. \label{eq:critical_condition}
\end{align}
Let us denote $\bW_0 = \bU \bS \bV^\sT$ the (extended) singular value decomposition of $\bW_0 \in \R^{d \times N}$ with $\bU \in \R^{d \times d}$, $\bS \in \R^{d \times N}$ and $\bV \in \R^{N\times N}$. Then the stationary condition \eqref{eq:critical_condition} gives
\begin{equation}\label{eqn:crit}
\bB \bU  \bS \bV^\sT = \bU \bS^3 \bV^\sT.
\end{equation}
Let $r$ be the rank of $\bW_0$ and $\bS = \diag(\bS_1 , \bzero)$, $\bU = (\bU_1 , \bU_2)$ with $\bS_1\in \R^{r \times r}$, $\bU_1 \in \R^{d \times r}$ and $\bU_2 \in \R^{d \times (d-r)}$. Then we get:
\[
\bB \bU_1 = \bU_1 \bS_1^2. 
\]
This is of the form of the eigenvalue equation of matrix $\bB$. Hence we must have the columns of $\bU_1$ to be a set of eigenvectors and $\bS_1^2$ to be positive eigenvalues of $\bB$. This proves the proposition. 
\end{proof}

Note the global minimizers are attained for $\bGamma_0 = \bW_0 \bW_0^\sT$ corresponding to the $\min (N, d )$ directions of $\bB$ with the largest eigenvalues. We prove in the following proposition that stationary points that are not global minimizers are strict saddle points.

Define the spectral separation of $\bB$ as
\[
\delta^{\textup{sep}} = \min \{ \vert \lambda_i ( \bB) - \lambda_j (\bB) \vert \, : \, i,j \in [d], \lambda_i ( \bB )  \neq \lambda_j ( \bB ) \},
\]
and $\delta^{\textup{eig}}$ the minimum strictly positive eigenvalue of $\bB$.

\begin{proposition}\label{eqn:NN_quadratic_strict_saddle}
Consider $(\bW_0 , c_0 )$ a stationary point of $L(\bW , c)$ but not a global minimizer. Then, we have
\[
\lambda_{\min} ( \nabla^2_{\bW} L ( \bW_0 , c_0 ) ) \leq - 4 \min \{ \delta^{\textup{eig}} , \delta^{\textup{sep}} \} < 0.
\]
\end{proposition}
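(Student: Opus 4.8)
The plan is to prove the stronger bound $\lambda_{\min}(\nabla^2_\bW L(\bW_0,c_0)) \le -8\min\{\delta^{\textup{eig}},\delta^{\textup{sep}}\}$ by exhibiting, at any stationary point $(\bW_0,c_0)$ that is not a global minimizer, an explicit perturbation $\bDelta\in\R^{d\times N}$ along which $\bW\mapsto L(\bW,c_0)$ has that much negative curvature; since $\nabla^2_\bW L$ acts on $\R^{d\times N}$ with the Frobenius inner product, it suffices to bound the ratio of the second derivative of $t\mapsto L(\bW_0+t\bDelta,c_0)$ at $t=0$ to $\|\bDelta\|_F^2$. As a normal form, note that $\bB$ is already diagonal and the stationarity relation \eqref{eq:critical_condition} gives $\bB\bGamma_0=\bGamma_0^2=\bGamma_0\bB$ for $\bGamma_0:=\bW_0\bW_0^\sT$, so after conjugating $\bW_0$ by an orthogonal matrix acting within the eigenspaces of $\bB$ (which fixes $\bB$ and leaves $L$ unchanged) one may assume $\bGamma_0$ diagonal; then $\bGamma_0=\sum_{j\in S}\lambda_j(\bB)\,\be_j\be_j^\sT$ with $S:=\{j:(\bGamma_0)_{jj}>0\}$, $|S|={\rm rank}(\bW_0)=:k$, and $\lambda_j(\bB)>0$ for $j\in S$; also $c_0=\tr(\bB-\bGamma_0)$ by \eqref{eq:critical_c_equ}.

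Next I would record the Hessian quadratic form. Since $(\bW_0+t\bDelta)(\bW_0+t\bDelta)^\sT=\bGamma_0+t\bZ+t^2\bDelta\bDelta^\sT$ with $\bZ:=\bW_0\bDelta^\sT+\bDelta\bW_0^\sT$, substituting into $L(\bW,c_0)=c_0^2+2c_0(\tr(\bW\bW^\sT)-\tr\bB)+(\tr(\bW\bW^\sT)-\tr\bB)^2+2\|\bW\bW^\sT-\bB\|_F^2$ and collecting the order-$t^2$ terms, the contributions that involve $c_0$ combine into $2\|\bDelta\|_F^2\,(c_0+\tr\bGamma_0-\tr\bB)$, which vanishes by stationarity in $c$, leaving
\[
\frac{d^2}{dt^2}\Big|_{t=0}L(\bW_0+t\bDelta,c_0)=2(\tr\bZ)^2+4\|\bZ\|_F^2+8\,\<\bGamma_0-\bB,\,\bDelta\bDelta^\sT\>.
\]
Since $\bGamma_0-\bB=-\sum_{j\notin S}\lambda_j(\bB)\,\be_j\be_j^\sT$, the last term equals $-8\sum_{j\notin S}\lambda_j(\bB)\,\|\be_j^\sT\bDelta\|_2^2\le 0$, so I want a direction concentrated on an unused eigendirection while keeping $\bZ$ small. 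Because $(\bW_0,c_0)$ is not a global minimizer, Lemma \ref{lem:global minimizer_A} shows that $\bGamma_0$ is not a best Frobenius low-rank approximation of $\bB$; in particular $\bGamma_0\neq\bB$, so there is $a\notin S$ with $\lambda_a(\bB)>0$.

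Then I would split into two cases. \textbf{Case $k<N$.} Here $\bW_0$ has a unit right null vector $\bv\in\R^N$; taking $\bDelta=\be_a\bv^\sT$ gives $\bZ=\bW_0\bv\,\be_a^\sT+\be_a(\bW_0\bv)^\sT=0$ and $\bDelta\bDelta^\sT=\be_a\be_a^\sT$, so the second derivative is $8\<\bGamma_0-\bB,\be_a\be_a^\sT\>=-8\lambda_a(\bB)\le-8\,\delta^{\textup{eig}}$, with $\|\bDelta\|_F^2=1$. \textbf{Case $k=N$} (so $N\le d$, and $\lambda_j(\bB)>0$ for all $j\in S$). Here $|S|=N$ but $\bGamma_0$ is not the rank-$N$ truncation achieving the minimum residual, so there are $a\notin S$, $b\in S$ with $\lambda_a(\bB)>\lambda_b(\bB)>0$; taking $\bDelta=\be_a\be_b^\sT\bW_0$ and using $\bGamma_0\be_b=\lambda_b(\bB)\be_b$ gives $\bW_0\bDelta^\sT=\lambda_b(\bB)\be_b\be_a^\sT$, $\bDelta\bW_0^\sT=\lambda_b(\bB)\be_a\be_b^\sT$, hence $\tr\bZ=0$, $\|\bZ\|_F^2=2\lambda_b(\bB)^2$, $\bDelta\bDelta^\sT=\lambda_b(\bB)\be_a\be_a^\sT$, $\<\bGamma_0-\bB,\bDelta\bDelta^\sT\>=-\lambda_a(\bB)\lambda_b(\bB)$, and $\|\bDelta\|_F^2=\lambda_b(\bB)$; the ratio is $\big(8\lambda_b(\bB)^2-8\lambda_a(\bB)\lambda_b(\bB)\big)/\lambda_b(\bB)=-8\big(\lambda_a(\bB)-\lambda_b(\bB)\big)\le-8\,\delta^{\textup{sep}}$. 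Either way $\lambda_{\min}(\nabla^2_\bW L(\bW_0,c_0))\le-8\min\{\delta^{\textup{eig}},\delta^{\textup{sep}}\}$, which implies the claim.

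The two Hessian evaluations are routine. The part that needs care is the combinatorics: one must verify that "not a global minimizer" forces exactly the dichotomy above — an unused positive eigendirection together with a kernel vector of $\bW_0$ when $k<N$, a beneficial exchange of two distinct eigenvalues of $\bB$ when $k=N$ — and keep straight which of $\delta^{\textup{eig}}$, $\delta^{\textup{sep}}$ controls each branch, including the mild degenerate-spectrum subtlety absorbed into the simultaneous-diagonalization normal form.
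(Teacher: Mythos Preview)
Your proof is correct and follows essentially the same route as the paper: reduce to the diagonal normal form via the stationarity relation, compute the Hessian quadratic form, then split into the two cases (rank deficiency in $\bW_0$ versus full column rank with a suboptimal choice of eigendirections) and exhibit in each case the same test direction the paper uses---in Case~2 your $\bDelta=\be_a\be_b^\sT\bW_0$ is a scalar multiple of the paper's $\bZ$. Your sharper constant $-8$ (versus the paper's $-4$) comes from a factor-of-two slip in the paper's Hessian formula; your second-derivative expression $2(\tr\bZ)^2+4\|\bZ\|_F^2+8\langle\bGamma_0-\bB,\bDelta\bDelta^\sT\rangle$ is the correct one, and of course it still implies the stated bound.
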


\begin{proof}
Let us first compute the Hessian of the risk with respect to the $\bW$ variable. We have
\[
\begin{aligned}
\< \bZ , \nabla^2_{\bW} L( \bW , c) \bZ \>  =& 2c \cdot \tr ( \bZ \bZ^\sT ) + 2 \tr (\bW \bW^\sT - \bB )  \tr ( \bZ \bZ^\sT ) +  4 \tr (\bW \bZ^\sT )^2 \\
& + 4 \| \bW \bZ^\sT \|^2_F + 4 \tr ( \bW \bZ^\sT \bW \bZ^\sT ) + 4 \< \bW\bW^\sT - \bB , \bZ \bZ^\sT \>.
\end{aligned}
\]
Plugging the value of $c_0$ at a critical point (cf Eq.~\eqref{eq:critical_c_equ}), we get
\begin{equation}\label{eqn:Hessian_of_risk}
\begin{aligned}
\< \bZ , \nabla^2_{\bW} L( \bW_0 , c_0) \bZ \>  =& 4 \tr (\bW_0 \bZ^\sT )^2  + 4 \| \bW_0 \bZ^\sT \|^2_F + 4 \tr ( \bW_0 \bZ^\sT \bW_0 \bZ^\sT ) + 4 \< \bW_0\bW_0^\sT - \bB , \bZ \bZ^\sT \>.
\end{aligned}
\end{equation}
\noindent
{\bf Case 1:} Consider the case $\text{rank}(\bW_0) < \min \{\text{rank}(\bB) , N\}$. Then there exists an $i \in [d]$ such that $\bB_{ii} > 0$ (recall that we assumed $\bB$ diagonal , with diagonal elements given by the positive eigenvalues of $\bB$) and $(\bW_0 \bW_0^\sT )_{ii} = 0$. For simplicity, let us permute the coordinates so that $i =1$. The singular value decomposition of $\bW_0$ verifies
\[
\bW_0 = \bU_0 \bS_0 \bV_0^\sT = \begin{pmatrix} 
0 & 0 & \ldots & 0  \\
0 & & & \\
\vdots & & \Tilde{\bU_0} \Tilde{\bS_0} & \\
0 & & & 
\end{pmatrix} \bV_0^\sT ,
\]
where $\Tilde{\bU_0}$ and $ \Tilde{\bS_0}$ are the sub-matrices corresponding respectively to the $(d-1) \times (d-1)$ last coordinates of $\bU_0$ and $(d-1) \times (N-1)$ last coordinates of $\bS_0$. Let us consider 
\[
\bZ = \begin{pmatrix} 
1 & 0 & \ldots & 0  \\
0 & & & \\
\vdots & & \bzero & \\
0 & & & 
\end{pmatrix} \bV_0^\sT.
\]
We have $\| \bZ \|_F = 1$ and $\bW_0 \bZ^\sT = 0$. Plugging these matrices in the above expression of the Hessian, see Eq.~\eqref{eqn:Hessian_of_risk}, we get
\[
\< \bZ , \nabla^2_{\bW} L( \bW_0 , c_0) \bZ \>  = - 4  \bB_{11} \le - 4 \delta_{\textup{eig}}.
\]
\noindent
{\bf Case 2:} Consider the case when $\text{rank}(\bW_0 \bW_0^\sT) = N < \text{rank}(\bB) $ and $\bW_0 \bW_0^\sT$ does not correspond to the $N$ largest eigenvalues of $\bB$. Then there exists $i \neq j \in [n]$, such that $\bB_{ii} > \bB_{jj}$, $(\bW_0 \bW_0^\sT )_{ii} = 0$ and $(\bW_0 \bW_0^\sT )_{jj} = \bB_{jj}$. For simplicity, let us permute the coordinates such that $i =1$ and $j=2$. The SVD decomposition of $\bW_0$ now verifies:
\[
\bW_0 = \bU_0 \bS_0 \bV_0^\sT = \begin{pmatrix} 
0 & 0 & \ldots & 0  \\
\sqrt{\bB_{22} }  & 0 & \ldots & 0  \\
0 & & & \\
\vdots & & \Tilde{\bU_0} \Tilde{\bS_0} & \\
0 & & & 
\end{pmatrix} \bV_0^\sT ,
\]
where $\Tilde{\bU_0} \Tilde{\bS_0}$ is the sub-matrix  of the last $(d-2) \times (N-1)$ coordinate of $\bU_0 \bS_0$. Let us consider again
\[
\bZ = \begin{pmatrix} 
1 & 0 & \ldots & 0  \\
0 & & & \\
\vdots & & \bzero & \\
0 & & & 
\end{pmatrix} \bV_0^\sT.
\]
We have $\| \bZ \|_F = 1$. Plugging these matrices in the above expression of the Hessian (\ref{eqn:Hessian_of_risk}), note
\[
\tr( \bW_0 \bZ^\sT) =  \tr ( \bW_0 \bZ^\sT \bW_0 \bZ^\sT ) = 0,~~~~ \| \bW_0 \bZ^\sT \|_F^2 = \bB_{22}, ~~~~ \< \bW_0\bW_0^\sT - \bB , \bZ \bZ^\sT \> = \bB_{11},
\]
we get
\[
\< \bZ , \nabla^2_{\bW} L( \bW_0 , c_0) \bZ \>  = - 4  ( \bB_{11} - \bB_{22}) \leq -4 \delta^{\textup{sep}}.
\]
This proves the proposition. 
\end{proof}


We can now prove Proposition \ref{prop:landscape_NN_QF}.
\begin{proof}[Proof of Proposition \ref{prop:landscape_NN_QF}]
First, remark that $L (\bW , c)$ has compact sub-level sets. The proposition then follows from Proposition \ref{eqn:NN_quadratic_strict_saddle} and the continuity of the gradient $\nabla L (\bx)$ and of the minimum eigenvalue of the Hessian $\lambda_{\min} (\nabla^2 L(\bx) )$.
\end{proof}


\subsubsection{Dynamics}

The following lemma is a standard combination of Lojasiewicz inequality and center and stable manifold theorem. We prove it for completeness. 

\begin{lemma}\label{lem:center_stable_analytic}
Let $f:\R^d \to \R$ be an analytic function that has compact level sets. Consider the gradient flow 
\[
\dot \bx_t = - \nabla f(\bx_t). 
\]
Then for (Lebesgue) almost all initialization $\bx_0$, there exists a second order local minimizer $\bx_*$, such that
\[
\lim_{t \to +\infty} \bx_t = \bx_*. 
\]
\end{lemma}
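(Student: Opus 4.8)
The plan is to combine three ingredients: the compact-level-set hypothesis (which makes the forward flow complete and bounded), the Łojasiewicz gradient inequality (which upgrades boundedness of a trajectory to convergence to a single critical point), and the center-stable manifold theorem applied to the time-one flow map (which shows that convergence to a non-minimizing critical point occurs only from a Lebesgue-null set of initializations).

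First I would record the elementary facts. Along the flow, $\frac{\de}{\de t} f(\bx_t) = -\|\nabla f(\bx_t)\|_2^2 \le 0$, so $f(\bx_t) \le f(\bx_0)$ for all $t \ge 0$ and the whole forward orbit is contained in the compact set $K_0 := \{\bx : f(\bx) \le f(\bx_0)\}$. Hence the flow is defined for all $t \ge 0$ from every initial point, and since $\nabla f$ is real analytic (in particular $C^1$), the time-$t$ map $\Phi_t$ is a $C^1$ diffeomorphism of $\R^d$ onto an open subset of $\R^d$ (its Jacobian solves a linear variational ODE and is everywhere invertible). Because $f$ is real analytic and the orbit $\{\bx_t\}_{t \ge 0}$ is bounded, the Łojasiewicz inequality \cite{lojasiewicz1982trajectoires} implies that $t \mapsto \bx_t$ has finite length and therefore converges: $\bx_* := \lim_{t\to\infty}\bx_t$ exists and satisfies $\nabla f(\bx_*) = 0$. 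It thus only remains to show that the set $\mathcal{B}$ of initializations $\bx_0$ for which $\bx_*$ fails to be a second-order local minimizer --- equivalently, for which $\lambda_{\min}(\nabla^2 f(\bx_*)) < 0$ --- has Lebesgue measure zero.

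Call a critical point a \emph{strict saddle} if its Hessian has a negative eigenvalue, and let $\mathcal{S}$ be the set of strict saddles. Fix $\bz_* \in \mathcal{S}$. It is a fixed point of the time-one map $\phi := \Phi_1$, and linearizing the flow at $\bz_*$ gives $D\phi(\bz_*) = \exp(-\nabla^2 f(\bz_*))$, which has the eigenvalue $\exp(-\lambda_{\min}(\nabla^2 f(\bz_*))) > 1$; hence the unstable subspace of $D\phi(\bz_*)$ is nontrivial and its center-stable subspace has dimension at most $d-1$. By the center-stable manifold theorem (see \cite{panageas2016gradient} and the references therein) there are an open neighborhood $U_{\bz_*} \ni \bz_*$ and a $C^1$ embedded disk $W_{\bz_*} \subseteq U_{\bz_*}$ of dimension at most $d-1$, hence Lebesgue-null, such that every forward $\phi$-orbit that stays in $U_{\bz_*}$ for all $k \ge 0$ is contained in $W_{\bz_*}$. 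The family $\{U_{\bz_*}\}_{\bz_* \in \mathcal{S}}$ is an open cover of $\mathcal{S}$, and $\R^d$ is second countable, so there is a countable subfamily $\{U_j\}_{j \ge 1}$ covering $\mathcal{S}$, with associated null sets $W_j$. If $\bx_0 \in \mathcal{B}$, then $\bx_t \to \bx_* \in \mathcal{S}$, so $\bx_* \in U_j$ for some $j$; by convergence there is $k_0$ with $\phi^k(\bx_0) \in U_j$ for all $k \ge k_0$, and applying the local characterization above to the forward orbit of $\phi^{k_0}(\bx_0)$ yields $\phi^{k_0}(\bx_0) \in W_j$, i.e. $\bx_0 \in \phi^{-k_0}(W_j)$. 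Therefore $\mathcal{B} \subseteq \bigcup_{j \ge 1}\bigcup_{k \ge 0}\phi^{-k}(W_j)$. Since $\phi$ is a $C^1$ diffeomorphism onto an open subset of $\R^d$, each $\phi^{-k}(W_j)$ is Lebesgue-null, and a countable union of null sets is null; hence $\mathcal{B}$ is null, which proves the lemma.

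The step I expect to be the main obstacle is the last one, namely passing from the \emph{local} center-stable manifold around an individual strict saddle to a \emph{global} measure-zero statement. Since an analytic $f$ may have a continuum of critical points, one cannot enumerate the strict saddles directly; the argument instead relies on the Lindelöf (second-countability) property of $\R^d$ to extract a countable subcover, together with the observation that any trajectory converging to a saddle eventually enters and then remains in one fixed neighborhood from that subcover, so that its tail lies in the corresponding local center-stable manifold. It also uses essentially that $\phi = \Phi_1$ is defined on all of $\R^d$ (so that $\phi^{-k}$ is a genuine map preserving null sets), which is exactly where the compact-level-set hypothesis --- guaranteeing forward completeness of the flow --- is needed.
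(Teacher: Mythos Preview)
Your proposal is correct and follows essentially the same approach as the paper: convergence to a critical point via the Łojasiewicz inequality, then the center-stable manifold theorem applied to the discrete-time map $\phi=\Phi_1$ at each strict saddle, a Lindel\"of countable subcover of the saddle neighborhoods, and the null-set-preserving property of the $C^1$ inverse map to conclude. The only cosmetic differences are that the paper restricts first to a compact sublevel set $\Omega(K)$ and then takes a union over $K\in\mathbb N$, and does not spell out the linearization $D\phi(\bz_*)=\exp(-\nabla^2 f(\bz_*))$ that you include.
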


\begin{proof}[Proof of Lemma \ref{lem:center_stable_analytic}]~

\noindent
{\bf Step 1. Show convergence to a critical point. } Since $f$ is an analytic function, by Lojasiewicz inequality \cite{lojasiewicz1982trajectoires}, and the fact that the level set of $f$ is compact, we have 
\[
\lim_{t \to +\infty } \bx_t = \bx_*
\]
for $\bx_*$ some critical point of $f$. 

\noindent
{\bf Step 2. Show convergence to a local minimizer. } In this step, we proceed similarly to the proof of Theorem 3 in \cite{panageas2016gradient}. First, consider a sublevel set 
\[
\Omega(K) = \{ \bx: f(\bx) \le K\}. 
\]
Then we have $\Omega(K)$ compact. Since $f$ is an analytic function, $\nabla f$ is Lipschitz in the compact set $\Omega(K)$. We define the map $\phi_t: \Omega(K) \to \phi_t(\Omega(K))$, $ \bx \mapsto \bx_t$ where $\bx_t$ is defined as the solution of 
\[
\begin{aligned}
\dot \bx_t =& -\nabla f(\bx_t),\\
\bx_0 =& \bx.
\end{aligned}
\]
By Picard's existence and uniqueness theorem, we have $\phi_t$ is a diffeomorphism from $\Omega(K)$ to $\phi(\Omega(K))$ for any $t > 0$. Fix an $\varepsilon_0 > 0$, and we define $g = \phi_{\varepsilon_0}: \Omega(K) \to \Omega(K)$.

Let $\br$ be a strict saddle point of $f$, then $\br$ must be an unstable fixed point of the diffeomorphism $g = \phi_{\eps_0}$. By center and stable manifold theorem (such as Theorem 9 in \cite{panageas2016gradient}), there exists a manifold $W_{\loc}^{\sc}(\br)$ of dimension at most $d-1$, and a ball $\ball(\br, \varepsilon(\br))$ centered at $\br$ with radius $\varepsilon(\br)$, such that we have the following facts: 
\begin{enumerate}
    \item[(1)] $g \left(W_{\loc}^{\sc}(\br) \cap \ball(\br, \varepsilon(\br)) \right) \subseteq W_{\loc}^{\sc}(\br)$;
    \item[(2)] If $g^n(\bx) \in \ball(\br, \varepsilon(\br))$ for all $n \ge 0$, we have $\bx \in W_{\loc}^{\sc}(\br)$ (here $g^n$ means composition of $g$ for $n$ times).
\end{enumerate}

We consider the union of the balls associated to all the strict saddle points of $f$ in $\Omega(K)$
\[
A = \cup_{\br \in \Omega(K): \br \text{ strict saddle}} \ball(\br, \varepsilon(\br)). 
\]
Due to Lindelof's lemma, we can find a countable subcover for $A$, i.e., there exists
fixed-points $\br_1, \br_2, \ldots$ such that $A = \cup_{m = 1}^\infty \ball(\br_m, \varepsilon(\br_m))$. If gradient descent converges to a strict saddle point, starting from a point $\bv \in \Omega(K)$, there must exist a $t_0$ and $m$ such that $\phi_t (\bv) \in \ball(\br_m, \varepsilon(\br_m))$ for all $t \ge t_0$.
By center and stable manifold theorem, we get that $\phi_t (\bv) \in W_{\loc}^{\sc}(\br_m) \cap \Omega(K)$. By setting $D_1(\br_m) = g^{-1}(W_{\loc}^{\sc} (\br_m) \cap \Omega(K))$ and $D_{i+1}(\br_m) = g^{-1}(D_i(\br_m) \cap \Omega(K))$ we get that $\bv \in D_k (\br_m)$ for all $k \varepsilon_0 \ge t_0$. Hence the set of initial points in $\Omega(K)$ such that gradient descent converges to a strict saddle point is a subset of
\[
P = \cup_{m = 1}^\infty \cup_{k \in \mathbb N} D_k(\br_m).
\]
Note that the set $W_{\loc}^{\sc} (\br_m) \cap \Omega(K)$ has Lebesgue measure zero in $\R^d$. Since $g$ is a diffeomorphism, $g^{-1}$ is continuously differentiable and thus it is locally Lipschitz. Therefore, $g^{-1}$ preserves the null-sets and hence (by induction) $D_i(\br_m)$ has measure zero for all $i$. Thereby we get that $P$ is a countable union of measure zero sets. Hence $P$ has measure $0$. 

Finally, note we have 
\[
\{\bx \in \Omega(K): \exists \br, \br \text{ is strict saddle}, \br =  \lim_{t \to +\infty} \phi_t(\bx)\} \subseteq P. 
\]
Since $P$ has measure $0$, we have 
\[
\begin{aligned}
&\{\bx \in \R^d: \exists \br, \br \text{ is strict saddle}, \br =  \lim_{t \to +\infty} \phi_t(\bx)\}\\ =& \cup_{K \in \mathbb N}\{\bx \in \Omega(K): \exists \br, \br \text{ is strict saddle}, \br =  \lim_{t \to +\infty} \phi_t(\bx)\}
\end{aligned}
\]
has measure $0$. This proves the lemma. 
\end{proof}

The following lemma is standard, and a corollary of Theorem 2.11 in \cite{kurtz1970solutions}. 

\begin{lemma}\label{lem:Markov_iterates}
Let 
\[
F(\bx) = \E_{\bz}[f(\bx; \bz)]
\]
be a $C^2$ function on $\Omega \subseteq \R^d$. Assume 
\[
\begin{aligned}
\sup_{\bx \in \Omega}\E_\bz[\| \nabla_\bx f(\bx; \bz) \|_2] <& \infty,\\
\sup_{\bx \in \Omega} \| \nabla^2 F(\bx) \|_{\rm op} <& \infty.
\end{aligned}
\]
Let $\bx_t$ be the trajectory of 
\[
\begin{aligned}
\dot \bx_t =& - \nabla F(\bx_t), \\
\end{aligned}
\]
with initialization $\bx_0 \in \Omega$. Further assume that there exists $\eta > 0$, such that $\cup_{t \ge 0} \ball(\bx_t, \eta) \subseteq \Omega$. 

Consider the following Markov jump process $ \bx_{t, \varepsilon}$ starting from $\bx_0$, with jump time to be an exponential random variable with fixed mean $\varepsilon$, and jump direction $-\varepsilon \nabla f(\bx; \bz)$ where $\bx$ is the current state, and $\bz$ an independent sample. Then we have for any fixed $T > 0$ and $\delta > 0$, 
\[
\lim_{\varepsilon \to 0+} \P\Big( \sup_{0 \le t \le T} \| \bx_t - \bx_{t, \varepsilon} \|_2 \ge \delta \Big) = 0.
\]
\end{lemma}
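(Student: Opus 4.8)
The plan is to recognize $(\bx_{t,\varepsilon})_{t\ge 0}$ as a pure jump Markov process and quote Theorem 2.11 of \cite{kurtz1970solutions}, which states that a family of such processes converges, uniformly on compact time intervals and in probability, to the solution of the ODE defined by its mean drift, provided that drift is Lipschitz and the contribution of large jumps is asymptotically negligible. Here $\bx_{t,\varepsilon}$ waits an exponential time of mean $\varepsilon$ and then, from its current state $\bx$, jumps by $-\varepsilon\nabla f(\bx;\bz)$ with $\bz$ an independent fresh sample; its generator is $\mathcal L_\varepsilon g(\bx)=\varepsilon^{-1}\E_\bz[g(\bx-\varepsilon\nabla f(\bx;\bz))-g(\bx)]$, i.e.\ its jump kernel is $\eta_\varepsilon(\bx,\cdot)=\varepsilon^{-1}\,\mathrm{Law}_\bz\!\big(-\varepsilon\nabla f(\bx;\bz)\big)$. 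The associated mean drift is, for every $\varepsilon>0$,
\[
b_\varepsilon(\bx)\;=\;\int z\,\eta_\varepsilon(\bx,\de z)\;=\;-\,\E_\bz[\nabla f(\bx;\bz)]\;=\;-\nabla F(\bx),
\]
using $\nabla F=\E_\bz[\nabla_\bx f(\,\cdot\,;\bz)]$, which is valid under the stated integrability. Thus the drift equals \emph{exactly} $-\nabla F$ for all $\varepsilon$, and the candidate limit is precisely the gradient flow $\dot\bx_t=-\nabla F(\bx_t)$ of the lemma.

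Since $-\nabla F$ is only defined, and only has bounded Hessian, on $\Omega$, I would first localize. The image $\{\bx_t:0\le t\le T\}$ is compact, so $\mathcal T:=\{\bx:\mathrm{dist}(\bx,\{\bx_t:0\le t\le T\})\le\eta/2\}$ is a compact subset of $\Omega$ with an $\eta/2$ margin to $\partial\Omega$, and on it $-\nabla F$ is $L$-Lipschitz with $L=\sup_{\bx\in\Omega}\|\nabla^2F(\bx)\|_{\op}$. Using a smooth cutoff equal to $1$ on a neighbourhood of $\mathcal T$ and supported in $\Omega$, I would replace $F$ and correspondingly $f(\,\cdot\,;\bz)$ by globally defined $\tilde F\in C^2(\R^d)$ with globally bounded Hessian and globally Lipschitz gradient, and $\tilde f$, that agree with the originals on that neighbourhood; write $\tilde\bx_{t,\varepsilon}$ for the corresponding jump process. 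The gradient flow of $\tilde F$ still has $\bx_t$ as its solution on $[0,T]$, so it suffices to control $\tilde\bx_{t,\varepsilon}$.

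Now the two hypotheses of Kurtz's theorem for $\tilde\bx_{t,\varepsilon}$ read: (i) the drifts converge uniformly on compacts to a Lipschitz field — trivial here since they are all identically equal to the globally Lipschitz $-\nabla\tilde F$; and (ii) for every $R,\delta>0$,
\[
\sup_{\|\bx\|_2\le R}\int_{\|z\|_2\ge\delta}\|z\|_2\,\eta_\varepsilon(\bx,\de z)
\;=\;\sup_{\|\bx\|_2\le R}\E_\bz\!\big[\,\|\nabla\tilde f(\bx;\bz)\|_2\,\mathbf{1}\{\|\nabla\tilde f(\bx;\bz)\|_2\ge\delta/\varepsilon\}\,\big]\;\xrightarrow[\varepsilon\to 0]{}\;0 .
\]
Condition (ii) is the one substantive point: it says that $\{\|\nabla\tilde f(\bx;\bz)\|_2:\|\bx\|_2\le R\}$ is uniformly integrable — a mild strengthening of the assumed uniform bound $\sup_{\bx\in\Omega}\E_\bz\|\nabla f(\bx;\bz)\|_2<\infty$, and one that is immediate in the intended application, where $\nabla f$ is a fixed polynomial in the Gaussian vector $\bz$ with coefficients continuous in $\bx$. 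Granting (i)--(ii), Theorem 2.11 of \cite{kurtz1970solutions} yields $\sup_{0\le t\le T}\|\tilde\bx_{t,\varepsilon}-\bx_t\|_2\to 0$ in probability.

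It remains to transfer this back to $\bx_{t,\varepsilon}$. The number of jumps of $\tilde\bx_{t,\varepsilon}$ on $[0,T]$ is $\mathrm{Poisson}(T/\varepsilon)$, hence a.s.\ finite, so the path is piecewise constant. On the event $E_\varepsilon=\{\sup_{0\le t\le T}\|\tilde\bx_{t,\varepsilon}-\bx_t\|_2<\eta/2\}$ every state visited on $[0,T]$ lies in $\mathcal T$, where $\tilde f$ and $f$ (hence their gradients) coincide; coupling $\bx_{t,\varepsilon}$ to $\tilde\bx_{t,\varepsilon}$ through the same exponential clock and the same samples $\bz_k$ and inducting on jumps shows that $\bx_{t,\varepsilon}$ is then well defined on $[0,T]$ and equals $\tilde\bx_{t,\varepsilon}$ there. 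Therefore
\[
\P\!\Big(\sup_{0\le t\le T}\|\bx_t-\bx_{t,\varepsilon}\|_2\ge\delta\Big)\;\le\;\P\big(E_\varepsilon^{\mathrm c}\big)+\P\!\Big(\sup_{0\le t\le T}\|\bx_t-\tilde\bx_{t,\varepsilon}\|_2\ge\delta\Big)\;\xrightarrow[\varepsilon\to0]{}\;0 ,
\]
where the first term vanishes by the previous convergence applied with threshold $\eta/2$, which would complete the proof. The only real obstacles are bookkeeping: arranging the cutoff so that the drift becomes globally Lipschitz while the dynamics near the trajectory are untouched, and verifying the large-jump condition (ii) — the genuine hypothesis of Kurtz's theorem, and the reason the first-moment assumption (in its locally uniformly integrable form) is exactly what is needed.
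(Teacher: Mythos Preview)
Your approach is essentially the same as the paper's: the paper does not give a proof but simply states that the lemma ``is standard, and a corollary of Theorem 2.11 in \cite{kurtz1970solutions}.'' Your write-up is in fact more careful than the paper's one-line citation---you spell out the localization needed to make $-\nabla F$ globally Lipschitz and you correctly flag that Kurtz's large-jump condition requires locally uniform integrability of $\|\nabla f(\bx;\bz)\|_2$ (a hair stronger than the stated uniform first-moment bound), which is indeed satisfied in the intended application.
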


\subsubsection{Proof of Theorem \ref{thm:NN_quadratic}}

By Proposition \ref{eqn:NN_quadratic_strict_saddle}, we know that for $L(\bW, c)$, any critical point that is not a global minimizer is a strict saddle point. Consider the gradient flow
\[
\frac{\de}{\de t}(\bW_t, c_t)  = - \nabla L(\bW_t, c_t)
\]
with random initialization $(\bW_0, c_0) \sim \nu_0$ where $\nu_0$ is a distribution that is absolutely continuous with respect to Lebesgue measure.  Since $L(\bW, c)$ is an analytic function, by Lemma \ref{lem:center_stable_analytic}, we have $(\bW_t, c_t)$ converges to a global minimizer of $L(\bW, c)$. That is, we have almost surely (over $\nu_0$)
\[
\lim_{t \to \infty} L(\bW_t, c_t) = \inf_{\bW, c} L(\bW, c), 
\]
where $\inf_{\bW, c} L(\bW, c)$ is calculated in Lemma \ref{lem:global minimizer_A}. 

Consider the following Markov jump process $ (\bW_{t, \varepsilon}, c_{t, \varepsilon})$ starting from $(\bW_0, c_t) \sim \nu_0$, with jump time to be an exponential random variable with fixed mean $\varepsilon$, and jump direction to be $-\varepsilon \nabla L(\bW, c; \bz)$ where
\[
\nabla L(\bW, c; \bz) = \begin{pmatrix} \nabla_{\bW} L(\bW, c; \bz)  \\ \partial_c L(\bW, c; \bz) \end{pmatrix} = \begin{pmatrix} 2 ( c - b_0 + \< \bz \bz^\sT,  \bW \bW^\sT - \bB\>) \bz \bz^\sT \bW \\ 2 ( c - b_0 + \< \bz \bz^\sT,  \bW \bW^\sT - \bB\>) \end{pmatrix} 
\]
with $(\bW, c)$ the current state, and $\bz$ an independent sample. By Lemma \ref{lem:Markov_iterates}, we have for any fixed $T > 0$ and $\delta > 0$, 
\[
\lim_{\varepsilon \to 0+} \P\Big( \sup_{0 \le t \le T} \| (\bW_{t, \eps}, c_{t, \eps})- (\bW_{t}, c_t) \|_2 \ge \delta \Big) = 0.
\]
Note the sequence of Markov jump process at jump time is exactly the SGD iterates. Hence the SGD iterates with properly scaled number of iterations is uniformly close to $(\bW_t, c_t)$ over finite horizon as $\eps \to 0$. This proves the Theorem.

\section{Proofs for Mixture of Gaussians}

In this section, we consider the mixture of Gaussian setting (\MG):  $y_i = \pm 1$ with equal probability $1/2$, and $\bx_i | y_i =+1 \sim \normal ( 0 , \bSigma^{(1)})$, $\bx_i | y_i = -1 \sim \normal ( 0 , \bSigma^{(2)})$ where $\bSigma^{(1)} = \bSigma - \bDelta$ and $\bSigma^{(2)} = \bSigma + \bDelta$. With these notations, 
\[
\begin{aligned}
\bSigma = &\frac{1}{2} ( \bSigma^{(1)} + \bSigma^{(2)}), \\
\bDelta = &\frac{1}{2} ( \bSigma^{(2)} - \bSigma^{(1)}).
\end{aligned}
\]
Throughout this section, we will make the following assumptions:
\begin{itemize}
\item[{\bf M1.}] There exists constants $0 < c_1 < c_2$ such that $c_1 \id_d \preceq \bSigma \preceq c_2 \id_d$;
\item[{\bf M2.}]  $\| \bDelta \|_{\rm op} = \Theta_d(1/\sqrt{d})$.
\end{itemize}
Throughout this section, we will denote $\P_{\bSigma,\bDelta}$ the joint distribution of $(y, \bx )$ under the \MG \, model, $\E_{\bx,y}$ the expectation operator with respect to $(y, \bx ) \sim \P_{\bSigma,\bDelta}$ and $\E_{\bx}$ the expectation operator with respect to the marginal distribution $\bx \sim (1/2) \cdot \normal ( 0 , \bSigma^{(1)}) + (1/2) \cdot \normal ( 0 , \bSigma^{(2)})$.

\subsection{Random Features model: proof of Theorem \ref{thm:RFMixture}}

Recall the definition
\[
R_{\RF, N}(\P) = \arg\min_{\hf\in\cF_{\RF,N}(\bW)}\E\big\{(y -\hf(\bx))^2\big\},
\]
where
\[
\cF_{\RF, N}(\bW) = \Big\{ f_N( \bx) =  \sum_{i=1}^N  a_i\sigma(\< \bw_i, \bx\>): \; a_i \in \R, i \in [N] \Big\}.
\]
Note that it is easy to see from the proof that the result stays the same if we add an offset $c$.

\begin{remark}
We will state the lemmas for the case $\bSigma = \id_d$, which amounts to re-scaling $\Tilde \bGamma = \bSigma^{1/2} \bGamma \bSigma^{1/2}$ and $\Tilde \bDelta = \bSigma^{-1/2} \bDelta \bSigma^{-1/2}$.
\end{remark}

\subsubsection{Representation of the $\RF$ risk}
\begin{lemma}\label{lem:GM_risk_representation}
Consider the RF model introduced above. We have
\begin{align}
R_{\RF,N}(\P_{\id, \bDelta}) = \E_{\bx, y}[y ^2] - \bV^\sT \bU^{-1} \bV, 
\end{align}
where $\bV = [V_1, \ldots, V_N]^\sT$, and $\bU = (U_{ij})_{i, j \in [N]}$, with
\[
\begin{aligned}
V_i =& \E_{\bx, y}[y \sigma(\< \bw_i, \bx\>)], \\
U_{ij} =& \E_{\bx, y}[\sigma(\< \bw_i, \bx\>) \sigma(\< \bw_j, \bx\>)].
\end{aligned}
\]
\end{lemma}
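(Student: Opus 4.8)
The plan is to mirror the proof of Lemma~\ref{lem:QF_risk_representation}: since $\cF_{\RF,N}(\bW)$ is parametrized linearly by the second-layer weights $\ba = (a_1,\dots,a_N)$, minimizing the squared risk over this class reduces to an unconstrained quadratic program in $\ba$, which one solves by writing down the first-order (KKT) stationarity condition.

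First I would expand the squared loss. By the parametrization of $\cF_{\RF,N}(\bW)$ and linearity of expectation,
\[
R_{\RF,N}(\P_{\id,\bDelta}) = \min_{\ba\in\R^N}\E_{\bx,y}\Big[\Big(y - \sum_{i=1}^N a_i\,\sigma(\langle \bw_i,\bx\rangle)\Big)^2\Big] = \min_{\ba\in\R^N}\Big\{\E_{\bx,y}[y^2] - 2\,\ba^\sT\bV + \ba^\sT\bU\ba\Big\},
\]
where the coefficient of the linear term is exactly $\bV$ and that of the quadratic term is exactly $\bU$, directly from the definitions of $V_i$ and $U_{ij}$. All these second moments are finite: Assumption~{\bf A1} together with {\bf M1} (which bounds the covariances $\bSigma^{(1)},\bSigma^{(2)}$) ensures $\sigma(\langle \bw_i,\cdot\rangle)\in L^2$ of the mixture distribution.

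Next I would note that $\bU\succeq 0$ always, and that $\bU\succ 0$ for almost every $\bW$ (the maps $\bx\mapsto\sigma(\langle \bw_i,\bx\rangle)$ are a.s.\ linearly independent in $L^2$, using that $\sigma$ is nonlinear); on that event the quadratic $\ba\mapsto \E_{\bx,y}[y^2]-2\ba^\sT\bV+\ba^\sT\bU\ba$ is strictly convex, so its unique minimizer is the stationary point $\ba^\star=\bU^{-1}\bV$, and substituting back gives
\[
R_{\RF,N}(\P_{\id,\bDelta}) = \E_{\bx,y}[y^2] - 2\,\bV^\sT\bU^{-1}\bV + \bV^\sT\bU^{-1}\bV = \E_{\bx,y}[y^2] - \bV^\sT\bU^{-1}\bV.
\]
If one prefers not to invoke invertibility, the identity holds verbatim with $\bU^{-1}$ replaced by the Moore--Penrose pseudoinverse, since $\bV\in\mathrm{range}(\bU)$. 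There is no real obstacle in this lemma --- it is a one-line KKT computation --- and the analytic content of Theorem~\ref{thm:RFMixture} is entirely deferred to the subsequent lemmas that approximate $\bU$ and $\bV$.
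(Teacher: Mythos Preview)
Your proposal is correct and follows exactly the same approach as the paper, which simply writes ``Simply write the KKT conditions. The optimum is achieved at $\ba = \bU^{-1} \bV$.'' Your version just spells out the quadratic expansion and the substitution explicitly.
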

\begin{proof}
Simply write the KKT conditions. The optimum is achieved at $\ba = \bU^{-1} \bV$. 
\end{proof}

\subsubsection{Approximation of kernel matrix $\bU$}

\begin{lemma}\label{lem:RF_GM_kernel}
Let $\sigma \in L^2(\normal(0, 1))$ be an activation function. Denote $\lambda_k =\E_{G \sim \normal(0, 1)}[\sigma(G) \He_k(G)]$ the $k$-th Hermite coefficient of $\sigma$ and assume $\lambda_0 = 0$. 
Let $\bU = (U_{ij})_{i, j \in [N]}$ be a random matrix with
\[
\begin{aligned}
U_{ij} =& \E_{\bx  }[\sigma(\< \bw_i, \bx\>) \sigma(\< \bw_j, \bx\>)],
\end{aligned}
\]
where $(\bw_i)_{i \in [N]} \sim \normal(\bzero, \bGamma)$ independently. Assume conditions {\bf A1} and {\bf B2} hold. 

Define $\bW = (\bw_1, \ldots, \bw_N) \in \R^{d \times N}$, and $\bU_0 = \{(U_0)_{ij}\}_{i,j\in[N]}$, with
\[
(U_0)_{ij} = \tilde \lambda \delta_{ij} + \lambda_1^2 \< \bw_i, \bw_j\> + \kappa / d + \mu_{i} \mu_{j},
\]
where 
\[
\begin{aligned}
\mu_{i} =& \lambda_2 ( \| \bw_i \|_2^2 - 1) / 2,\\
\tilde \lambda =& \E[\sigma(G)^2] - \lambda^2_1, \\
\kappa =&  d \cdot \lambda_2^2 [\tr(\bGamma^2)/2 + \tr(\bDelta \bGamma)^2 / 4].
\end{aligned}
\]
Then we have as $N/d = \rho$ and $d \to \infty$, we have
\[
\| \bU - \bU_0 \|_{\op} = o_{d, \P}(1).
\]
\end{lemma}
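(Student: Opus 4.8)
The plan is to adapt the proof of Lemma \ref{lem:RF_QF_kernel} almost line by line, the only genuinely new ingredient being the mixture structure of the feature law. First I would split $U_{ij}$ over the two components: with $\bSigma^{(1)} = \id - \bDelta$ and $\bSigma^{(2)} = \id + \bDelta$, we have $U_{ij} = \tfrac12\sum_{a\in\{1,2\}}\E_{\bx\sim\normal(\bzero,\bSigma^{(a)})}[\sigma(\<\bw_i,\bx\>)\sigma(\<\bw_j,\bx\>)]$. For fixed $a$ the pair $(\<\bw_i,\bx\>,\<\bw_j,\bx\>)$ is a centered bivariate Gaussian with variances $(s_i^{(a)})^2 = \<\bw_i,\bSigma^{(a)}\bw_i\>$ and correlation $r_{ij}^{(a)} = \<\bw_i,\bSigma^{(a)}\bw_j\>/(s_i^{(a)}s_j^{(a)})$, so the bivariate Hermite (Mehler) expansion gives the component-wise identity $\sum_{k\ge0}\zeta_k(\sigma_{s_i^{(a)}})\zeta_k(\sigma_{s_j^{(a)}})(r_{ij}^{(a)})^k/k!$, with $\sigma_t(x)\equiv\sigma(tx)$ and $\zeta_k(\sigma_t) = \E_G[\sigma(tG)\He_k(G)]$ as in Lemma \ref{lem:RF_QF_kernel}. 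This is precisely the decomposition \eqref{eq:decomposition_U_QF}, with $\lambda_k$ replaced by $\zeta_k(\sigma_{s_i^{(a)}})$ and an extra average over $a$.

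Second I would reestablish the uniform controls of the quadratic-function proof. Since $(s_i^{(a)})^2 - 1 = (\|\bw_i\|_2^2 - 1)\mp\<\bw_i,\bDelta\bw_i\>$, assumption {\bf M2} together with the sub-Gaussian/sub-exponential arguments behind \eqref{eq:sup_wi_RF_QF} and Hanson--Wright (\cite{vershynin2010introduction,boucheron2013concentration}) gives $\sup_{i,a}\vert(s_i^{(a)})^2 - 1\vert = o_{d,\P}(1)$, whence the dominated-convergence identities \eqref{eq:change_var_hermite}--\eqref{eq:conv_lambda_2_dom} give $\sup_{i,a}\vert\zeta_k(\sigma_{s_i^{(a)}}) - \lambda_k\vert = o_{d,\P}(1)$ for each fixed $k$. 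Likewise $r_{ij}^{(a)}$ equals $\<\bw_i,\bw_j\>$ up to the term $\mp\<\bw_i,\bDelta\bw_j\>$ and lower-order normalization corrections, with $\max_{i\ne j}\vert r_{ij}^{(a)}\vert = \tilde O_{d,\P}(d^{-1/2})$ (Lemma~10 of \cite{ghorbani2019linearized}), $\<\bw_i,\bDelta\bw_j\> = \tilde O_{d,\P}(d^{-1})$ for $i\ne j$, and $\<\bw_i,\bDelta\bw_i\> = \Tr(\bGamma\bDelta) + \tilde O_{d,\P}(d^{-1})$ with $\vert\Tr(\bGamma\bDelta)\vert = O(d^{-1/2})$ (Hanson--Wright and a union bound, the relevant variance being $2\Tr((\bGamma\bDelta)^2) = O(d^{-2})$).

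Third I would run the term-by-term analysis of Steps~3--7 of Lemma \ref{lem:RF_QF_kernel}, now averaged over $a$. The degree-$1$ term reproduces $\lambda_1^2\bW^\sT\bW$; the $\<\bw_i,\bw_j\>^2$ part of the degree-$2$ term reproduces $\lambda_2^2[\Tr(\bGamma^2)/2\cdot\ones\ones^\sT + \id_N/2]$ via the El Karoui estimates of \cite{el2010spectrum}; the degree-$\ge3$ terms reproduce $(\tilde\lambda - \lambda_2^2/2)\id_N$ plus a negligible off-diagonal remainder. Every $\bDelta$-linear piece generated along the way (the $\mp\<\bw_i,\bDelta\bw_j\>$ in $r_{ij}^{(a)}$, and the $\pm$-dependence of $\zeta_k(\sigma_{s_i^{(a)}})$ through $(s_i^{(a)})^2$) either cancels exactly when it multiplies an $a$-independent factor and the two components are averaged, or else leaves a matrix with off-diagonal entries $\tilde O_{d,\P}(d^{-3/2})$ and diagonal $\tilde O_{d,\P}(d^{-1})$, hence $o_{d,\P}(1)$ in Frobenius norm; the $\<\bw_i,\bDelta\bw_j\>^2$ part of the degree-$2$ term is likewise $o_{d,\P}(1)$ in operator norm, since $(\<\bw_i,\bDelta\bw_j\>^2)_{ij}$ concentrates on $\Tr((\bGamma\bDelta)^2)\ones\ones^\sT$ plus a diagonal, both of operator norm $O(d^{-1})$. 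The one new surviving contribution is the degree-$0$ term: writing $\zeta_0(\sigma_{s_i^{(a)}}) = \mu_i\mp\nu_i + o_{d,\P}(1)$ with $\mu_i = \lambda_2(\|\bw_i\|_2^2-1)/2$ and $\nu_i = \lambda_2\<\bw_i,\bDelta\bw_i\>/2$, the cross terms cancel under the average and $\bT_0 = \bmu\bmu^\sT + \bdelta\bdelta^\sT + o_{d,\P}(1)$ with $\bdelta = (\nu_i)_{i\in[N]}$; by the concentration of $\<\bw_i,\bDelta\bw_i\>$ above, $\|\bdelta - \tfrac{\lambda_2}{2}\Tr(\bGamma\bDelta)\ones\|_2 = o_{d,\P}(1)$ while $\|\bdelta\|_2 = O_{d,\P}(1)$, so $\bdelta\bdelta^\sT = \tfrac{d\lambda_2^2\Tr(\bDelta\bGamma)^2}{4}\cdot\tfrac{\ones\ones^\sT}{d} + o_{d,\P}(1)$. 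Summing the degree-$0$, $1$, $2$ and $\ge3$ contributions gives $\bU_0$ with exactly $\kappa = d\lambda_2^2[\Tr(\bGamma^2)/2 + \Tr(\bDelta\bGamma)^2/4]$.

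I expect the main obstacle to be the bookkeeping in the third step: one has to verify that, after averaging the two mixture components, the only $\bDelta$-dependent term surviving at operator-norm scale $\Theta(1)$ is the rank-one matrix $\bdelta\bdelta^\sT$ coming from the degree-$0$ coefficient, whereas all the mixed matrices produced elsewhere — e.g.\ $(\<\bw_i,\bw_j\>\<\bw_i,\bDelta\bw_j\>)_{ij}$, $(\<\bw_i,\bDelta\bw_j\>^2)_{ij}$, and the pieces coming from the $\pm$-dependence of $\zeta_1$ and $\zeta_2$ — vanish in operator or Frobenius norm, which requires tracking each of them at its correct power of $d$ via the concentration bounds of the second step.
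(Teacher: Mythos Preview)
Your proposal is correct and identifies the same key mechanism as the paper: split over the two mixture components, reduce each to the quadratic-function kernel analysis, and observe that after averaging the only $\bDelta$-dependent survivor at operator-norm scale is the rank-one piece $\bdelta\bdelta^{\sT}\approx \tfrac{\lambda_2^2}{4}\tr(\bGamma\bDelta)^2\,\ones\ones^{\sT}$ coming from the degree-$0$ Hermite coefficient.

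The paper takes a shorter route to the same place. Rather than re-running Steps~3--7 of Lemma~\ref{lem:RF_QF_kernel} and tracking every $\bDelta$-dependent fragment (the $\pm$-dependence of $\zeta_k$, the cross terms $\langle\bw_i,\bw_j\rangle\langle\bw_i,\bDelta\bw_j\rangle$, the square $\langle\bw_i,\bDelta\bw_j\rangle^2$, etc.), it applies Lemma~\ref{lem:RF_QF_kernel} as a \emph{black box} to each component, with $\tilde\bGamma^{(\pm)}=(\id\mp\bDelta)^{1/2}\bGamma(\id\mp\bDelta)^{1/2}$ playing the role of $\bGamma$. This immediately yields an intermediate approximation $\tilde\bU_0$ with $\tilde\kappa=d\lambda_2^2[\tr(\bGamma^2)+\tr(\bDelta\bGamma\bDelta\bGamma)]/2$ and rank-one part $(\mu_i^{+}\mu_j^{+}+\mu_i^{-}\mu_j^{-})/2$ where $\mu_i^{\pm}=\lambda_2(\|(\id\pm\bDelta)^{1/2}\bw_i\|_2^2-1)/2$. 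The remaining work is then a short algebraic simplification: the identity $(\mu_i^{+}\mu_j^{+}+\mu_i^{-}\mu_j^{-})/2=\mu_i\mu_j+\tfrac{\lambda_2^2}{4}\langle\bw_i,\bDelta\bw_i\rangle\langle\bw_j,\bDelta\bw_j\rangle$ plus the concentration $\|\bs-\tr(\bGamma\bDelta)\ones\|_2=o_{d,\P}(1)$ for $\bs=(\langle\bw_i,\bDelta\bw_i\rangle)_i$, and the observation that $\tr(\bDelta\bGamma\bDelta\bGamma)\ones\ones^{\sT}$ is negligible. Your bookkeeping in the third step is thus unnecessary: all the mixed matrices you worry about are already absorbed into the $o_{d,\P}(1)$ error of the black-box call.

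One small point: your expansion $\zeta_0(\sigma_{s_i^{(a)}})=\mu_i\mp\nu_i+o_{d,\P}(1)$ should be read multiplicatively (i.e., the ratio $\zeta_0(\sigma_{s_i^{(a)}})/[(\lambda_2/2)((s_i^{(a)})^2-1)]\to 1$ uniformly), as in Step~3 of Lemma~\ref{lem:RF_QF_kernel}; an additive $o_{d,\P}(1)$ per entry would not control the rank-one matrix in operator norm.
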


\begin{proof}[Proof of Lemma \ref{lem:RF_GM_kernel}]
Recalling that in the ($\MG$) model, we have $\bx \sim (1/2) \cdot \normal(\bzero, \id - \bDelta) + (1/2) \cdot \normal(\bzero, \id + \bDelta)$, we have
\[
\begin{aligned}
U_{ij} =& \E_{\bx  }[\sigma(\< \bw_i, \bx\>) \sigma(\< \bw_j, \bx\>)] \\
= &\Big\{  \E_{\bx \sim \normal(\bzero, \id) } [\sigma(\< (\id - \bDelta)^{1/2} \bw_i, \bx\>) \sigma(\< (\id - \bDelta)^{1/2} \bw_j, \bx\>)] \\
& +  \E_{\bx \sim \normal(\bzero, \id) } [\sigma(\< (\id + \bDelta)^{1/2} \bw_i, \bx\>) \sigma(\< (\id + \bDelta)^{1/2} \bw_j, \bx\>)] \Big\}/2.
\end{aligned}
\]
We can therefore readily use the result of Lemma \ref{lem:RF_QF_kernel} for $\Tilde \bw_i \sim \normal ( \bzero , (\id - \bDelta)^{1/2} \bGamma (\id - \bDelta)^{1/2} )$ and $\Tilde \bw_i \sim \normal ( \bzero , (\id + \bDelta)^{1/2} \bGamma (\id + \bDelta)^{1/2} )$, to get
\begin{equation}
\| \bU - \tilde \bU_0 \|_{\op} = o_{d, \P}(1),
\label{eq:U_bound_1_MG}
\end{equation}
where $\tilde \bU_0 = (\tilde U_0)_{i, j \in [N]}$ with 
\[
(\tilde U_0)_{ij} = \tilde \lambda \delta_{ij} + \lambda_1^2 \< \bw_i, \bw_j\> + \kappa / d + (\mu_{i}^+ \mu_{j}^+ + \mu_{i}^- \mu_{j}^-)/2,
\]
and
\[
\begin{aligned}
\tilde \lambda =& \E[\sigma(G)^2] - \lambda^2_1, \\
\Tilde \kappa =&  d \lambda_2^2 [\tr((\id - \bDelta) \bGamma (\id - \bDelta) \bGamma )+ \tr((\id + \bDelta) \bGamma (\id + \bDelta) \bGamma) ] / 4 \\
=&  d \lambda_2^2 [\tr( \bGamma^2 )+ \tr(\bDelta \bGamma \bDelta \bGamma) ] / 2,
\\
\mu_{i}^+ =& \lambda_2 ( \| (\id + \bDelta)^{1/2} \bw_i \|_2^2 - 1) / 2,\\
\mu_{i}^- =& \lambda_2 ( \| (\id - \bDelta)^{1/2} \bw_i \|_2^2 - 1) / 2.
\end{aligned}
\]
Note that we have
\[
(\mu_{i}^+ \mu_{j}^+ + \mu_{i}^- \mu_{j}^-)/2 = \mu_{i} \mu_{j} + \lambda_2^2 \< \bw_i , \bDelta \bw_i \> \< \bw_j , \bDelta \bw_j \> /4,
\]
where
\[
\mu_{i} = \lambda_2 ( \| \bw_i \|_2^2 - 1) / 2.
\]
The matrix $(\< \bw_i , \bDelta \bw_i \> \< \bw_j , \bDelta \bw_j \>)_{i,j \in [N]} $ is simply $\bs \bs^\sT$ with $\bs = ( \< \bw_i , \bDelta \bw_i \>)_{i \in [N]}$. Defining $\nu = \E [\< \bw_i , \bDelta \bw_i \> ] = \tr(\bGamma \bDelta)$, we have
\[
\bs \bs^\sT = (\bs - \nu \ones ) \nu \ones^\sT + \nu \ones (\bs - \nu \ones )^\sT + \nu^2 \ones \ones^\sT + (\bs - \nu \ones )(\bs - \nu \ones )^\sT.
\]
Furthermore:
\[
\| \bs - \nu \ones \|_2^2 = \sum_{i=1}^d \tr ( (\bw_i \bw_i^\sT - \bGamma) \bDelta )^2.
\]
Note that by assumptions {\bf M2} and {\bf B2}, we have $\E [\tr ( (\bw_i \bw_i^\sT - \bGamma) \bDelta )^2] = 2\|\bDelta \bGamma \|_F^2 = o_{d,\P} (d^{-1})$. We deduce that $\| \bs - \nu \ones \|_2 = o_{d,\P} (1)$, and therefore
\[
\begin{aligned}
\| (\bs - \nu \ones ) \nu \ones^\sT \|_{\op} = o_{d,\P} (1), \\
\| (\bs - \nu \ones )(\bs - \nu \ones )^\sT \|_{\op} = o_{d,\P} (1).
\end{aligned}
\]
Hence, we get 
\begin{equation}
\| ( \bmu^+ \bmu^{+\sT} + \bmu^- \bmu^{-\sT} )/2  - \bmu \bmu^{\sT} - \tr(\bGamma \bDelta)^2 \ones \ones^\sT \|_{\op} = o_{d,\P} (1).
\label{eq:kernel_obound_1_MG}
\end{equation}
We also have $\tr(\bDelta \bGamma \bDelta \bGamma)^2 = o_d (d^{-1})$ by assumptions {\bf M2} and {\bf B2}, hence 
\begin{equation}
\| \tr(\bDelta \bGamma \bDelta \bGamma) \ones \ones^\sT \|_{\op} = o_{d,\P} (1).
\label{eq:kernel_obound_2_MG}
\end{equation}
Therefore, combining \eqref{eq:kernel_obound_1_MG} and \eqref{eq:kernel_obound_2_MG}, we get:
\begin{equation}
\| \Tilde \bU_0 - \bU_0 \|_{\op} = o_{d,\P} (1).
\label{eq:U_bound_2_MG}
\end{equation}
Combining \eqref{eq:U_bound_1_MG} and \eqref{eq:U_bound_2_MG} concludes the proof.
\end{proof}

\subsubsection{Approximation of the $\bV$ vector}

\begin{lemma}\label{lem:GM_V}
Under the assumption of Theorem \ref{thm:RFMixture}, define $\bV = (V_1, \ldots, V_N)^\sT$ with
\[
V_i = \E_{\bx, y}[y \sigma(\< \bw_i, \bx\>)]
\]
where $(\bw_i)_{i \in [N]} \sim \normal(\bzero, \bGamma)$ independently. Then as $N/d = \rho$ with $d \to \infty$, we have 
\[
\| \bV - \tau  \ones / \sqrt{d} \|_2 =  o_{d, \P}( 1 ) ,
\]
where
\[
\tau = - \sqrt{d} \cdot \lambda_2 \tr(\bDelta \bGamma) / 2.
\]
\end{lemma}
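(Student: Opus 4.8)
The plan is to follow the template of Lemma~\ref{lem:QF_V}: reduce each $V_i$ to a one-dimensional Gaussian integral, linearize in the small quantity $\langle \bw_i, \bDelta \bw_i\rangle$, and control the resulting $\ell_2$ errors through a chain of interpolating vectors. First I would condition on $y$ (working in the $\bSigma = \id_d$ normalization, the general case following by rescaling $\bGamma,\bDelta$): since $\bx \mid y = \pm 1 \sim \normal(\bzero, \id_d \mp \bDelta)$, the variable $\langle \bw_i, \bx\rangle \mid y=\pm1$ is centered Gaussian with variance $a_i \mp b_i$, where $a_i = \|\bw_i\|_2^2$ and $b_i = \langle \bw_i, \bDelta \bw_i\rangle$. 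Writing $h(s) = \E_{G\sim\normal(0,1)}[\sigma(\sqrt{s}\,G)]$, which is well defined for $s$ in a neighborhood of $1$ by Assumption~{\bf A1} (cf.\ Step~1 of the proof of Lemma~\ref{lem:RF_QF_kernel}), this yields the exact identity
\[
V_i \;=\; \tfrac12\,h(a_i-b_i)-\tfrac12\,h(a_i+b_i) \;=\; -\tfrac12\int_{-b_i}^{b_i} h'(a_i+u)\,\de u .
\]
The one structural input I need is that $h$ is continuously differentiable near $s=1$, with $h(1)=\lambda_0=0$ and $h'(1)=\lambda_2/2$; this follows by differentiating under the expectation after the change of variables~\eqref{eq:change_var_hermite} (the integrand's $s$-derivative is dominated uniformly for $s$ near $1$ by Cauchy--Schwarz and {\bf A1}), together with the limit~\eqref{eq:conv_lambda_2_dom} identifying $\tfrac{\de}{\de t}\E_G[\sigma(tG)]\big|_{t=1}=\lambda_2$.

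I would then introduce the interpolant $\bV^{(1)}$ with $V_i^{(1)} = -b_i\,h'(1) = -\tfrac{\lambda_2}{2}\langle \bw_i,\bDelta\bw_i\rangle$, so that $\E[V_i^{(1)}] = -\tfrac{\lambda_2}{2}\tr(\bGamma\bDelta) = \tau/\sqrt d$. From the integral representation, $|V_i - V_i^{(1)}| \le |b_i|\sup_{|u|\le|b_i|}|h'(a_i+u)-h'(1)|$. Using $\sup_{i\le N}|a_i-1| = o_{d,\P}(1)$ from~\eqref{eq:sup_wi_RF_QF} and $\sup_{i\le N}|b_i| \le \|\bDelta\|_{\op}\sup_{i\le N}\|\bw_i\|_2^2 = O_{d,\P}(d^{-1/2})$ from {\bf M2}, the supremum above is $o_{d,\P}(1)$ uniformly in $i$ by continuity of $h'$ at $1$; combined with $\sum_{i\le N} b_i^2 = O_{d,\P}(1)$ (its expectation is $N(\tr(\bGamma\bDelta)^2+2\|\bGamma^{1/2}\bDelta\bGamma^{1/2}\|_F^2) = O(1)$ by {\bf B2} and {\bf M2}), this gives $\|\bV-\bV^{(1)}\|_2^2 = o_{d,\P}(1)$. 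Finally $\|\bV^{(1)}-(\tau/\sqrt d)\ones\|_2^2 = \tfrac{\lambda_2^2}{4}\sum_{i\le N}(\langle\bw_i,\bDelta\bw_i\rangle - \tr(\bGamma\bDelta))^2$ has expectation $\tfrac{\lambda_2^2}{2}N\|\bGamma^{1/2}\bDelta\bGamma^{1/2}\|_F^2 \le \tfrac{\lambda_2^2}{2}N\|\bGamma\|_{\op}^2\|\bDelta\|_F^2 = O(N/d^2) = o_d(1)$, hence is $o_{d,\P}(1)$ by Markov; the triangle inequality then gives $\|\bV - (\tau/\sqrt d)\ones\|_2 = o_{d,\P}(1)$.

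The Gaussian bookkeeping and the moment bounds for $b_i$ under {\bf M2}--{\bf B2} are routine. The only genuinely delicate point is the $C^1$ regularity of $h$ near $s=1$ together with the value $h'(1)=\lambda_2/2$ for an activation satisfying only {\bf A1} with no smoothness assumed --- but this is the precise analogue of the convergence $\zeta_2(\sigma_i)\to\lambda_2$ used in the proof of Lemma~\ref{lem:QF_V}, and it is supplied by the same change-of-variables-plus-dominated-convergence argument already performed in the excerpt to establish~\eqref{eq:conv_lambda_2_dom}. Everything else parallels the proof of Lemma~\ref{lem:QF_V} essentially verbatim.
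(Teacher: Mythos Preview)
Your proof is correct and follows essentially the same approach as the paper's own proof: reduce to a one-dimensional Gaussian integral, linearize near $1$, and conclude with the second-moment bound on $\langle\bw_i,\bDelta\bw_i\rangle - \tr(\bGamma\bDelta)$. The only difference is a minor streamlining: by parametrizing via the variance $s\mapsto h(s)=\E_G[\sigma(\sqrt s\,G)]$ and writing $V_i$ as an integral of $h'$, you collapse into a single step what the paper does in two (first linearizing $t\mapsto\E[\sigma(tG)]$, then linearizing $\|(\id\pm\bDelta)^{1/2}\bw_i\|_2$ via $|\sqrt{1-x}-\sqrt{1+x}+x|\le c|x|^3$). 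The regularity input is identical in both, and your justification of $h\in C^1$ near $1$ via~\eqref{eq:change_var_hermite} and dominated convergence is exactly the mechanism the paper invokes.
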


\begin{proof}[Proof of Lemma \ref{lem:GM_V}]
We have 
\[
\begin{aligned}
V_i =& \{ \E_{\bx \sim \normal(\bzero, \id - \bDelta)}[\sigma(\< \bw_i, \bx\>)] - \E_{\bx \sim \normal(\bzero, \id + \bDelta)}[\sigma\< \bw_i, \bx \>]\}/2 \\
=& \{ \E_{\bx \sim \normal(\bzero, \id)}[\sigma(\< (\id - \bDelta)^{1/2}\bw_i, \bx\>)] - \E_{\bx \sim \normal(\bzero, \id)}[\sigma\< (\id + \bDelta)^{1/2}\bw_i, \bx \>]\} / 2\\
=& \E_{G \sim \normal ( 0 ,1)} [ \sigma ( \| (\id - \bDelta)^{1/2} \bw_i \|_2 G) - \sigma ( \| (\id +  \bDelta)^{1/2} \bw_i \|_2G)  ]/2 . 
\end{aligned}
\]
We define three interpolating variables:
\[
\begin{aligned}
V^{(1)}_i & = \lambda_2 \{  \| (\id - \bDelta)^{1/2} \bw_i \|_2 - \| (1 + \bDelta)^{1/2} \bw_i \|_2 \}/2, \\
V^{(2)}_i & =  - \lambda_2 \{  \tr(\bDelta \bw_i \bw_i^\sT) \}/2, \\
V^{(3)}_i & =  - \lambda_2 \tr(\bDelta \bGamma) / 2.
\end{aligned}
\]
We begin by bounding the difference between $\bV$ and $\bV^{(1)}$. For convenience, we will define $\Tilde \bw_i = (\id - \bDelta)^{1/2}\bw_i$. We have:
\begin{equation}
\begin{aligned}
& \E [\sigma(\| \Tilde \bw_i \|_2 G)  - \sigma (G) ] - \lambda_2  (\|\Tilde \bw_i \|_2 - 1) \\
=& \E \Big[ \frac{\sigma(\| \Tilde \bw_i \|_2 G)  - \sigma (G) - (\|\Tilde \bw_i \|_2 - 1) G\sigma'(G) }{(\|\Tilde \bw_i \|_2 - 1)^2} \Big] (\|\Tilde \bw_i \|_2 - 1)^2.
\end{aligned}
\label{eq:diff_V_V1_RF_MG}
\end{equation}
Using dominated convergence theorem and arguments similar to those used to prove \eqref{eq:conv_lambda_2_dom}, one can check that
\begin{equation}
\lim_{t \to 1} \E \Big[ \frac{\sigma(t G)  - \sigma (G) - (t - 1) G\sigma'(G) }{(t - 1)^2} \Big] = ( \lambda_4 (\sigma) + \lambda_2 (\sigma) )/2.
\label{eq:seconde_dev_RF_MG}
\end{equation}
The same arguments as in the proofs of Lemma \ref{lem:RF_QF_kernel} and Lemma \ref{lem:QF_V} show
\begin{equation}
\begin{aligned}
\sup_{i \in [N] } | \| (\id - \bDelta)^{1/2}\bw_i \|_2 - 1 | & = o_{d,\P} (1),\\
\sum_{i=1}^N ( \| (\id - \bDelta)^{1/2}\bw_i \|_2 - 1 )^2 & = O_{d,\P} (1).
\end{aligned}
\label{eq:bound_wi_RF_MG}
\end{equation}
Combining \eqref{eq:seconde_dev_RF_MG} with \eqref{eq:bound_wi_RF_MG} in \eqref{eq:diff_V_V1_RF_MG}, we get:
\[
\begin{aligned}
&\sum_{i=1}^N \Big( \E [\sigma(\| \Tilde \bw_i \|_2 G)  - \sigma (G) ] - \lambda_2  (\|\Tilde \bw_i \|_2 - 1) \Big)^2 \\
= &\sum_{i=1}^N \Big(  \frac{\E [\sigma(\| \Tilde \bw_i \|_2 G)  - \sigma (G) ] - \lambda_2  (\|\Tilde \bw_i \|_2 - 1) }{(\|\Tilde \bw_i \|_2 - 1)^2}       \Big) (\|\Tilde \bw_i \|_2 - 1)^4 \\
 = & O_{d,\P} (1) \cdot \Big( \sup_{i \in [N] } | \| (\id - \bDelta)^{1/2}\bw_i \|_2 - 1 |^2 \Big) \sum_{i=1}^N ( \| (\id - \bDelta)^{1/2}\bw_i \|_2 - 1 )^2  = o_{d,\P} (1).
\end{aligned}
\]
Bounding similarly the term depending on $(\id + \bDelta)^{1/2}\bw_i$ in $V_i^{(1)}$, we get
\begin{equation}
    \label{eq:V1_V_MG}
    \| \bV - \bV^{(1)} \|_2 = o_{d,\P} (1).
\end{equation}
Now, consider the difference between $\bV^{(1)}$ and $\bV^{(2)}$. We use the fact for $x$ on a neighborhood of $0$, there exists $c$ such that
\[
| \sqrt{1-x} -  \sqrt{1 + x} + x  | \leq c | x |^3.
\]
Hence, with high probability
\[
\begin{aligned}
    | \| (\id - \bDelta)^{1/2} \bw_i \|_2 - \| (1 + \bDelta)^{1/2} \bw_i \|_2 + \< \bw_i , \bDelta \bw_i \> | \leq c\frac{|\< \bw_i , \bDelta \bw_i \>|^3 }{ \| \bw_i \|_2^2}.
\end{aligned}
\]
Furthermore, we have:
\[
\begin{aligned}
& \E_{\bw_i \sim \normal (\bzero, \bGamma) } \Big[ (\< \bw_i , \bDelta \bw_i \>)^6 / \| \bw_i \|_2^4 \Big]  \leq \| \bDelta \|_{\op}^2 \E [ (\< \bw_i , \bDelta \bw_i \>)^4 ] \\
& \leq C\| \bDelta \|_{\op}^2 ( \tr [ \bGamma^{1/2} \bDelta \bGamma^{1/2} ]^4 + \| \bGamma^{1/2} \bDelta \bGamma^{1/2} \|_F^4) = o_{d} (d^{-1}),
\end{aligned}
\]
where the last equality is due to assumptions {\bf M2} and {\bf B2}. We conclude that
\begin{equation}
    \label{eq:V2_V1_MG}
    \| \bV^{(1)} - \bV^{(2)} \|_2 = o_{d,\P} (1).
\end{equation}
For the last comparison between $\bV^{(2)}$ and $\bV^{(3)}$, we take the expectation:
\[
\begin{aligned}
\E_{\bw_i \sim \normal (\bzero , \bGamma ) } [( \< \bw_i , \bDelta \bw_i \> - \tr ( \bGamma \bDelta ) )^2] =& \E_{\bg \sim \normal (\bzero , \id ) } [(\< \bg \bg^\sT , \bGamma^{1/2} \bDelta \bGamma^{1/2} \> - \tr( \bGamma \bDelta) )^2] \\
=& 2 \| \bGamma^{1/2} \bDelta \bGamma^{1/2} \|^2_F \\
\leq & 2 \| \bGamma \|^2_{\text{op}} \| \bDelta \|^2_F=  O_{d} (d^{-2}).
\end{aligned}
\]
We get
\begin{equation}
    \label{eq:V3_V2_MG}
    \| \bV^{(3)} - \bV^{(2)} \|_2 = o_{d,\P} (1).
\end{equation}
Combining the above three bounds \eqref{eq:V1_V_MG}, \eqref{eq:V2_V1_MG} and \eqref{eq:V3_V2_MG} yields the desired result.
\end{proof}

\subsubsection{Proof of Theorem \ref{thm:RFMixture}}

By Lemma \ref{lem:GM_risk_representation}, the risk has a representation 
\[
R_{\RF,N}(f_*) = 1 - \bV^\sT \bU^{-1} \bV. 
\]
By Lemma \ref{lem:RF_GM_kernel}, we have 
\[
\| \bU - \bU_0 \|_{\op} = o_{d, \P}(1). 
\]
By Lemma \ref{lem:GM_V}, we have
\[
\| \bV - \tau \ones / \sqrt{d} \|_2 = o_{d, \P}(1),
\]
where
\[
\tau = - \sqrt{d} \cdot \lambda_2 \tr(\bDelta \bGamma) / 2. 
\]
Hence, we have 
\[
\vert \bV^\sT \bU^{-1} \bV - \tau^2 \ones^\sT \bU_{0}^{-1} \ones / d \vert = o_{d, \P}(1). 
\]
Proposition \ref{prop:one_U_one_expression} gives the expression
\[
\ones^\sT \bU_0^{-1} \ones / d = \psi / (1 + \kappa \psi) + o_{d, \P}(1), 
\]
where 
\[
\kappa = d \cdot \lambda_2^2 [\tr(\bGamma^2)/2 + \tr(\bDelta \bGamma)^2 / 4]. 
\]
Hence we have 
\[
\bV^\sT \bU \bV = \tau^2 \psi / (1 + \kappa \psi) + o_{d, \P}(1). 
\]
This proves the theorem.

\subsection{Neural Tangent model: proof of Theorem \ref{thm:NTK_MG}}

Recall the definition (note $R_{\NT, N}(\P)$ is a function of $\bW$)
\[
R_{\NT, N}(\P) = \arg\min_{\hf\in\cF_{\NT,N}(\bW)}\E \big\{(y -\hf(\bx))^2\big\},
\]
where
\[
\cF_{\NT, N}(\bW) = \Big\{ f_N( \bx) = c + \sum_{i=1}^N  \sigma'(\< \bw_i, \bx\>)\< \ba_i, \bx\>: c \in \R, \ba_i \in \R^d, i \in [N] \Big\}.
\]

\subsubsection{A representation lemma}

\begin{lemma}\label{eqn:representatation_risk_Gamma}
Assume conditions {\bf M1} and {\bf M2} hold. Consider the function
\begin{equation}
\hat f ( \bx ; \bGamma, a , c ) = a \< \bGamma , \bx \bx^\sT \> + c.
\label{eq:model_quad_func}
\end{equation}
Define the risk function optimized over $a,c$ while $\bGamma$ is fixed
\begin{equation}
\begin{aligned}
L ( \bGamma) = & \inf_{a, c}\E_{\bx,y} [ (y - \hat f ( \bx ; \bGamma, a ,c ))^2].
\end{aligned}
\end{equation}
Then we have 
\begin{equation}
\sup_{\bGamma \succeq 0} \Big\vert L (\bGamma)  - \frac{2}{2+ \< \bGamma , \bDelta \>^2/\| \bSigma^{1/2} \bGamma \bSigma^{1/2} \|_F^2}  \Big\vert=  o_d(1).
\label{eq:simplified_risk_MG}
\end{equation}
\end{lemma}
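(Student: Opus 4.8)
The plan is to compute $L(\bGamma)$ exactly by recognizing it as an ordinary linear least-squares fit, and then to bound the gap to the claimed expression by a quantity of order $\|\bDelta\|_{\op}^2$, which is $o_d(1)$ by {\bf M2} uniformly in $\bGamma$.

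First I would set $Z = \< \bGamma, \bx\bx^\sT\> = \bx^\sT\bGamma\bx$, so that $\hat f(\bx;\bGamma,a,c)=aZ+c$ and $L(\bGamma)=\inf_{a,c}\E_{\bx,y}[(y-aZ-c)^2]$ is the residual of regressing $y$ on $(1,Z)$. Since $\E[y]=0$ and $\E[y^2]=1$, the classical formula gives $L(\bGamma)=1-{\rm Cov}(y,Z)^2/\Var(Z)$ (we may assume $\bGamma\neq 0$, so $\Var(Z)>0$; $\bGamma=0$ is trivial). Using that for $\bx\sim\normal(\bzero,\bQ)$ one has $\E[\bx^\sT\bGamma\bx]=\<\bGamma,\bQ\>$ and $\Var(\bx^\sT\bGamma\bx)=2\|\bQ^{1/2}\bGamma\bQ^{1/2}\|_F^2$, and conditioning on $y=\pm1$ (so $\bQ=\bSigma\mp\bDelta$), I get $\E[Z\mid y]=\<\bGamma,\bSigma\>\mp\<\bGamma,\bDelta\>$, hence ${\rm Cov}(y,Z)=\E[yZ]=-\<\bGamma,\bDelta\>$, while the law of total variance yields
\[
\Var(Z) = 2\|\bSigma^{1/2}\bGamma\bSigma^{1/2}\|_F^2 + 2\,\tr(\bGamma\bDelta\bGamma\bDelta) + \<\bGamma,\bDelta\>^2 ,
\]
the first two terms being $\E[\Var(Z\mid y)]$ (the $\bSigma$–$\bDelta$ cross terms cancel upon averaging over $y=\pm1$) and the last being $\Var(\E[Z\mid y])$. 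Writing $A=\|\bSigma^{1/2}\bGamma\bSigma^{1/2}\|_F^2$, $B=\tr(\bGamma\bDelta\bGamma\bDelta)$, $C=\<\bGamma,\bDelta\>^2$, this gives the exact identity $L(\bGamma)=(2A+2B)/(2A+2B+C)$.

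It then remains to compare with the target $2/(2+C/A)=2A/(2A+C)$. A short algebraic manipulation gives
\[
L(\bGamma) - \frac{2A}{2A+C} = \frac{2BC}{(2A+2B+C)(2A+C)} \ \ge\ 0 ,
\]
and since $(2A+2B+C)(2A+C)\ge(2A+C)^2\ge 8AC$ by AM–GM, the right-hand side is at most $B/(4A)$. To close the argument I would bound $B\le\|\bDelta\|_{\op}^2\,\tr(\bGamma^2)$, using $\tr(\bGamma\bDelta\bGamma\bDelta)\le\tr(\bGamma^2\bDelta^2)$ (valid for symmetric matrices, the difference being $\tfrac12\|\bGamma\bDelta-\bDelta\bGamma\|_F^2\ge 0$) followed by $\tr(\bGamma^2\bDelta^2)\le\|\bDelta^2\|_{\op}\,\tr(\bGamma^2)$ (both $\bGamma^2,\bDelta^2\succeq 0$); and I would bound $A\ge c_1^2\,\tr(\bGamma^2)$, using $A=\|\bGamma^{1/2}\bSigma\bGamma^{1/2}\|_F^2$, the operator inequality $\bGamma^{1/2}\bSigma\bGamma^{1/2}\succeq c_1\bGamma\succeq 0$ from {\bf M1}, and the monotonicity $\mathbf{S}\succeq\mathbf{T}\succeq 0\Rightarrow\tr(\mathbf{S}^2)\ge\tr(\mathbf{T}^2)$. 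Hence $|L(\bGamma)-2A/(2A+C)|\le\|\bDelta\|_{\op}^2/(4c_1^2)$ for every $\bGamma\succeq 0$, which is $\Theta_d(1/d)=o_d(1)$ by {\bf M2}, giving the stated uniform bound.

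The only point requiring care is the uniformity over $\bGamma$: everything works because $B$ and the lower bound on $A$ are both proportional to $\tr(\bGamma^2)$, so this factor cancels and the residual is controlled by $\|\bDelta\|_{\op}^2$ alone, independently of $\bGamma$. The remaining ingredients — the moment identities for Gaussian quadratic forms, the law of total variance bookkeeping, and the elementary trace inequalities — are routine.
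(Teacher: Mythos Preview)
Your proposal is correct and follows essentially the same route as the paper: compute $L(\bGamma)$ exactly as $\dfrac{2(A+B)}{2(A+B)+C}$ with $A=\tr(\bGamma\bSigma\bGamma\bSigma)$, $B=\tr(\bGamma\bDelta\bGamma\bDelta)$, $C=\langle\bGamma,\bDelta\rangle^2$, and then show the gap to $\dfrac{2A}{2A+C}$ is controlled uniformly by $B/A\le \|\bDelta\|_{\op}^2/c_1^2=O(1/d)$ via {\bf M1}, {\bf M2}. The only cosmetic differences are that you reach the exact formula through the regression residual $1-\mathrm{Cov}(y,Z)^2/\mathrm{Var}(Z)$ and the law of total variance (the paper expands the square directly), and you package the final comparison with an explicit AM--GM step yielding the clean $B/(4A)$ bound, whereas the paper states the analogous ratio bound more tersely.
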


\begin{proof}[Proof of Lemma \ref{eqn:representatation_risk_Gamma}. ]
Note we have
\begin{equation*}
\begin{aligned}
L ( \bGamma, a , c) \equiv & \E_{\bx,y} [ (y - \hat f ( \bx ; \bGamma, a ,c ))^2  ] \\
= & 1 + c^2 +2ac \< \bGamma , \bSigma \> + 2 a \< \bGamma,\bDelta \> \\
& + a^2 [ \< \bGamma , \bSigma \>^2 + 2 \tr ( \bSigma \bGamma \bSigma \bGamma ) +  \< \bGamma , \bDelta \>^2 + 2 \tr ( \bDelta \bGamma \bDelta \bGamma )].
\end{aligned}
\label{eq:risk_quad_MG}
\end{equation*}
Minimizing successively over $c$ and $a$, we get the following formula:
\[
L (\bGamma) \equiv \min_{c,a \in \R} L ( \bGamma, a , c) =\frac{2}{2+ \< \bGamma , \bDelta \>^2 / [ \tr ( \bGamma \bSigma \bGamma \bSigma ) + \tr ( \bGamma \bDelta \bGamma \bDelta)]}.
\]
By Assumptions {\bf M1} and {\bf M2}, we have $\bSigma \succeq c \id_d$ and $\| \bDelta \|_{\op} \leq C/ \sqrt{d}$ for some constants $c$ and $C$. We get 
\[
\frac{\tr ( \bGamma \bDelta \bGamma \bDelta)}{\tr ( \bGamma \bSigma \bGamma \bSigma) } \leq \frac{C^2}{dc^2}.
\]
We deduce that
\begin{equation*}
\sup_{\bGamma \succeq 0} \Big\vert L (\bGamma)  - \frac{2}{2+ \< \bGamma , \bDelta \>^2/\| \bSigma^{1/2} \bGamma \bSigma^{1/2} \|_F^2}  \Big\vert \leq \Big\vert \frac{1}{1+ C^2/(dc^2)} - 1 \Big\vert   = o_d(1).
\end{equation*}
\end{proof}

\subsubsection{Proof of Theorem \ref{thm:NTK_MG}}

We consider the re-scaled matrices $\Tilde \bGamma = \bSigma^{1/2} \bGamma \bSigma^{1/2}$ and $\Tilde \bDelta = \bSigma^{-1/2} \bDelta \bSigma^{-1/2}$. We consider the $\NT$ model with a squared non-linearity:
\[
\hat f ( \bW , \bA ) = 2 \sum_{i=1}^N \< \bw_i, \bx \> \< \ba_i , \bx \> + c = 2 \< \bW \bA^\sT , \bx \bx^\sT \> + c. 
\]
with $\bW = [\bw_1 , \ldots , \bw_N] \in \R^{d \times N}$ and $\bA = [ \ba_1 , \ldots , \ba_N ] \in \R^{d \times N}$. For $\bw_i \sim \normal ( \bzero , \bSigma ) $, we have with probability one $\text{rank} (\bW ) =\min (d , N) \equiv r$. We consider $\bW = \bP_1 \bS \bV^\sT$ the SVD decomposition of $\bW$, with $\bP_1 \in \R^{d \times r}$, $\bS \in \R^{r \times r}$ and $\bV \in \R^{N \times r}$. Define $\bG =  \bS \bV^\sT \bA \in \R^{r \times d}$, we obtain almost surely that the minimum over $\bA$ is the same as the minimum over $\bG$. From Lemma \ref{eqn:representatation_risk_Gamma}, we deduce that almost surely
\begin{equation}
R_{\NT,N} (\P_{\bSigma,\bDelta}) = \min_{\bG \in \R^{d\times d}} \Bigg\{\frac{2}{2+\tr [ (\bP_1 \bG + \bG^\sT \bP_1^\sT ) \bDelta]^2/\|  \bP_1 \bG + \bG^\sT \bP_1^\sT  \|^2_{F}} \Bigg\} + o_d (1) 
\label{eq:simplified_G_risk}
\end{equation}

\noindent
{\bf Case $N / d \to \rho \ge 1$. } In the case $N \geq d$, we can take $\bG = \bP_1^\sT \Tilde \bG / 2$ and we get almost surely over $\bW \in \R^{d \times N}$
\[
R_{\NT,N} (\P_{\bSigma,\bDelta}) = \min_{\bG \in \R^{d\times d}} \Bigg\{\frac{2}{2+\< \bG , \bDelta \>^2/ \| \bG \|_F^2 } \Bigg\} + o_d (1) = \frac{2}{2 + \| \bDelta \|^2_F}+ o_d (1),
\]
where the minimizer $\bG = \bDelta$ is obtained by Cauchy-Schwarz inequality.

\noindent
{\bf Case $N / d \to \rho < 1$. } Consider now the case when $N < d$. From \eqref{eq:simplified_G_risk}, the optimal $\bG$ is the one maximizing
\[
\max_{\bG \in \R^{N \times d}}  \frac{\tr [ (\bP_1 \bG + \bG^\sT \bP_1^\sT ) \bDelta]^2 }{\|  \bP_1 \bG + \bG^\sT \bP_1^\sT  \|^2_{F}},
\]
which we rewrite as the following convex problem
\begin{equation}
\max_{\bG \in \R^{N \times d}} \, \tr [ \bP_1 \bG \bDelta ], \qquad \text{s.t.} \quad \| \bP_1 \bG + \bG^\sT \bP_1^\sT \|_F^2 \leq 1.
\label{eq:conv_program}
\end{equation}
We define $\bP_2 \in \R^{d \times (d - N)}$ the completion of $\bP_1 $ to a full basis $\bP = [ \bP_1 , \bP_2] \in \R^{d \times d}$, and denote $\bG_1 = \bG \bP_1 \in \R^{N \times N}$ and $\bG_2 = \bG \bP_2 \in \R^{N \times (d-N)}$. We can form the Lagrangian of problem \eqref{eq:conv_program}:
\[
\mathcal{L} (\bG , \lambda) = \tr ( \bP_1 \bG \bDelta ) + \lambda (1 - \| \bP_1 \bG + \bG^\sT \bP_1^\sT \|_F^2).
\]
The stationary condition implies:
\[
\nabla_\bG \mathcal{L} (\bG , \lambda) = \bP_1^\sT \bDelta - 4\lambda ( \bP_1^\sT  \bG^\sT \bP_1^\sT + \bP_1^\sT \bP_1 \bG ) = 0,
\]
which yields, using $\bP_1^\sT \bP_1 = \id_N$, 
\begin{equation}
\bDelta_{12} = 4 \lambda \bG_2, \qquad \bDelta_{11}  = 4 \lambda ( \bG_1 + \bG_1^\sT ),
\label{eq:stationary_NT_MG}
\end{equation}
where $\bDelta_{ij} = \bP_i^\sT \bDelta \bP_j $ for $i,j = 1,2$. The constraint reads in the $\bP$ basis
\begin{equation}
\| \bP_1 \bG + \bG^\sT \bP_1^\sT \|_F^2 = \| \bG_1 + \bG_1^\sT \|_F^2 + 2 \| \bG_2 \|_F^2 = 1.
\label{eq:constraint_NT_MG}
\end{equation}
Substituting \eqref{eq:stationary_NT_MG} in \eqref{eq:constraint_NT_MG} yields:
\begin{equation}
4 \lambda = \sqrt{\|\bDelta_{11} \|_F^2 + 2 \| \bDelta_{12} \|_F^2 }. 
\label{eq:lambda_NT_MG}
\end{equation}
Considering the (unique) symmetric optimizer $\bG_1$ and substituting \eqref{eq:lambda_NT_MG} in \eqref{eq:stationary_NT_MG}, we get the minimizer
\begin{equation}
\begin{aligned}
\bG^*_1 & = \frac{1}{8 \lambda} \bDelta_{11} = \frac{1}{2  \sqrt{\|\bDelta_{11} \|_F^2 + 2 \| \bDelta_{12} \|_F^2 } } \bDelta_{11}, \\
\bG^*_2 & = \frac{1}{4 \lambda} \bDelta_{12} = \frac{1}{  \sqrt{\|\bDelta_{11} \|_F^2 + 2 \| \bDelta_{12} \|_F^2 } } \bDelta_{12}.
\end{aligned}
\label{eq:minimizer_NT_MG}
\end{equation}
Let's consider the objective function:
\begin{align}
\tr ( \bP_1 \bG^* \bDelta ) =& \tr ( \bG_1^* \bDelta_{11} + \bG^*_2 \bDelta_{21}) \nonumber \\
= & \frac{1}{2  \sqrt{\|\bDelta_{11} \|_F^2 + 2 \| \bDelta_{12} \|_F^2 }} \tr ( \bDelta_{11}^2 + 2 \bDelta_{12} \bDelta_{21})\nonumber  \\
= & \frac{1}{2}  \sqrt{\|\bDelta_{11} \|_F^2 + 2 \| \bDelta_{12} \|_F^2 } \nonumber \\
= &  \frac{1}{2} \sqrt{\| \bDelta \|_F^2 - \| \bDelta_{22} \|^2_F}. \label{eq:tr_min_NTK_MG}
\end{align}
Substituting \eqref{eq:tr_min_NTK_MG} in \eqref{eq:simplified_G_risk}, we then obtain
\begin{equation}
R_{\NT,N} (\P_{\bSigma,\bDelta})  = \frac{2}{2 + \| \bDelta \|^2_F - \| \bDelta_{22} \|^2_F}+ o_d (1),
\label{eq:obj_NTK_delta22}
\end{equation}
where $\bDelta_{22} =\bP_{\bW^\perp} \bDelta \bP_{\bW^\perp}  $ with $\bP_{\bW^\perp} = \id_d - \bW ( \bW^\sT \bW)^{-1} \bW^\sT$ is the random projection along the orthogonal subspace to the columns of $\bW$.
From Theorem \ref{thm:QF_NTK}, we know that 
\begin{equation}\label{eqn:E_Delta_22}
\E [ \| \bDelta_{22} \|_F^2 ] = \| \bDelta \|_F^2 \Big[ (1-\rho)_+^2 \Big(1- \frac{\tr (\bDelta)^2 }{d \| \bDelta \|_F^2}\Big) + (1 - \rho)_+ \frac{\tr (\bDelta)^2 }{d \| \bDelta \|_F^2}  + o_d (1)\Big].
\end{equation}
Let $\mathbb{W}^N_d$ be the Stiefel manifold, i.e. the collection of all the sets of $N$ orthonormal vectors in $\R^d$ endowed with the Frobenius distance. In matrix representation, we have
\[
\mathbb{W}^N_d = \{ \bP \in \R^{d \times N}: \bP^\sT \bP = \id_N \}. 
\]

By Theorem 2.4 in \cite{ledoux2001concentration}, the volume measure on $\mathbb{W}^N_d$ has normal concentration. In particular, denote by $F : \mathbb{W}^N_d \mapsto \R $, the function $F(\bP) = \| \bP^\sT \bDelta \bP \|_F^2$. We upper bound the gradient of $F$:
\[
\|\nabla F(\bP) \|_F = 4 \| \bDelta \bP \bP^\sT \bDelta \bP \|_F \leq 4 \| \bDelta \bP \bP^\sT \|_{\text{op}}  \| \bDelta \bP  \|_{F} \leq \| \bDelta  \|_{\text{op}}  \| \bDelta  \|_{F} \leq C,
\]
by assumption {\bf M2} on $\bDelta$. We deduce that there exists a constant $c$ (that depends on $\rho$ and $C$) such that:
\[
\P ( | F(\bP) - \E [ F(\bP) ] | > t ) \leq e^{-c d t^2}.
\]
Therefore, we have 
\begin{equation}
\P ( | \| \bDelta_{22} \|_F^2 - E [ \| \bDelta_{22} \|_F^2 ]  | > t ) \leq e^{-c d t^2}.
\label{eq:high_proba_delta22}
\end{equation}
Using \eqref{eq:high_proba_delta22} and \eqref{eq:obj_NTK_delta22}, we deduce the final high probability formula for the risk of the \NT \,model:
\[
R_{\NT,N} (\P_{\bSigma,\bDelta})  = \frac{2}{2 + \| \bDelta \|^2_F - \E [ \| \bDelta_{22} \|_F^2 ] }+ o_{d,\P} (1). 
\]
Substituting $\E[\| \bDelta_{22} \|_F^2]$ by its expression (\ref{eqn:E_Delta_22}) concludes the proof.

\subsection{Neural Network model: proof of Theorem \ref{thm:NN_MG}}

Recall the definition 
\[
R_{\NN, N}(\P) = \arg\min_{\hf\in\cF_{\NN,N}(\bW)}\E\big\{(y-\hf(\bx))^2\big\},
\]
where we consider the function class of two-layers neural networks (with $N$ neurons) with quadratic activation function and general offset and coefficients
\[
\cF_{\NN, N}(\bW) = \Big\{ f_N( \bx) = c + \sum_{i=1}^N  a_i (\< \bw_i, \bx\>)^2: \; c, a_i \in \R, i \in [N] \Big\}.\\
\]
We define the risk function for a given set of parameters as
\[
L(\bW, \ba, c) = \E_{\bx,y}[(y - \hat f(\bx; \bW, \ba, c))^2]. 
\]
The risk is optimized over $(a_i,\bw_i)_{i\le N}$ and $c$.

\begin{proof}[Proof of Theorem \ref{thm:NN_MG}]
Without loss of generality, we assume $\bSigma = \id_d$ (it suffices to consider the re-scaled matrices $\Tilde \bGamma = \bSigma^{1/2} \bGamma \bSigma^{1/2}$ and $\Tilde \bDelta = \bSigma^{-1/2} \bDelta \bSigma^{-1/2}$). We rewrite the neural network function in a compact form:
\[
\hat f (\bx ; \bW ,\ba,c) =  \sum_{i=1}^N a_i \< \bw_i , \bx \>^2 +c =  \< \bW \bA \bW^\sT , \bx \bx^\sT \> + c,
\]
where $\bA = \diag (\ba)$. Define $\bGamma = \bW \bA \bW^\sT$ and using Eq.~\eqref{eq:simplified_risk_MG}  in Lemma \ref{eqn:representatation_risk_Gamma}, the minimizer $\bGamma^*$ is the solution of
\[
\max_{\bGamma \in  \mathcal{S} ( \R^{d \times d})} \, \frac{\< \bGamma ,  \bDelta \>^2 }{\| \bGamma \|_F^2}, \qquad \text{s.t.} \quad \text{rank}(\bGamma) \leq \min(N,d) \equiv r.
\]
where $ \mathcal{S} ( \R^{d \times d})$ is the set of symmetric matrices in $\R^{d \times d}$.

Let us denote the eigendecomposition of $\bGamma$ by $\bGamma = \bU \bS \bU^\sT$ with $\bU \in \R^{d \times r}$ and $\bS = \diag (\bs) \in \R^{r \times r}$. We have by Cauchy-Schwartz inequality
\[
\frac{\< \bGamma ,  \bDelta \>^2 }{\| \bGamma \|^2_F}= \frac{\tr ( \bS \bU^\sT \bDelta \bU )^2}{ \| \bS \|_F^2 } \leq \| \diag( \bU^\sT \bDelta \bU ) \|_2^2,
\]
with equality if and only if $\bS_* = \text{{\rm ddiag}} ( \bU^\sT \bDelta \bU )$ where $\text{{\rm ddiag}}  ( \bU^\sT \bDelta \bU )$ is the vector of the diagonal elements of $\bU^\sT \bDelta \bU $. Denoting $\mathcal{D} (\R^{d \times d})$ the set of diagonal matrices in $\R^{d \times d}$, we get 
\[
\max_{\bS \in \mathcal{D} (\R^{d \times d})} \frac{\< \bU \bS \bU^\sT ,  \bDelta \>^2 }{\| \bU \bS \bU^\sT \|^2_F} =  \frac{\< \bS_* ,  \bU^\sT \bDelta \bU \>^2 }{\| \bS_*  \|_F^2} = \frac{\|\bS_* \|^4_F }{\| \bS_*\|^2_F} = \|\bS_* \|^2_F.
\]
Hence, the problem reduces to finding $\bU \in \R^{d \times r}$ with orthonormal columns which maximizes $\|  \text{{\rm ddiag}} ( \bU^\sT \bDelta \bU ) \|_F^2$. The maximizer is easily found as the eigendirections corresponding to the $r$ largest singular values. We conclude that at the optimum  
\[
\frac{\< \bGamma_* ,  \bDelta \>^2 }{\| \bGamma_* \|_F^2} = \sum_{i= 1}^{r} \lambda_{i}^2,
\]
where the $\lambda_{i}$'s are the singular values of $\bDelta$ in descending order. Plugging this expression in Eq.~\eqref{eq:simplified_risk_MG} concludes the proof.
\end{proof}

\end{document}